\newenvironment{subroutine}[1][htb]
{
\begin{algorithm}[#1]%
}{\end{algorithm}}
\newtheorem*{rep@theorem}{\rep@title}
\newcommand{\newreptheorem}[2]{%
\newenvironment{rep#1}[1]{%
 \def\rep@title{#2 \ref{##1}}%
 \begin{rep@theorem}}%
 {\end{rep@theorem}}}
\newtheorem{theorem}{Theorem}
\newtheorem{proposition}{Proposition}
\newtheorem{lemma}{Lemma}
\newtheorem{definition}{Definition}
\newtheorem{assumption}{Assumption}
\title{Multi-Player Bandits Robust to Adversarial Collisions
}
\author{
  \textbf{Shivakumar Mahesh} \\
  University of Oxford \\
  \texttt{shivakumar.mahesh@spc.ox.ac.uk} \\
   \and 
  \textbf{Anshuka Rangi} \\
  University of California San Diego \\
  \texttt{arangi@ucsd.edu} \\
  \AND 
  \textbf{Haifeng Xu} \\
  University of Chicago \\
  \texttt{haifengxu@uchicago.edu} \\
  \and
  \textbf{Long	Tran-Thanh} \\
  University of Warwick \\
  \texttt{long.tran-thanh@warwick.ac.uk} \\
}
\begin{document}
\maketitle

\begin{abstract}
Motivated by cognitive radios, stochastic Multi-Player Multi-Armed Bandits has been extensively studied in recent years. In this setting, each player pulls an arm, and receives a reward corresponding to the arm if there is no collision, namely the arm was selected by one single player. Otherwise, the player receives no reward if collision occurs. In this paper, we consider the presence of malicious players (or attackers) who obstruct the cooperative players (or defenders) from maximizing their rewards, by deliberately colliding with them. We provide the first decentralized and robust algorithm \textsc{RESYNC} for defenders whose performance deteriorates gracefully as $\tilde{O}(C)$ as the number of collisions $C$ from the attackers increases. We show that this algorithm is order-optimal by proving a lower bound which scales as $\Omega(C)$. This algorithm is agnostic to the algorithm used by the attackers and agnostic to the number of collisions $C$ faced from attackers. 

\end{abstract}

\section{Introduction}


 Multi-Player Multi-Armed Bandits (MP-MAB) algorithms have found applications in distributed computing, social recommendation systems, federated learning, sensor networks, Internet of Things, web services and crowdsourcing systems. Typically, these variants involve a large number of players playing a bandit instance, and may or may not be communicating with each other. 
 The objective of the players, together as a group, is to maximize the collective reward.  The distributed nature of these applications makes the learning algorithms prone to attacks from malicious players (or attackers).

In this paper, we focus an important and widely studied decentralized MP-MAB setting motivated by cognitive radios (\cite{Jouini2009,animaAnand2010}). In this setting, an arm corresponds to a channel frequency, and the player selects a channel to transmit information, where the reward corresponding to this arm is given by the transmission quality of the channel. A key feature of this MP-MAB setting is that if multiple players choose the same arm (or channel) in a round, then collision occurs and all these players receive zero reward for that round. Additionally, the players receive feedback whether or not a collision occurred on the arm they pulled. Finally, in the decentralized case, the players are independent and cannot communicate with each other through dedicated communication channels. 

Decentralized MP-MAB setting has been widely studied in the absence of attackers (\cite{MEGA,MC,lugosiMeharabian2018,mageshVeeralli2019a,alaturLevy2019}). The optimal algorithms in the absence of attackers critically relies on the assumption that all the players are cooperative and execute the same algorithm. However, this assumption critically impairs the application of these algorithms. For example, in cognitive radio, a channel can be accessed by any player with a transmitter, and access is not restricted to cooperative players alone. Therefore, malicious players (or attackers) can obstruct cooperative players (or defenders) from maximizing reward by deliberately colliding with them. Further, the algorithms used by such attackers is unknown to the defenders. There has been limited focus on addressing this key issue and designing robust MP-MAB algorithms for the defenders that are agnostic to the algorithm used by attackers.  

While the goal of the defenders is to minimize their collective regret, the goal of the attackers is to force the defenders to incur linear regret while minimizing the number of adversarial collisions induced by them. To this end, at each round, each attacker may pull an arm, observe the reward and the feedback if the collision occurred corresponding to the arm. If the attacker chooses to not to pull any arm, then no reward and collision information is observed, also termed as ``staying quiet" by \cite{alaturLevy2019}. This work focuses on proposing algorithms for the defenders which are robust to these attackers. As a motivating example, one may consider the defenders as licensed spectrum users and the attackers as unlicensed ones. In this case, the licensed users wish to find the optimal allocation of arms, in the presence of interference from unlicensed users.
{
\renewcommand{\arraystretch}{2.5}
 \begin{table*}[!t]
\centering
  \begin{tabular}{|l|l|l|l|l|l|}\hline
    \textit{Feedback Model} & \textit{Algorithm/ Reference} & \textit{Prior Knowledge} & \textit{Regret under our Attack model} \\ \hline
    \makecell{Non-Distinguishable \\Collision Sensing}  & \makecell{\textsc{mc}, Proposition \ref{prop:mcattack} \\ \cite{MC}}   & $T,\Delta$       & \makecell{ $\mathbb{E}R_T = \Omega(T)$ \\ under $C = O\left(\frac{K}{\Delta^2}\log{(K^2T)}\right)$  \\}   \\  \hline
    \makecell{Non-Distinguishable \\Collision Sensing} & \makecell{\textsc{sic-mmab},  Proposition \ref{prop:sicmmabattack}\\ \cite{sicmmab}}  & $T$                    & \makecell{ $\mathbb{E}R_T = \Omega\left(\frac{K-N}{K}T\right)$ \\ under $C = O\left(K^2\log{T}\right)$\\}  \\ \hline
    \makecell{Non-Distinguishable \\Collision Sensing}  & \makecell{\textsc{sic-gt}, Proposition \ref{prop:sicgtattack} \\\cite{sicgt}}     & $T$                    & \makecell{ $\mathbb{E}R_T = \Omega(T)$ \\ under $C = O\left(K^2\log{T}\right)$\\}  \\ \hline
    \makecell{Non-Distinguishable \\Collision Sensing}  & \makecell{\textsc{cnj,cuj},  Appendix \ref{sec:attackjammers} \\\cite{jammers}} & $T,\Delta$                    & \makecell{ $\mathbb{E}R_T = \Omega\left(T\right)$ \\ under $C = \tilde{O}\left(\log{T}\right)$\\}  \\ \hline
    \makecell{Distinguishable \\Collision Sensing}      & \makecell{\textsc{cdj},  Appendix \ref{sec:attackjammers} \\\cite{jammers}}         & $T,\Delta$                   & \makecell{ $\mathbb{E}R_T = \Omega\left(T\right)$ \\ under $C = \tilde{O}\left(\log{T}\right)$\\}  \\ \hline
    \makecell{Non-Distinguishable \\Collision Sensing}  & \makecell{{\textsc{resync}}, Theorem \ref{thm:mainRESYNC} \\This work                  }          & $T,\Delta, n, j$ &  \makecell{ $\mathbb{E}R_T = O\bigg(CN^2 + CK\frac{\log(K^2T)}{\Delta^2}\bigg)$\\}  \\ \hline
    \makecell{Distinguishable \\Collision Sensing}      & \makecell{{\textsc{resync2}}, Theorem \ref{thm:RESYNC2}     \\ This work}                 & $T,\Delta$       & \makecell{ $\mathbb{E}R_T = O\bigg(CK + K\frac{\log(K^2T)}{\Delta^2}\bigg)$\\}  \\ \hline
    \makecell{ Non-Distinguishable \\ + Distinguishable  \\Collision Sensing}      & \makecell{{\textsc{lower bound}}, Theorem \ref{thm:lowerbound1}     \\ This work}                 & N/A       & \makecell{ $\mathbb{E}R_T = \Omega\left(\frac{N}{K}C\right)$ \\ }  \\\bottomrule
  \end{tabular}
      \caption{Summary of Contributions: Regret Bounds of Decentralized MP-MAB Algorithms under $C$ number of collisions from attackers. Here, $N$ denote the number of defenders, and $\Delta = {\mu_{(N)}-\mu_{(N+1)}}$ is the gap between the expected rewards of $N$-th and $(N+1)$-th best arms. We use $j$ to denote the prior knowledge where, each defender has access to a integer $j \in [N]$ distinct from every other defender (for more details refer Assumption \ref{assm:cmn-knwlg}).} \label{tab:comparisons} 
\end{table*}
}
 \subsection{Related work}
The concern of designing MP-MAB algorithms robust to malicious players (or attackers) has been raised in multiple works (\cite{securityCRN,MC}) but has only been studied under the assumption that the attacker behaviour is known in advance to the  defenders. For instance, \cite{jammersOld2015, jammers} consider attackers that follow a specific algorithm of learning and pulling the optimal set of arms, and construct robust defence algorithms (\textsc{cnj, cuj} and \textsc{cdj}) against these specific attackers. In this work, we show that these algorithms are not robust to an attacker with a different attack strategy. Namely, we show that the existing algorithms including these robust algorithms incur linear regret $\Omega(T)$ with only small $O(\log T)$ number of adversarial collisions from an attacker with a different strategy. 


\cite{sicgt} consider non-cooperative players whose incentives are to maximize their own individual rewards. They provide the first algorithm \textsc{sic-gt}, which is robust to selfish players i.e. following their algorithm is a $\epsilon$-Nash equilibrium. For performance guarantees, they assume that selfish players will not deviate from this particular $\epsilon$-Nash equilibrium where every player executes the same algorithm. This assumption may fail in practice since non-cooperative players may be unaware of the algorithm used by a specific group of cooperative players, and given that the Nash equilibrium is not unique, they may opt to use a single-player bandit algorithm to select arms instead. In this work, we also show that if even one player deviates from the common algorithm, the cooperative players are forced to suffer linear regret $\Omega(T)$. 

Against this background, our work addresses the limitations of existing algorithms by proposing a robust algorithm whose performance is agnostic to the attack strategy. 

\subsection{Contributions}
In this work, we consider two feedback models: distinguishable and  non-distinguishable  collision sensing. In distinguishable sensing (\cite{jammers}), each defender receives feedback on whether a collision occurred and can also distinguish whether the collision occurred from an attacker or a defender. In non-distinguishable sensing (\cite{bistritzLeshem2018,sicgt,beacon2021}), a defender receives feedback on whether a collision occurred and does not have the capability to distinguish whether a collision occurred from an attacker or a defender. 
Our main contributions are the following:
\begin{itemize}
    \item First, we show that the representative existing algorithms in the literature, namely \textsc{mc, sic-mmab, sic-gt, cnj, cuj} and \textsc{cdj}, are not robust to adversarial collisions. More specifically, only $O(\log{T})$ adversarial collisions from a single attacker are sufficient to ensure that the expected regret of these algorithms scales linearly as $\Omega(T)$, where $T$ is the total number of rounds for which the players interacts in the MP-MAB setting. Table \ref{tab:comparisons} summarizes the results showing ``non-robustness" for these existing algorithms.

    \item In the non-distinguishable collision-sensing setting, we propose a novel algorithm \textsc{resync} which exhibits robust behaviour to adversarial collisions. We show that the expected regret of this algorithm deteriorates gracefully as $\tilde{O}(C)$, where $C$ is the total number of adversarial collisions from the attackers. We also shows that this scaling with $C$ is order-optimal up to logarithmic factors in $T$ by proving a corresponding lower bound in MP-MAB setting which scales as $\tilde{\Omega}(C)$. 
    \item In the distinguishable collision-sensing setting, we propose another novel algorithm  \textsc{resync2}, and show that the expected regret of this algorithm also deteriorates linearly as $\tilde{O}(C)$. This scaling of the expected regret is order-optimal in $C$. Due to an additional feedback information available in this setting, the regret bound of \textsc{resync2} is further improved by logarithmic factors in $T$ in comparison to \textsc{resync}. 
    \item Finally, we present several experiments which validate our theoretical findings.
    
\end{itemize}

We defer the proofs and experiments to Appendix \ref{app:proofs} and \ref{app:experiments}, respectively.

\section{Preliminaries}
\label{sec:prelim}

We consider a multi-player variant of the standard stochastic MAB problem with $K$ arms, denoted by set $[K] = \{ 1, \cdots, K \} $. For each arm $k \in [K]$  at time (or round) $t\leq T$, we denote its realized reward   by $X_k(t) \in [0,1]$ drawn i.i.d according to distribution $\nu_k$ with expectation $\mu_k$. Additionally, we assume that the expected rewards of the arms are different, namely $\mu_{(1)} > \mu_{(2)} > \dots > \mu_{(K)}$ where $\mu_{(i)}$ denotes the $i$-th largest expected reward. We denote the number of attackers by $M$ and the number of defenders by $N$ such that $K \geq N$ (as commonly assumed in  \cite{meharabianBoursier2020,bubeckSellke2020,wangProutiere2020,beacon2021,shiAttack2021}). We index the defenders using the set $\{1,\dots,N\}$ and the attackers using the set $\{N+1,\dots N+M\}$. 

In decentralized MP-MAB, at each round $t \leq T$, each player $j\in [N+M]$ pulls an arm $\pi^{j}(t)\in [K]$ and receives a reward
\begin{equation}
    r^{j}(t) = X_{\pi^{j}(t)}(t)(1 - \eta_{\pi^{j}(t)}(t)),
\end{equation}
where $\eta_{k}(t) := \mathds{1}(|1\leq j \leq N+M : \pi^{j}(t) = k\}| > 1)$ is the \emph{collision indicator} (this value is $1$ if more than one player pulls that arm, otherwise remains $0$). At each round $t$, each player who pulled an arm observes their collision indicator $\eta_{\pi^{j}(t)}(t)$ and the corresponding reward $r^{j}(t)$. If a collision occurred, namely $\eta_{\pi^{j}(t)}(t)=1$, then the defenders may or may not receive additional information to distinguish whether that collision occurred with defenders, attackers or both. Based on the availability of this additional information, we consider two feedback models: Non-distinguishable and Distinguishable collision sensing. 

In \emph{non-distinguishable collision sensing,} the feedback is limited to the corresponding reward $r^{j}(t)$ and the collision indicator $\eta_{\pi^{j}(t)}(t)$ at each round $t$ for player $j$. No additional information is available to the defenders to distinguish between the players causing the collisions.

In \emph{distinguishable collision sensing,} the defenders receive the information about the nature of the players who caused the collision. More specifically, at each round $t$, defender $j$ observes the reward $r^{j}(t)$ and the collision indicators

\begin{align}
& \eta^{D}_{k}(t) := \mathds{1}(|\{1 \leq j \leq N : \pi^{j}(t) = k\}| > 1) \label{eq:ic1} \\
& \eta^{A}_{k}(t) := \mathds{1}(|\{N < j \leq N+M : \pi^{j}(t) = k\}| \geq 1) \label{eq:ic2}
\end{align}

where $\eta^{D}_{k}(t)$ and $\eta^{A}_{k}(t)$ represent if the collision occurred due to a defender and an attacker, respectively. Note that $\eta_{\pi^{j}(t)}(t) = \eta^{A}_{\pi^{j}(t)}(t) \lor \eta^{D}_{\pi^{j}(t)}(t)$. These indicators enable the defenders to distinguish between collisions from an attacker and a defender. As an illustration of this feedback model from \cite{jammers}, in cognitive radio networks (CRNs) defenders would be able to distinguish between collisions between themselves and attackers through acknowledgments. At the end of each round, each defender receives an ACK/NACK feedback. If a collision happens due to other defenders but not the attackers, each defender receives a NACK signal. However, if the collision is due to the attackers alone, no NACK signal is received.  Finally if collision occurs simultaneously, a corrupted NACK signal is received. 


The performance of an algorithm is measured in terms of expected regret which is defined as the difference between the
maximal expected reward and the algorithm cumulative reward over $T$ steps, namely
\[R_T := T\sum_{k=1}^{N}{\mu_{(k)}} - \sum_{t=1}^{T}\sum_{j=1}^{N}{\mu_{\pi^{j}(t)} \cdot (1 - \
\eta_{\pi^{j}(t)}(t))}.\]

The maximal expected reward corresponds to the top $N$ arms, also referred to as the \emph{optimal} set of arms.

The goal of the attackers is to force the defenders to incur linear regret while keeping the number of collisions they induce to be as small as possible to remain stealthy. The motivation for stealthy attacks is best illustrated using an example. As noted by \cite{jammersOld2015}, in CRNs, licensed users are protected by law. Therefore users of the network are motivated to reduce their interference with licensed users, since there will be heavy penalties if the licensed users detect prolonged interference from them (\cite{jammersOld2011}). Therefore from an attackers perspective, collisions (or interference in the example) against defenders (or licensed users in the example) are inherently expensive. Hence, we evaluate the defenders' regret in terms of the number of times the attacker collides with the defender. Thus, the attack cost $C$ is the number of adversarial collisions encountered by the defenders, namely
\begin{align}
        C = \sum_{t=1}^{T}\mathds{1}\Big(\exists d , \exists a : 1 \leq d \leq N, N+1 \leq a \leq N+M,\nonumber \pi^{d}(t) = \pi^{a}(t)\Big)
\end{align}
We exclusively consider the case where the attack cost $C$ is unknown to the defenders. In other words, the defenders are \emph{agnostic} to $C$.
{
\section{Limits of Existing Algorithms}
\label{sec:limitsofexistingalgos}
This section shows the limitations of existing algorithms against a single attacker who has no prior information about the defenders and expected rewards of arms. 
 

\textit{Attack on \textsc{mc}}: In \cite{MC}, the ``Musical Chairs" subroutine in the \textsc{mc} algorithm is used to allocate players to optimal arms. This subroutine has inspired many algorithms to follow the same (or slightly modified) procedure (\cite{bessonKaufmann2018,wangProutiere2020,lugosiMeharabian2018,beacon2021}). By successfully attacking this subroutine in \textsc{mc} algorithm, we show that there is a critical threat to any follow-up algorithms that use this subroutine. The following proposition proves that the \textsc{mc} algorithm is not robust to a single attacker.

\begin{proposition}
\label{prop:mcattack}
Assuming that the defenders use \textsc{mc}, there exists an attack strategy with expected attack cost $O\left( \max{\left( {K}\log{(K^2T)}/{\Delta_{min}^2}, K^2\log{T} \right)}\right)$ which ensures that the expected regret of the defenders is $\Omega(T)$, where $\Delta_{min} = \min_i{\mu_{(i)} - \mu_{(i+1)}}$.
\end{proposition}

\textit{Attack on \textsc{sic-mmab}:} The seminal work of \cite{sicmmab} introducing the \textsc{sic-mmab} algorithm with implicit communication through forced collisions has inspired many algorithms to follow suit (\cite{shiECSIC2020,mageshVeeravalli2019b,darak2019,meharabianBoursier2020,huang2021,wangProutiere2020}). 
We present an attack on \textsc{sic-mmab} algorithm which can be slightly modified to show the non-robustness of these follow up works. 
The following proposition proves that the \textsc{sic-mmab} algorithm is not robust to a single attacker. 

\begin{proposition}
\label{prop:sicmmabattack}
Assuming that the defenders use \textsc{sic-mmab}, there exists an attack strategy with expected attack cost $O\left(K^2\log{T}\right)$ which ensures that the expected regret of the defenders is $\Omega(T)$.
\end{proposition}

\textit{Attack on \textsc{sic-gt}:}
The \textsc{sic-gt} algorithm by \cite{sicgt} is robust to selfish players, namely the attacker is a reward maximizing player who gains reward from pulling arms. Playing \textsc{sic-gt} is an $\epsilon$-Nash equilibrium, and this equilibrium is achieved through ``Grim Trigger" (\cite{grimTrigger1971}) punitive strategies. However, the following proposition proves that the \textsc{sic-gt} algorithm is not robust to a single attacker.

\begin{proposition} 
\label{prop:sicgtattack}
Assuming that the defenders use \textsc{sic-gt}, there exists an attack strategy with expected attack cost $O(K^2\log T)$ which ensures that the expected regret of the defenders is $\Omega(T)$.
\end{proposition}
The attack strategies are presented in Appendix \ref{app:proofsforattack}. 

}
\section{The RESYNC Algorithm}
\label{sec:RESYNC}

In this section we consider the non-distinguishable collision sensing model, where defenders cannot determine whether a collision is from an attacker or a defender. We propose the algorithm \textsc{resync} which is robust to adversarial collisions. 
We make the following assumption on the prior knowledge of defenders: 
\begin{assumption}
\label{assm:cmn-knwlg}
All defenders know the number of defenders $N$. Additionally, each defender has a unique identification number $j\in [N]$, also referred as internal rank, and has knowledge of their own internal rank.
\end{assumption} 

The \emph{internal rank}  allows the defenders to coordinate in the bandit game. In Section \ref{sec:RESYNC2} where we consider the distinguishable collision case, we remove this assumption, and use the additional feedback to devise a sub-routine to estimate $N$ and $j$.

\subsection{Description of \textsc{resync}}
\label{sec:RESYNC-desc}
\textsc{resync} consists of an exploration phase and an exploitation phase, where a defender may alternate between phases based on the collision feedback. The algorithm divides the time-horizon $T$ into successive epochs of size $T_B = T_0 + 2N^2 + N$ rounds where $T_0 = 8K\lceil{\log(2K^2T)}/{\Delta^2} \rceil$. Both exploration and exploitation phases run for $T_B$ rounds. At any epoch some defenders may be in the exploration phase while other defenders may be in the exploitation phase. In both of these phases, each defender will choose whether or not to ``restart", 
and enter the exploration phase in the next epoch. Whether or not a defender chooses to restart depends on the collisions sensed by the defender in each phase. We say that the defenders are \emph{synchronized} over an epoch if all of them are in the same phase (exploration/exploitation) in that epoch. Otherwise we say that the defenders are \emph{desynchronized} over that epoch.

{ \textsc{resync} relies on three key ideas to solve the limitations of the existing approaches in the literature. First, the algorithm obtains sufficient number of uncorrupted observations to determine the optimal set of arms with high probability in a decentralized manner. Second, the algorithm maintains synchronization over the defenders against attackers that challenge to desynchronize the system. Third, the algorithm ensures that the defenders pull the optimal set of arms in an orthogonal fashion (with no collisions amongst themselves), for the majority of the time-horizon.}

The outline of \textsc{resync} (\textbf{R}estart \textbf{Syn}chronously under Adversarial \textbf{C}ollisions) is provided in Algorithm \ref{algo:RESYNC}, with the exploration and exploitation protocols provided in Subroutines \ref{sub:exploration} and \ref{sub:exploitation} respectively. We use $t \in [T]$ to denote the round of the bandit game, and assume every defender knows about $t$ at all times.  

\LinesNumbered
\begin{algorithm}
\small
\caption{\textsc{resync}}\label{algo:RESYNC}
\SetKwInOut{Input}{Input}
\KwIn{$T$ (horizon), $N$ (number of defenders), $j$ (internal rank), $T_B$ (length of an epoch)}
\textbf{Initialize:} Restart $\gets $ True, 
Opt $\gets \emptyset$, $\forall i : \tilde{\mu}_i \gets 0, o_i \gets 0, s_i \gets 0 $ \\ 
\For{$\lfloor\frac{T}{T_B}\rfloor$ {\normalfont epochs}}{


\eIf{\normalfont Restart}{
(Restart, Opt) $\gets $ Exploration(Restart, $j, \forall i \; o_i,s_i$) \tcp*{Restart $\gets$ True $\iff$ collision occurs during sequential hopping or verdict of (intra/inter)-communication phase is to Restart}
}{
Restart $\gets $ Exploitation(Restart, $j$, Opt) \tcp*{Restart $\gets$ True $\iff$ collision occurs during inter-communication phase}
}
}
\end{algorithm}


\begin{subroutine}
\small
\caption{Exploitation}\label{sub:exploitation}
\textbf{{Input}:}{Restart, $j$, Opt}; \textbf{Output} Restart \\
\textbf{Initialize} Restart $\gets$ False,  and 
$T_0 \gets 8K\lceil{\log(2K^2T)}/{\Delta^2} \rceil$\\
\For{ $T_0 + 2N^2$ times steps}{
Set $k = t+j$ (mod $N$); Pull Opt$[k]$ 
}

\For(\tcp*[f]{Inter-communication phase}){$N$ rounds}{
Set $k = t+j$ (mod $N$)\\
Pull Opt$[k]$ and receive $\eta_{\text{Opt}[k]}$ \\
\If{$\eta_{\text{Opt}[k]} = 1$}{Restart $\gets$ True}
}

\Return{{\normalfont (Restart)}}
\end{subroutine}


\textbf{Exploration.} (Subroutine \ref{sub:exploration}) Within the exploration phase, there are four sub-phases, which are as follows: sequential hopping that runs for $T_0$ rounds, sensing that runs for $N^2$ rounds, intra-communication that runs for $N^2$ rounds and inter-communication that runs for $N$ rounds.\\

{\textbf{Sequential Hopping.} (Subroutine \ref{sub:exploration} lines 4-16) This sub-phase lasts for $T_0 = 8K\lceil{\log(2K^2T)}/{\Delta^2} \rceil$ rounds. At each round, each defender pulls an arm, namely $j + t$ (mod $K$) $\in [K]$, based on its internal rank $j$ and round of the bandit game $t \in [T]$. If all the defenders are in exploration phase, then this strategy ensures that there is no collision between the defenders. At each round $t$, the defender will also observe the reward and collision indicator. For the rounds when the the collision indicator is $0$ indicating no collision, the defender maintains the cumulative sum of observed rewards and number of observations for each arm $k\in [K]$. If there is a collision, then the \emph{Restart} variable is set to True and the observation received is ignored. If there are sufficient reliable observations for all arms, then the defender sets the flag \emph{SufficientObservations} to be True. }\\

\textbf{Sensing.} (Subroutine \ref{sub:exploration} lines 17-29) This sub-phase lasts for $N^2$ rounds. In this sub-phase, a defender attempts to detect whether there is at least one other defender in the exploitation phase using the collision feedback she receives. This sub-phase is used to help defenders synchronize in the next epoch if they are desynchronized in the current epoch. 
In the sensing sub-phase, if a defender has the flag \emph{SufficientObservations} to be True, then she pulls the optimal arm with least index (which we refer to as Opt$[1]$) in the $jN\text{-th},\dots,(jN+N-1)-\text{-th}$ rounds of this sub-phase, and pulls arm Opt$[1] + 1$ (mod $K$) in the other rounds of the sub-phase. Note that this defender pulls Opt$[1]$ for $N$ consecutive rounds in this sub-phase, and sets \emph{Restart} to True if a collision occurs in those $N$ rounds. \\


\textbf{Intra-Communication.} (Subroutine \ref{sub:exploration} lines 30-39) In this sub-phase each defender in exploration phase communicates with every other defender in the same phase, whether or not to re-enter the exploration phase in the next epoch based on whether every defender has \emph{Restart} to be True. Each defender has her own communicating arm, corresponding to her internal rank. When the defender $i$ is communicating, she sends a bit at a round to the defender $k$ by deciding which arm to pull: a $1$ bit is sent by pulling the communicating arm of defender $k$ (a collision occurs and collision feedback is received by defender $k$) and a $0$ bit is sent by pulling her own arm.

During this phase if a defender has \emph{Restart} to be True, then she sends a $1$ bit to each defender in order to signal a restart. If a defender has \emph{Restart} to be False, then she sends a $0$ bit to every defender. Crucially, an attacker can change this $0$ bit to a $1$ bit by inducing adversarial collisions, however the attacker cannot change a $1$ bit to a $0$ bit, since collisions from one defender to another cannot be reversed by an attacker. Therefore if some defender in the exploration phase has insufficient observations on some arm, every defender restarts, and enters an exploration phase in the next epoch.\\

\textbf{Inter-Communication.} (Subroutine \ref{sub:exploration} lines 40-44) This sub-phase is used for defenders in exploration phase to communicate with the defenders in the exploitation phase within the same epoch so that they can synchronize in the next epoch. In the inter-communication sub-phase, if \emph{SufficientObservations} is True, then the defender pulls Opt$[1]$ for $N$ consecutive rounds to signal to any possible defenders in the exploitation phase, that they should enter the exploration phase in the next epoch.  \\

\textbf{Exploitation.} (Subroutine \ref{sub:exploitation}) In the exploitation phase, the defender sequentially hops over the optimal set of arms throughout the entire epoch. The last $N$ rounds of this protocol is the inter-communication sub-phase within the exploitation phase (Subroutine \ref{sub:exploitation} lines 5-9) used to receive communication from defenders in exploration phase. If a defender in exploitation phase experiences a collision during the inter-communication phase, then this is interpreted as a signal from at least one defender in the exploration phase to enter the exploration in the next epoch.  \\

\begin{subroutine}[h!]
\small
\DontPrintSemicolon
\caption{Exploration}\label{sub:exploration}
\textbf{Input}
{Restart, $j$, and for all $i\in [K],$ $ o_i,s_i$}; \\
\textbf{Output:} {Restart, Opt}\\
\textbf{Initialize:} Restart $\gets$ False,
SufficientObservations $\gets$ False, and
$T_0 \gets 8K\lceil{\log(2K^2T)}/{\Delta^2} \rceil$\\
\For(\tcp*[f]{Sequential hopping phase}){$T_0$ rounds}{
Pull $k = t+j$ (mod $K$) and receive $\eta_k$ and $r_k(t)$ \\
\uIf{$\eta_k = 0$ }{
$o_k \gets o_k + 1$\\
$s_k \gets s_k + r_k(t)$
}
\Else{
Restart $\gets$ True 
}
}

\eIf{$\forall i : \; o_i \geq T_0/K$}{
SufficientObservations $\gets$ True\\
For all $i\in [K]$, we have $ \tilde{\mu}_i = s_i/o_i$\\
Opt $\gets$ List of $N$ best empirically performing arms sorted according to arm index $i\in [K]$.\\
}{
Restart $\gets$ True
}

\For(\tcp*[f]{Sensing phase}){$(i,k) \in [N]\times[N]$ }{
\eIf{$j = i$}{
\eIf{\normalfont SufficientObservations}{
Pull Opt$[1]$ and receive $\eta_{Opt[1]}$ \tcp*{{attempt to sense the presence of defenders  that are in exploitation phase}}
\If{$\eta_{Opt[1]} = 1$}{Restart $\gets$ True}
}{
Pull $1$
}

}{
\eIf{\normalfont SufficientObservations}{
Pull Opt$[1] + 1$ (mod $K$) 
}{
Pull $1$
}
}
}

\For(\tcp*[f]{Intra-communication phase}){$(i,k) \in [N]\times[N]$ }{
\eIf(\tcp*[f]{send}){$j = i$}{
\uIf{\normalfont Restart}{Pull $k$}
\Else{Pull $j$}
}(\tcp*[f]{receive}){
Pull $j$ and receive $\eta_j$\\
\If{$\eta_j = 1$}{Restart $\gets$ True}
}
}

\For(\tcp*[f]{Inter-communication phase}){$N$ rounds}{
\eIf{\normalfont SufficientObservations}{
Pull Opt$[1]$ \tcp*{{to notify any defenders in exploitation phase to rejoin exploration in the next epoch}}
}{
Pull $1$
}
} 

\Return{{\normalfont (Restart, Opt)}}
\end{subroutine}

\newpage
\subsection{Analysis of the \textsc{resync} Algorithm}
Theorem \ref{thm:mainRESYNC} bounds the expected regret incurred by \textsc{resync} given the attack cost $C$. 

\begin{theorem}
\label{thm:mainRESYNC}
Assume $N$ defenders run \textsc{resync} against $M$ attackers, with $\Delta = {\mu_{(N)}-\mu_{(N+1)}}$. Given the attack cost is $C$, the expected regret of \textsc{resync} is bounded by \[O\bigg(CN^2 + CK\frac{\log(K^2T)}{\Delta^2}\bigg)\]
\end{theorem}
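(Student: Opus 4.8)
The plan is to split the horizon into the $\lfloor T/T_B\rfloor$ epochs and to call an epoch \emph{clean} if all $N$ defenders are synchronized in the exploitation phase, committed to the correct optimal set Opt, and collide with no attacker during it; every other epoch is \emph{disrupted}. First I would show a clean epoch contributes zero regret: with a common correct Opt the exploitation rule $k=t+j \bmod N$ places the defenders on the $N$ distinct optimal arms every round, so no defender--defender collision occurs and each round collects exactly $\sum_{k=1}^N\mu_{(k)}$. Thus all regret comes from (i) the disrupted epochs and (ii) rounds of an otherwise-clean exploitation epoch in which an attacker collides. Source (ii) is immediate: each such round loses at most $\sum_{k=1}^N \mu_{(k)}\le N$ and there are at most $C$ of them, giving $O(CN)$, which is dominated by $CN^2$.

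Next I would set up the high-probability good event $\mathcal{E}$ on which every defender that sets \emph{SufficientObservations} to True identifies Opt correctly. On a clean sequential-hopping pass a defender collects at least $T_0/K=8\lceil\log(2K^2T)/\Delta^2\rceil$ uncorrupted samples of each arm, so by Hoeffding's inequality the empirical mean of a fixed arm is off by more than $\Delta/2$ with probability at most $2(2K^2T)^{-4}$; a union bound over the $K$ arms, $N$ defenders and $\lfloor T/T_B\rfloor$ epochs makes $\Pr[\mathcal{E}^c]$ polynomially small, and on $\mathcal{E}^c$ we bound the regret crudely by $NT$, a negligible additive term. On $\mathcal{E}$ all empirical gaps are within $\Delta/2$, so the empirical and true top-$N$ sets agree and no defender ever commits to a wrong arm.

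The core of the argument is a charging lemma bounding the total number of defender-epochs spent outside clean exploitation by $O(C)$. Here I would exploit the asymmetry of the communication channel: during intra-communication a defender (or attacker) can only overwrite a $0$ bit (``do not restart'') by forcing a collision, i.e.\ it can turn a $0$ into a $1$ but never a $1$ into a $0$, since a genuine defender-to-defender collision cannot be erased. Hence on $\mathcal{E}$ a synchronized, correctly-committed exploiting group has every \emph{Restart} flag False, and the only events that flip a flag to True --- a collision during sequential hopping, during the $N$-round sensing window, or during either inter-communication window --- cannot be produced by synchronized defender play and must therefore be charged to an adversarial collision counted by $C$. Combining this with a fast-recovery property (an exploring defender camping on Opt$[1]$ is both detected by and detects any out-of-phase defender through the sensing and inter-communication sub-phases, and one clean exploration pass re-commits the affected defenders to the correct Opt) shows that each unit of $C$ can be charged only $O(1)$ disrupted defender-epochs, so their total is $O(C)$.

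Finally I would add up: a disrupted defender accounts for at most a constant amount of regret per round (its own forgone optimal reward, plus the at most one synchronized defender it collides with), so the $O(C)$ disrupted defender-epochs cost $O(C)\cdot O(T_B)$, and with the $O(CN)$ collision term and the negligible failure term this gives $O(C\,T_B)$; substituting $T_B=T_0+2N^2+N=\Theta\!\big(N^2+K\log(K^2T)/\Delta^2\big)$ yields the claimed $O\!\big(CN^2+CK\log(K^2T)/\Delta^2\big)$. The main obstacle is precisely the charging lemma and its companion recovery property: one must check, over every way the adversary can interleave collisions with the four exploration sub-phases and the exploitation phase, that it can neither silently manufacture a ``synchronized-and-correct'' configuration that bleeds regret, nor trigger a desynchronization that cascades across many defenders for many epochs. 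Controlling the desynchronized regime --- where some defenders explore while others exploit and the two collide --- is the delicate part, and is exactly what the sensing sub-phase and the $0$-to-$1$ monotonicity of the communication are engineered to tame.
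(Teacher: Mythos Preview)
Your decomposition matches the paper's: condition on the good event that every defender with \emph{SufficientObservations} set has the correct Opt, bound the number of epochs in which the system is not in synchronized exploitation by $O(C)$, and multiply by $T_B$. The paper resolves the step you correctly flag as the main obstacle --- the charging lemma --- by abstracting the system of defenders into a three-state deterministic transition system with states \textsc{explore}, \textsc{desync}, \textsc{exploit} and attacker ``actions'' (no collision this epoch; collision forcing all defenders to restart; collision forcing a strict subset to restart). The synchronization lemmas you sketch pin down the transition table, and then simple trajectory counting shows at most $1+3C$ epochs can be outside \textsc{exploit} given at most $C$ epochs carry an attacker action. This abstraction collapses the per-sub-phase case analysis you were anticipating into a couple of lines. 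One minor difference: since the counters $o_i,s_i$ persist across epochs, the paper handles the initial accumulation of observations as a separate $\lceil 1+C/T_B\rceil$-epoch prefix (Lemma~\ref{lem:explore}) before invoking the transition-system bound, rather than via a per-epoch union bound as you propose; both routes give the same contribution to the final $O(C\,T_B)$.
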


The proof of Theorem \ref{thm:mainRESYNC} is composed of three key arguments whose main ideas are presented below. Formal proof appears in Appendix \ref{sec:analysisRESYNC}. 

We begin by upper bounding the number of rounds required for all defenders to collect sufficient observations for each arm, to determine the optimal set of arms with high probability. The following Lemma can be easily derived from~\cite{MC}. 


\begin{lemma}
\label{lem:informalsuffobs}
If all defenders have collected at least $8\lceil{\log(2K^2T)}/{\Delta^2} \rceil$ observations for each arm, then all defenders have determined the optimal set of arms with probability at least $1 - {1}/{T}$. 
\end{lemma}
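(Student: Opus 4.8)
The plan is to prove this with a routine Hoeffding concentration bound combined with a union bound, following the template of \cite{MC}. Fix a defender and an arm $k$. By the algorithm's accounting, the $n := 8\lceil \log(2K^2T)/\Delta^2\rceil$ counted observations of arm $k$ are exactly those rounds with collision indicator $0$, so they are genuine i.i.d. draws from $\nu_k$ with mean $\mu_k$ and support in $[0,1]$. I would define the \emph{good event} $\mathcal{G}$ to be the event that, across all $N$ defenders and all arms $k\in[K]$, every empirical mean $\tilde\mu_k$ satisfies $|\tilde\mu_k - \mu_k| \leq \Delta/2$, and then argue two things: (i) on $\mathcal{G}$ the optimal set is correctly identified by each defender, and (ii) $\mathbb{P}(\mathcal{G}^c) \leq 1/T$.

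For (i), I would show that the $\Delta/2$ accuracy separates the true top-$N$ arms from the rest. On $\mathcal{G}$, any arm $k$ with $\mu_k \geq \mu_{(N)}$ has $\tilde\mu_k > \mu_{(N)} - \Delta/2$, while any arm with $\mu_k \leq \mu_{(N+1)}$ has $\tilde\mu_k < \mu_{(N+1)} + \Delta/2$. Since $\Delta = \mu_{(N)}-\mu_{(N+1)}$ gives $\mu_{(N+1)} + \Delta/2 = \mu_{(N)} - \Delta/2$, the empirical means of the true top-$N$ arms all strictly exceed those of the remaining arms. Hence sorting by $\tilde\mu_k$ recovers the correct optimal set, and this holds simultaneously for every defender on $\mathcal{G}$.

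For (ii), I would apply Hoeffding's inequality to a single (defender, arm) pair, obtaining $\mathbb{P}(|\tilde\mu_k - \mu_k| > \Delta/2) \leq 2\exp(-2n(\Delta/2)^2) = 2\exp(-n\Delta^2/2)$. Substituting $n \geq 8\log(2K^2T)/\Delta^2$ gives $n\Delta^2/2 \geq 4\log(2K^2T)$, so the per-pair failure probability is at most $2(2K^2T)^{-4}$. Union-bounding over the at most $NK \leq K^2$ pairs (using $N \leq K$) yields $\mathbb{P}(\mathcal{G}^c) \leq 2K^2(2K^2T)^{-4} \leq 1/T$, which is in fact far stronger than required.

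There is no genuine obstacle here; the only point demanding care is the reliability of the samples feeding each $\tilde\mu_k$. Because the counters $o_k, s_k$ are incremented only on collision-free rounds, the counted observations are uncorrupted i.i.d. rewards from $\nu_k$, so Hoeffding applies verbatim with no need to reason about adversarial collisions or the exploration schedule. The hypothesis that at least $n$ such observations have been gathered for every arm is precisely what licenses the per-arm concentration, completing the argument.
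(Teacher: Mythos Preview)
Your proposal is correct and follows essentially the same Hoeffding-plus-union-bound argument as the paper, which packages it into two auxiliary lemmas (an $\epsilon$-correct ranking recovers the top-$N$ set whenever $\epsilon<\Delta$, and the stated number of samples yields a $\Delta/2$-correct ranking for all defenders with probability at least $1-1/T$). The only cosmetic difference is that the paper works with per-arm deviation $\Delta/4$ rather than your $\Delta/2$, arriving at the bound $2NK/(2K^{2}T)\le 1/T$ instead of your (much sharper) $2K^{2}(2K^{2}T)^{-4}$.
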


The concept of restarting under insufficient observations, and the design of the intra-communication phase where all defenders can communicate whether or not they have sufficient observations for each arm (in the presence of attackers) yields the following two ``synchronization" Lemmas:
\begin{lemma}
\label{lem:insuff-renter-explr}
Suppose all defenders are in exploration phase in a certain epoch. If there exists a defender who does not have at least $8\lceil{\log(2K^2T)}/{\Delta^2} \rceil$ observations for each arm, then all defenders re-enter the exploration phase in the next epoch.
\end{lemma}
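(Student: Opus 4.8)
The plan is to track the \emph{Restart} flag of each defender through a single execution of the exploration subroutine and to show that it equals True for all of them by the end of the phase. Write $i^\star$ for a defender who lacks $8\lceil \log(2K^2T)/\Delta^2\rceil = T_0/K$ observations on at least one arm. The first thing I would observe is that \emph{Restart} is only ever assigned True inside Subroutine~\ref{sub:exploration} and is never reset to False after its initialization (a quick scan of the subroutine confirms this monotonicity). Hence it suffices to exhibit, for each defender, one point at which its flag becomes True.

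The first step handles $i^\star$ directly. Since $i^\star$ fails the test $\forall i : o_i \ge T_0/K$ evaluated right after sequential hopping, the else-branch fires, setting $i^\star$'s \emph{Restart} to True and leaving \emph{SufficientObservations} False. Thus $i^\star$ already carries a True flag upon entering the intra-communication sub-phase.

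The second and main step is to propagate this True flag to every other defender through the intra-communication sub-phase, and to argue this propagation is robust to the attacker. In the loop over $(i,k)\in[N]\times[N]$, consider the iterations with $i = i^\star$ and an arbitrary $k \neq i^\star$. Here $i^\star$ is the sender, and because its \emph{Restart} is True it pulls arm $k$ (sending a ``$1$'' bit); simultaneously defender $k$, being a receiver since $j = k \neq i^\star = i$, pulls its own communicating arm $k$. Two defenders therefore pull arm $k$ in this round, so the collision indicator satisfies $\eta_k = 1$, and defender $k$ sets its \emph{Restart} to True. The part I expect to require the most care is justifying that this transmission cannot be suppressed by the attacker: under the collision model an attacker may only \emph{add} players to an arm, turning a $0$ bit into a $1$ bit, but it can never cancel a collision already forced by two defenders, i.e.\ it cannot turn a $1$ into a $0$. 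Consequently $\eta_k = 1$ holds regardless of the attacker's actions, so every $k \neq i^\star$ receives the restart signal.

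Combining the two steps, at the end of the exploration subroutine defender $i^\star$ and every $k \neq i^\star$ have \emph{Restart} equal to True. By the outer loop of Algorithm~\ref{algo:RESYNC}, any defender whose returned flag is True re-enters the exploration phase in the next epoch, which is exactly the claim. I would close by remarking that collisions occurring in the earlier sequential-hopping or sensing sub-phases can only set \emph{additional} flags to True and never interfere with this argument, so no further case analysis is needed.
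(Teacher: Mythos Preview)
Your proof is correct and follows essentially the same approach as the paper: the defender with insufficient observations sets \emph{Restart} to True, this is broadcast to all other defenders via forced collisions in the intra-communication sub-phase, and the attacker cannot undo such collisions. Your version is more explicit (tracking the monotonicity of the flag and the specific $(i^\star,k)$ iterations), but the logical structure matches the paper's proof exactly.
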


\begin{lemma}
\label{lem:explore}
All defenders have collected at least $8\lceil{\log(2K^2T)}/{\Delta^2} \rceil$ reliable observations for each arm after at most $T_B + C$ rounds since the start of the game 
\end{lemma}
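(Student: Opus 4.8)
The plan is to track, across epochs, the accumulated reliable observation counts $o_i$ for each defender and to charge every missing observation to a distinct adversarial collision. First I would establish the synchronization invariant that lets me reason about a single clean sweep. At the start of the game every defender has \emph{Restart} set to True, so all $N$ defenders enter the exploration phase together in the first epoch; and by Lemma~\ref{lem:insuff-renter-explr}, whenever some defender still lacks the required $T_0/K = 8\lceil \log(2K^2T)/\Delta^2\rceil$ observations on some arm, \emph{all} defenders re-enter exploration in the next epoch. Hence, up until the first moment sufficiency is reached, the defenders are always jointly in the exploration phase, and in particular they are synchronized during every sequential-hopping sub-phase that occurs before that moment.

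The core observation is that during synchronized sequential hopping the $N$ defenders hold distinct internal ranks and pull $j+t \pmod K$, so (since $N\le K$) they never collide with \emph{each other}; each defender therefore pulls every arm exactly $T_0/K$ times per epoch and records a reliable observation on every such round \emph{except} those in which an attacker collides with it. Because the counts $o_i,s_i$ are passed into and carried across successive calls to the Exploration subroutine, the only thing that can keep a pair (defender $j$, arm $k$) below the threshold is the number of attacker collisions landing on that pair during hopping. I would make this precise by a charging argument: assign each reliable observation that is ``missing'' relative to one clean sweep to the distinct round on which an attacker collided, and note that by the definition of $C$ the total number of such sequential-hopping collisions is at most $C$. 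After the first full exploration epoch ($T_B$ rounds) every pair is short by at most its own collision count, so the aggregate shortfall over all defenders and arms is at most $C$; each subsequent collision-free hopping round on a still-deficient pair then replays exactly one missing pull and reduces the shortfall by one, while any further attacker collision is itself one of the $\le C$ charged events. Summing the one base epoch with the at most $C$ make-up rounds yields the bound $T_B + C$.

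The main obstacle I anticipate is controlling the attacker's interference with the sensing and (intra/inter)-communication sub-phases, which is what makes the synchronization invariant legitimate rather than assumed. The key asymmetry to exploit is that an attacker can flip a transmitted $0$ bit to a $1$ (by colliding on a communicating arm) but can never flip a genuine $1$ to a $0$, since a real defender-to-defender collision cannot be undone. I would use this to argue two things: (i) a spurious restart induced by bit-flipping can only force \emph{more} exploration, never hide a genuine shortage, so it cannot cause the algorithm to exit exploration prematurely and leave a pair under-observed; and (ii) every such induced restart still consumes a distinct collision and is therefore already accounted for in the budget $C$, keeping the charging valid. The delicate bookkeeping is to ensure that the make-up rounds across possibly several re-exploration epochs are charged additively to $C$ (rather than charging a whole epoch of overhead per collision); I would handle this by counting only the productive hopping rounds against the shortfall and folding the bounded per-epoch sensing/communication overhead into the single base-epoch term $T_B$.
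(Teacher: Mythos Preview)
Your proposal follows essentially the same route as the paper's own proof: invoke Lemma~\ref{lem:insuff-renter-explr} to keep all defenders synchronized in exploration until sufficiency is reached, use the fact that the counts $o_i,s_i$ persist across epochs, and then charge each missing reliable observation to a distinct adversarial collision (the paper phrases this last step as ``the worst case attacker strategy where the attacker collides on a single arm for a single defender for $C$ rounds''). Your discussion of the bit-flipping asymmetry in intra-communication is exactly the mechanism underlying Lemma~\ref{lem:insuff-renter-explr}, on which both arguments rest, so the additional detail you give is elaboration rather than a different approach.
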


We then state the third synchronization lemma that completely describes the behaviour of defenders over epochs in the absence of adversarial collisions. 

\begin{lemma}
\label{lem:sync2}
Suppose all defenders are in exploration phase or all defenders are in exploitation phase in a certain epoch. If no attacker collides with any defender during this epoch then all defenders enter exploitation phase in the next epoch.
\end{lemma}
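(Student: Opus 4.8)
The plan is to reduce the statement to a single invariant: a defender begins the next epoch in exploitation \emph{iff} its \emph{Restart} flag is \textbf{False} at the end of the current epoch, so it suffices to show that under the two hypotheses no defender ever sets \emph{Restart} to \textbf{True}. Throughout I would work on the event $\mathcal{G}$ that, whenever defenders complete sequential hopping with sufficient observations, they all identify the same optimal set $\mathrm{Opt}$ (the top-$N$ arms); by Lemma~\ref{lem:informalsuffobs} this holds with probability at least $1 - 1/T$, and it is precisely this common agreement, together with the distinct internal ranks guaranteed by Assumption~\ref{assm:cmn-knwlg} and the hypothesis that no attacker collides with a defender, that I will use to rule out \emph{all} defender--defender collisions. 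I then treat the two synchronized configurations separately.

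\textbf{All defenders in exploitation.} Subroutine~\ref{sub:exploitation} resets \emph{Restart} to \textbf{False} on entry and can flip it to \textbf{True} only in the final $N$-round inter-communication sub-phase, upon observing a collision. Since every defender shares the same $\mathrm{Opt}$ and holds a distinct rank $j \in [N]$, the offsets $t + j \pmod{N}$ are pairwise distinct at each round, so the defenders hop orthogonally over the $N$ arms of $\mathrm{Opt}$ and never collide with one another; combined with the no-attacker-collision hypothesis, every collision indicator read during inter-communication is $0$. Hence \emph{Restart} stays \textbf{False} and all defenders exploit again in the next epoch.

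\textbf{All defenders in exploration.} Here I would propagate the invariant that \emph{every} defender holds \emph{Restart} \textbf{False} through the four sub-phases of Subroutine~\ref{sub:exploration}. In sequential hopping the defenders pull $j + t \pmod{K}$; distinct ranks with $N \le K$ give distinct arms, and with no attacker collision there is no collision at all, so no defender trips \emph{Restart}, and each arm is sampled exactly $T_0/K = 8\lceil \log(2K^2T)/\Delta^2 \rceil$ times reliably. Thus every defender sets \emph{SufficientObservations} to \textbf{True}, avoids the insufficient-observations branch, and on $\mathcal{G}$ computes the same $\mathrm{Opt}$. In sensing, the active defender sits alone on $\mathrm{Opt}[1]$ while all others pull $\mathrm{Opt}[1] + 1 \pmod{K}$, so no collision occurs and no defender trips \emph{Restart}. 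Entering intra-communication with \emph{Restart} \textbf{False} everywhere, each defender therefore transmits only the $0$-bit, that is, pulls its own communicating arm; these arms are pairwise distinct, so no receiver observes a collision and the invariant is preserved. Finally, the inter-communication sub-phase neither reads collision feedback nor updates \emph{Restart}, so the flag is still \textbf{False} for every defender at the end of the epoch, and all defenders exploit in the next epoch.

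The step I expect to be the main obstacle is sustaining the invariant ``\emph{Restart} is \textbf{False} for all defenders'' exactly up to the start of intra-communication. That sub-phase is engineered to broadcast a single \textbf{True} to everyone (an attacker can flip a $0$-bit into a $1$-bit but not the reverse), so the conclusion is valid only after I have eliminated every \emph{internal} trigger for a restart: a collision during sequential hopping, the insufficient-observations branch, and a collision during sensing. Each is ruled out by the same pair of ingredients --- orthogonality from distinct ranks and the no-attacker-collision hypothesis --- reinforced by the agreement on $\mathrm{Opt}$ from Lemma~\ref{lem:informalsuffobs}; verifying that none of these fires is the crux.
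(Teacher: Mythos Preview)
Your argument is correct and follows the paper's approach---show that \emph{Restart} stays \textbf{False} through each sub-phase---only spelled out in far more detail than the paper's two-sentence sketch, and you are right to work explicitly on the good event $\mathcal{G}$ from Lemma~\ref{lem:informalsuffobs}, which the paper leaves implicit. One minor slip: in the sensing sub-phase you write ``no collision occurs,'' but when $N\ge 3$ all $N-1$ non-active defenders simultaneously pull $\mathrm{Opt}[1]+1\pmod K$ and do collide with each other; your conclusion is still correct because only the \emph{active} defender reads $\eta_{\mathrm{Opt}[1]}$ in that branch, and she is indeed alone on $\mathrm{Opt}[1]$.
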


The following is the situation where desynchronization occurs under adversarial collisions. Suppose all defenders are in exploration phase or all defenders are in exploitation phase in a certain epoch. If the attackers cause $N_1$ defenders trigger restart and $N_2$ to not, then the $N_1$ defenders re-enter exploration phase while the $N_2$ enter exploitation phase. What is crucial is that all defenders are able to synchronize in the next epoch, which the final synchronization lemma proves is true. 

\begin{lemma}
\label{lem:sync3}
Suppose $N_1 \geq 1$ defenders are in exploration phase and $N_2$ defenders are in exploitation phase in a certain epoch. Then independent of the arms pulled by attackers during the epoch, all defenders enter exploration phase in the next epoch. 
\end{lemma}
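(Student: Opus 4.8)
The plan is to show that at the end of this epoch \emph{every} defender has its \emph{Restart} flag equal to True, since by the main loop of Algorithm~\ref{algo:RESYNC} this is exactly what forces a defender into the exploration phase in the next epoch. I treat the $N_1$ exploration defenders and the $N_2$ exploitation defenders separately, noting that necessarily $N_2 \geq 1$ (otherwise the epoch is synchronized and Lemma~\ref{lem:sync2} applies). I condition throughout on the event of Lemma~\ref{lem:informalsuffobs}, on which every defender that has collected sufficient observations computes the same optimal set, so that Opt$[1]$ denotes one common arm. The structural fact I exploit is that the two phases are time-aligned: since $T_B = T_0 + 2N^2 + N$ and the exploitation main loop runs for exactly $T_0 + 2N^2$ rounds, the inter-communication sub-phases (the last $N$ rounds) of both phases occupy the same $N$ absolute rounds of the epoch.

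First I would establish the invariant that \emph{whenever some defender is in exploitation, all $N$ defenders have SufficientObservations}. The per-arm counters $o_i$ are carried across epochs and never reset, so they are non-decreasing and SufficientObservations, once true, stays true. Every defender starts in exploration, so if $e^\ast$ is the first epoch after which some defender has Restart equal to False, then all defenders are synchronized in exploration through epoch $e^\ast$. Applying the contrapositive of Lemma~\ref{lem:insuff-renter-explr} at $e^\ast$ shows that no defender can have had insufficient observations there; hence all have SufficientObservations at $e^\ast$ and, by monotonicity, at every later epoch. Since exploitation can only begin at epoch $e^\ast + 1$, the invariant follows, and in particular all $N$ defenders in the present desynchronized epoch have SufficientObservations.

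For the $N_1$ exploration defenders I use the sensing sub-phase: defender $j$ pulls Opt$[1]$ throughout its dedicated block of $N$ consecutive sensing rounds, while each exploitation defender is still hopping over Opt with period $N$ and therefore pulls Opt$[1]$ exactly once inside that block, creating a collision on Opt$[1]$ that defender $j$ senses and that sets its Restart flag to True. For the $N_2$ exploitation defenders I use the aligned inter-communication window: by the invariant every exploration defender has SufficientObservations and hence pulls Opt$[1]$ during all $N$ of those rounds, while each exploitation defender hops over Opt and thus pulls Opt$[1]$ in one of them, producing a collision that it senses during inter-communication and that sets its Restart flag to True. In both groups the triggering collision is caused by a defender, so it occurs no matter what arms the attackers pull; attackers can only add collisions, never erase these, which is what makes the conclusion hold independently of the attackers' actions. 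Hence all $N$ defenders finish the epoch with Restart equal to True and re-enter exploration in the next epoch.

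The main obstacle is the exploitation-defender case together with the invariant underlying it. In the exploration case, a defender with too few observations would already have set Restart to True after sequential hopping, so that conclusion is robust even without the invariant; but an exploitation defender can only be forced to restart if some exploration defender \emph{actively} signals on Opt$[1]$, which in turn requires that defender to have SufficientObservations. Ruling out the bad configuration --- an exploitation defender coexisting with exploration defenders that all lack sufficient observations --- is precisely what the first-exploitation-epoch argument together with the monotonicity of the counters accomplishes, and it is the delicate point of the proof; the time-alignment of the inter-communication windows and the use of unavoidable defender--defender collisions are the mechanisms that then deliver the result.
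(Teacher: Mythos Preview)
Your proof follows the same two-part structure as the paper's: use the sensing sub-phase to force each exploration defender to set Restart (via an unavoidable collision on Opt$[1]$ with some exploitation defender hopping with period $N$), and use the aligned inter-communication window to force each exploitation defender to set Restart (via an unavoidable collision on Opt$[1]$ with the exploration defenders who are signaling there). In both cases the triggering collision is defender--defender and hence cannot be erased by the attacker. In this sense your proposal is correct and matches the paper's argument.

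Where you go beyond the paper is in your invariant that \emph{whenever any defender is in exploitation, all defenders already have SufficientObservations}. The paper's proof simply asserts that ``during the last $N$ rounds of the epoch all defenders in exploration phase pull Opt$[1]$'', which tacitly requires every exploration defender to have SufficientObservations set to True; otherwise such a defender would pull arm $1$ rather than Opt$[1]$ during inter-communication, and the exploitation defenders need not be signaled. You correctly isolate this as the delicate step and supply the missing justification: tracing back to the first epoch $e^\ast$ at which any defender ends with Restart equal to False, noting that all defenders are synchronized in exploration through $e^\ast$, applying the contrapositive of Lemma~\ref{lem:insuff-renter-explr} there, and then invoking monotonicity of the counters $o_i$. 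You also make explicit the conditioning on the event of Lemma~\ref{lem:informalsuffobs} so that Opt$[1]$ is a common arm, and the time-alignment of the two inter-communication windows. These are points the paper leaves implicit; your version fills them in without changing the underlying mechanism.
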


The above Lemma holds due to the design of the sensing sub-phase (where defenders in exploration phase can sense the presence of defenders in the exploitation phase in the presence of attackers) and inter-communication sub-phase (where defenders in exploration phase communicate with defenders in exploitation phase and force them to rejoin exploration in the next epoch).

Finally using the synchronization lemmas, we can upper bound the number of epochs in which not all defenders are in exploitation phase, which is the main argument to bound the regret. To achieve this we consider the transition dynamics of the system of defenders over epochs (see Figure \ref{fig:dynamics}). For the formal description refer Appendix \ref{sec:metagame}.

The states $\textsc{explore},\textsc{desync}$ and \textsc{exploit} in Figure \ref{fig:dynamics} correspond to the state of the bandit game in a certain epoch where all defenders are in exploration phase, at least one defender is in exploration phase and at least one defender is in exploitation phase, or all defenders are in exploitation phase respectively. Similarly the action space $\{\texttt{N},\texttt{C},\texttt{C'}\}$ corresponds to the actions the attackers can take over epochs. Formally, $\texttt{N}$ corresponds to arm pulls during an epoch that cause no collisions with defenders, $\texttt{C}$ to arm pulls during an epoch that cause all defenders to restart exploration on the next epoch, and $\texttt{C'}$ to arm pulls during an epoch that cause $N_1 : 1<N_1<N$ defenders to restart respectively. Here we upper bound the number of epochs the state is not \textsc{exploit}, given that the number of epochs in which actions $\texttt{C} \text{ and }\texttt{C'}$ are played by the attackers are at most $C$.

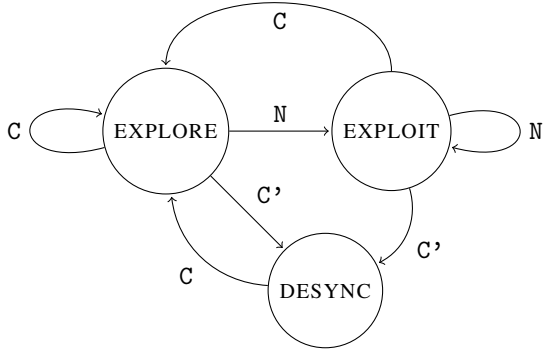
\begin{figure}
\centering
\begin{tikzpicture}[shorten >=1pt,node distance=3cm,on grid,auto] 
   \node[state] (1)   {\textsc{explore}}; 
   \node[state] (2) [below right=of 1] {\textsc{desync}}; 
   \node[state](3) [right=of 1] {\textsc{exploit}};
    \path[->] 
    (1) edge [loop left] node {\texttt{C}}  (1)
    (1) edge  node {\texttt{C'}}  (2)
    (1) edge  node {\texttt{N}}  (3)
    (2) edge  [bend left = 40] node {\texttt{C}}  (1)
    (3) edge [bend right = 90] node {\texttt{C}} (1)
    (3) edge [bend left = 40] node {\texttt{C'}} (2)
    (3) edge [loop right] node {\texttt{N}} (3)
    ;
\end{tikzpicture}
\caption{Transition Dynamics of the System of Defenders over epochs}\label{fig:dynamics}
\end{figure}

\begin{lemma}
\label{lem:goodstatebound}
(Informal) After $\lceil1 + {C}/{T_B}\rceil$ epochs since the start of the game, the number of epochs in which, not all defenders are in exploitation phase, is at most $O(C)$. 
\end{lemma}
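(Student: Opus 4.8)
The plan is to analyze the induced finite-state dynamics on the three states $\textsc{explore}, \textsc{desync}, \textsc{exploit}$ of Figure \ref{fig:dynamics} and to charge every non-\textsc{exploit} epoch either to one adversarial collision or to a single ``escape'' back into \textsc{exploit}. First I would peel off the transient phase: by Lemma \ref{lem:explore} all defenders accumulate at least $8\lceil \log(2K^2T)/\Delta^2\rceil$ reliable observations within $T_B+C$ rounds, i.e.\ within $\lceil 1 + C/T_B\rceil$ epochs, and since these counts never decrease the flag \emph{SufficientObservations} stays True from then on. Consequently, from that epoch onward the transitions are exactly governed by the synchronization lemmas: Lemma \ref{lem:sync2} gives $\textsc{explore}\xrightarrow{\texttt{N}}\textsc{exploit}$ and $\textsc{exploit}\xrightarrow{\texttt{N}}\textsc{exploit}$, Lemma \ref{lem:sync3} gives $\textsc{desync}\to\textsc{explore}$ under every attacker action, and the definitions of $\texttt{C}$ and $\texttt{C'}$ supply the remaining restart edges. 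Writing $n_\texttt{C}$ and $n_{\texttt{C'}}$ for the number of epochs in which $\texttt{C}$ resp.\ $\texttt{C'}$ is played, I take as given (as stated before the lemma, since each restart-inducing action spends at least one adversarial collision) that $n_\texttt{C}+n_{\texttt{C'}}\le C$.

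The counting then splits cleanly. A \textsc{desync} epoch is entered only through a $\texttt{C'}$ edge out of \textsc{explore} or \textsc{exploit}, and by Lemma \ref{lem:sync3} no edge re-enters \textsc{desync} from \textsc{desync}; hence the number of \textsc{desync} epochs is at most $n_{\texttt{C'}}\le C$. For \textsc{explore} epochs I would classify by the action played in them: those playing $\texttt{C}$ number at most $n_\texttt{C}$ and those playing $\texttt{C'}$ number at most $n_{\texttt{C'}}$, since each such epoch consumes one of the at most $C$ collision-epochs. The only remaining \textsc{explore} epochs are those playing $\texttt{N}$, which are precisely the epochs realizing the edge $\textsc{explore}\xrightarrow{\texttt{N}}\textsc{exploit}$.

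The hard part will be bounding these ``escape'' epochs, and this is where the amortized argument lives. The key observation is that \textsc{exploit} is entered from a non-\textsc{exploit} state only via an $\textsc{explore}\xrightarrow{\texttt{N}}\textsc{exploit}$ edge, while it is left only via the $\texttt{C}$ and $\texttt{C'}$ edges out of \textsc{exploit}; since entries into and exits out of \textsc{exploit} alternate along the epoch sequence, and the game begins in \textsc{explore} (so the first entry precedes the first exit), the number of entries is at most one more than the number of exits. Thus the number of $\texttt{N}$-playing \textsc{explore} epochs is at most $n_\texttt{C}+n_{\texttt{C'}}+1\le C+1$. Summing the three contributions, the number of non-\textsc{exploit} epochs after the transient is at most $(C+1)+(n_\texttt{C}+n_{\texttt{C'}})+n_{\texttt{C'}}\le (C+1)+C+C = 3C+1 = O(C)$, as claimed. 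The subtlety I would guard against is that this alternation argument relies on the fact that once all defenders hold sufficient observations they never spuriously restart on action $\texttt{N}$ — which is exactly why the transient of $\lceil 1 + C/T_B\rceil$ epochs, controlled through Lemmas \ref{lem:insuff-renter-explr} and \ref{lem:explore}, must be removed before the clean state machine is invoked.
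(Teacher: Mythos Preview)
Your proposal is correct and follows essentially the same approach as the paper: reduce to the three-state meta-game of Figure~\ref{fig:dynamics}, use the synchronization lemmas to justify the transitions once sufficient observations are guaranteed, and then bound the number of visits to \textsc{desync} and \textsc{explore} in terms of the at most $C$ collision-epochs, arriving at the identical constant $3C+1$. The only cosmetic difference is in how \textsc{explore} epochs are counted: the paper bounds them by looking at \emph{predecessors} (every \textsc{explore} epoch after the first is preceded by a \textsc{desync} epoch or by a $\texttt{C}$-action), whereas you classify them by the \emph{action played in them} and use an entry/exit alternation argument for the $\texttt{N}$-case; both bookkeeping choices are equally short and give the same bound (one small caveat: after the transient of $\lceil 1+C/T_B\rceil$ epochs the state need not be \textsc{explore}, but your alternation inequality ``entries $\le$ exits $+1$'' holds regardless of the initial state, so the conclusion is unaffected).
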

When all defenders are in exploitation phase, they pull the top $N$ arms with no collisions amongst themselves, conditioned on the event that all of them have determined the optimal set of arms. These arguments show that the leading term in the expected regret is $O(CT_B) = O(CN^2 + CK{\log(K^2T)}/{\Delta^2})$.  

\section{The RESYNC2 Algorithm}
\label{sec:RESYNC2}

This section proposes \textsc{resync2} for the distinguishable collision sensing setting, where defenders can determine whether a collision is from an attacker or a defender. We remove Assumption \ref{assm:cmn-knwlg} and show that the additional feedback allows  \textsc{resync2} to have better performance than \textsc{resync}.

The algorithm \textsc{resync2}, consists of three phases that run sequentially, namely, initialization, exploration and exploitation. Note that unlike \textsc{resync}, the exploration and exploitation phases are not intertwined, thereby removing the necessity of the sensing and inter-communication sub-phases in \textsc{resync}.

\textbf{Initialization.} The purpose of the initialization phase is to estimate the total number of defenders, and assign distinct internal ranks between the defenders. The initialization phase is similar to the one from \textsc{sic-mmab} (\cite{sicmmab}) although adapted to the distinguishable collision sensing setting, so number of defenders and internal ranks can be determined in the presence of attackers.

\textbf{Exploration.} The exploration phase progresses in epochs of size $2K$, with two sub-phases, sequential hopping (from \textsc{resync}) but which lasts only for $K$ rounds and a modified intra-communication phase (also from \textsc{resync}) that also lasts only for $K$ rounds. The interplay between sequential hopping and inter-communication is similar to that of \textsc{resync} in the sense that, the outcome of sequential hopping (whether one reliable observation was received for each arm during sequential hopping) is communicated through forced collisions in the intra-communication phase. We say an epoch is successful if each defender received one reliable observation for each arm during sequential hopping. The defenders explore until there are $8\lceil{\log(2K^2T)}/{\Delta^2} \rceil$ successful epochs.

\textbf{Exploitation} In the exploitation phase, the defenders sequentially hop over the optimal set of arms until the end of the time-horizon.

The complete pseudocode of \textsc{resync2} is given in Appendix \ref{app:RESYNC2-desc}.

The performance of \textsc{resync2} improves from that of \textsc{resync} due to the additional feedback available in distinguishable collision sensing. This feedback allows defenders to communicate robustly using collisions and collision feedback even in the presence of attackers. That is, they can determine with certainty whether a $1$ bit or a $0$ bit was sent by a defender during an inter-communication phase, by reading the collision indicator $\eta^{D}_k$. In the previous feedback setting, only $\eta_k$ is available and the bits sent in this manner between defenders can be modified by an attacker through adversarial collisions. This robust communication allows us to disentangle the exploration and exploitation phases from \textsc{resync}, leading to the size of the epochs to reduce from $O\bigg(N^2 + K{\log(K^2T)}/{\Delta^2}\bigg)$ to just $O(K)$, thereby improving the regret guarantee.

\subsection{Analysis of the \textsc{resync2} Algorithm}
Theorem \ref{thm:RESYNC2} bounds the expected regret incurred by \textsc{resync2} given the attack cost $C$. 
\begin{theorem} 
\label{thm:RESYNC2}
Assume $N$ defenders run \textsc{resync2} against $M$ attackers, with $\Delta = \mu_{(N)}-\mu_{(N+1)}$. Conditioned on the number of attacks being $C$, the expected regret of \textsc{resync2} is bounded by \[O\bigg(CK + K\frac{\log(K^2T)}{\Delta^2}\bigg)\]
\end{theorem}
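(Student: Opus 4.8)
The plan is to decompose the total regret into the contributions of the three phases --- initialization, exploration, and exploitation --- and to argue that, thanks to the distinguishable collision feedback, the attackers can only inflate each piece in a tightly controlled way. The central structural fact I would establish first is \emph{robust synchronization}: since defenders communicate the success or failure of each exploration epoch through the modified intra-communication sub-phase by reading only the defender-collision indicator $\eta^{D}_k$, and since an attacker can set $\eta^{A}_k$ but never $\eta^{D}_k$, every defender arrives at an identical verdict on whether the current epoch was successful. Consequently all defenders maintain the same count of successful epochs and transition from exploration to exploitation simultaneously, and no epoch is ever spuriously flagged as unsuccessful by the attackers. This mirrors the role the synchronization lemmas played for \textsc{resync}, but exploits distinguishability to make the argument substantially tighter.

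With synchronization in hand, I would bound the exploration phase. When all defenders are synchronized and no attacker interferes within an epoch, the sequential-hopping sub-phase is orthogonal (the $N \leq K$ distinct internal ranks keep defenders on distinct arms for all $K$ rounds), so every defender collects exactly one reliable observation per arm and the epoch is successful. Hence an epoch can fail only if an attacker corrupts at least one observation during sequential hopping, an event flagged by $\eta^{A}_k = 1$. Since attacks within one epoch affect only that epoch, the number of unsuccessful epochs is at most the number of adversarial collisions during exploration, which is at most $C$; as each epoch costs $2K$ rounds, this contributes $O(CK)$ regret. The algorithm requires $8\lceil \log(2K^2T)/\Delta^2\rceil$ successful epochs, contributing $O(K\log(K^2T)/\Delta^2)$ rounds. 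Invoking Lemma \ref{lem:informalsuffobs}, after these successful epochs every defender has identified the optimal set of arms with probability at least $1 - 1/T$.

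Next I would handle the exploitation phase. Conditioned on the high-probability event that the optimal set is correctly and identically identified by all defenders, the exploitation hopping is orthogonal, so the only regret arises when an attacker collides with a defender, costing at most $O(1)$ per collision and hence $O(C)$ in total. The initialization phase, adapted from \textsc{sic-mmab} to the distinguishable setting, correctly estimates $N$ and assigns distinct ranks using the two collision indicators and is robust to attackers; its length is $O(K\log T)$, which is absorbed into the $K\log(K^2T)/\Delta^2$ term since $\Delta \leq 1$ implies $K\log(K^2T)/\Delta^2 \geq K\log T$. Summing the three phases on the good event gives $O(CK + K\log(K^2T)/\Delta^2)$.

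Finally I would dispose of the low-probability failure event. On the complementary event, of probability at most $1/T$, where the optimal set is misidentified, the regret is trivially bounded by $O(NT)$, contributing only $O(N)$ to the expectation, which is dominated by the main term. Combining the good and bad events yields the claimed bound. The main obstacle is the robust-synchronization step: everything downstream --- in particular the clean one-attack-per-failed-epoch accounting and the absence of internal collisions during exploitation --- rests on showing that the modified intra-communication sub-phase lets defenders reach a consistent, attacker-proof consensus on epoch success by reading $\eta^{D}_k$, so I would invest the most care there.
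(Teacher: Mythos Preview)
Your proposal is correct and follows essentially the same route as the paper's proof: initialization succeeds with high probability, the distinguishable indicator $\eta^{D}_k$ makes the intra-communication verdict attacker-proof so that unsuccessful exploration epochs are in one-to-one correspondence with epochs containing an adversarial collision (giving the $O(CK)$ term via Lemma~\ref{lem:explore-dc}), and then Lemma~\ref{lem:informalsuffobs} plus orthogonal exploitation finish the bound. The only small slip is that the bad event has probability at most $(N+1)/T$ rather than $1/T$, since orthogonalization itself can fail with probability $N/T$; this does not change the final asymptotics.
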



The proof of Theorem \ref{thm:RESYNC2} is composed of two key arguments whose main idea is presented below, formal proof appears in Appendix \ref{sec:analysisRESYNC2}. 

We begin by showing that after initialization phase is complete, all defenders have estimated the number of defenders $N$ correctly, and have distinct internal ranks in $[N]$ with high probability. It is worth noting that the initialization phase is successful with high probability, irrespective of the arms pulled by the attackers, whereas if the same initialization phase from \cite{sicmmab} was used, this would not be the case. 

Then similar to the analysis for Theorem \ref{thm:mainRESYNC}, we upper bound the number of rounds required for all defenders to collect sufficient observations for each arm, in order to determine the optimal set of arms with high probability using Lemma \ref{lem:informalsuffobs} and the following Lemma.

\begin{lemma}
\label{lem:explore-dc}
After at most $C + 8{\log{(2K^2T)}}/{\Delta^2}$ epochs of exploration, all defenders have collected at least $8\lceil{\log(2K^2T)}/{\Delta^2} \rceil$ observations for each arm.   
\end{lemma}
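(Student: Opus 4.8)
The plan is to charge every ``wasted'' exploration epoch to a distinct adversarial collision and to count how many productive epochs are needed. First I would condition on the high-probability event, guaranteed by the first key argument of the proof, that the initialization phase leaves every defender with the correct value of $N$ and a distinct internal rank in $[N]$; on this event the defenders sequentially hop in lockstep throughout exploration. Within the $K$ sequential-hopping rounds of any epoch, each defender then visits every arm exactly once, and since the ranks are distinct no two defenders ever share an arm in the same round. Hence during sequential hopping a defender can only collide with an attacker, and in the distinguishable-sensing model she detects this through $\eta^A_k$ and discards exactly the corrupted pulls. I would conclude that a defender collects one reliable observation of each arm in an epoch precisely when no attacker collides with her during that epoch's sequential-hopping sub-phase.

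Next I would turn the definition of a \emph{successful} epoch --- one in which every defender obtains a reliable observation of every arm --- into the counting bound. Each successful epoch adds exactly one reliable observation per arm for every defender, so $8\lceil\log(2K^2T)/\Delta^2\rceil$ successful epochs are both necessary and sufficient to reach the target observation count for each arm; this is precisely the exploration stopping rule. It remains to bound the unsuccessful epochs. By the first paragraph, an epoch is unsuccessful only if some attacker collides with some defender during its sequential hopping, and since the rounds of distinct epochs are disjoint, each unsuccessful epoch charges at least one fresh adversarial collision against the total budget $C$. Therefore at most $C$ epochs are unsuccessful, and the total number of exploration epochs is at most $C + 8\lceil\log(2K^2T)/\Delta^2\rceil$, which gives the claim.

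The step I expect to need the most care is ensuring that all defenders \emph{agree} on which epochs are successful, so that the count of successful epochs is common and they leave exploration simultaneously. This is exactly where distinguishable sensing is essential: in the intra-communication sub-phase a defender signals a failure by forcing a collision, and every receiver reads the defender-induced indicator $\eta^D_k$, which an attacker can neither set nor erase --- attacker collisions only raise $\eta^A_k$. I would argue that a failure flagged by any single defender is thus delivered uncorrupted to all defenders, so they declare the same epochs successful, maintain a shared successful-epoch count, and transition to exploitation together; moreover an attacker cannot manufacture a spurious success (it cannot erase a $1$ bit carried on $\eta^D_k$), so adversarial collisions during intra-communication never inflate the successful-epoch count. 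Combining this robust synchronization with the per-epoch charging argument completes the bound.
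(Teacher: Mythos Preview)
Your argument is correct and follows essentially the same approach as the paper's proof: charge each unsuccessful epoch to at least one adversarial collision during its sequential-hopping sub-phase, observe that $8\lceil\log(2K^2T)/\Delta^2\rceil$ successful epochs suffice for the target observation count, and conclude the bound of $C + 8\lceil\log(2K^2T)/\Delta^2\rceil$ epochs. Your treatment of the intra-communication sub-phase is in fact more careful than the paper's, which simply asserts that ``all defenders are notified by that defender through deliberate collisions and collision feedback''; your explicit use of the defender-indicator $\eta^D_k$ (which attackers can neither set nor clear) to argue that the successful-epoch count is common to all defenders is exactly the point that makes the distinguishable-sensing setting easier, and it is good that you isolated it. One small remark: the algorithm actually tests $\eta_k$ (not $\eta^A_k$) during sequential hopping, but since you already argued that defenders never collide with one another there, $\eta_k = \eta^A_k$ in those rounds and the distinction is immaterial.
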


After each defender has sufficient observations on each arm, all defenders will enter the exploitation phase and never leave this phase, where they will pull the top $N$ arms orthogonally until the end of the time-horizon. These arguments show that the leading term in the expected regret is $O\bigg(CK + K{\log(K^2T)}/{\Delta^2}\bigg)$. 
\section{Lower Bounds}
\label{sec:lowerbounds}
In this section, we consider lower bounds on expected regret of the defenders, when the attack cost is $C$, and will show that the expected regret of the defenders has a linear dependence on the attack cost. 

The following theorem establishes a lower bound on the expected regret of any MP-MAB algorithm in terms of $C$.
\begin{theorem}
\label{thm:lowerbound1}
There exists an attacker with expected number of attacks at most $C$, for which any MP-MAB algorithm suffers expected regret $\Omega({NC}/{K})$. 
\end{theorem}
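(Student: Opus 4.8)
The plan is to exhibit a single \emph{oblivious} attacker (it suffices to take $M=1$) whose collision count is at most $C$ deterministically, and then argue that its damage cannot be undone by any defender algorithm. First I would fix a hard instance in which the top $N$ arms are bounded away from the remaining $K-N$ arms, so that $\Delta = \mu_{(N)}-\mu_{(N+1)} = \Theta(1)$; this is what makes ``losing access to one optimal arm'' cost a constant per round. The attacker draws a single target arm $a^\star$ uniformly at random from all $K$ arms (crucially, it is agnostic to which arms are optimal), pulls $a^\star$ at every round, and then stops and stays quiet forever as soon as it has registered $C$ collisions with a defender (with $M=1$ every collision on $a^\star$ is with a defender, and the attacker observes its own collision feedback, so this stopping rule is implementable). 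By construction the attack cost is at most $C$ deterministically, hence also in expectation, against every algorithm and in both feedback models.

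The heart of the argument is a conditional regret lower bound given the event $\{a^\star \in \mathrm{top}\ N\}$, which has probability exactly $N/K$ by the uniform choice of $a^\star$. I would establish two facts that hold for \emph{every} (adaptive, randomized, decentralized) defender strategy. (i) While the attacker is active on a top arm $a^\star$, any defender pulling $a^\star$ collides and receives $0$, so the $N$ defenders can productively occupy at most the remaining $N-1$ optimal arms; assigning $N-1$ defenders to those arms and the last defender to the best still-available arm (whose mean is at most $\mu_{(N+1)}$), the achievable per-round reward is at most $\sum_{k=1}^{N}\mu_{(k)} - \mu_{(N)} + \mu_{(N+1)}$, so the instantaneous regret is at least $\Delta$. (ii) Since the definition of $C$ counts at most one collision per round, the attacker needs at least $C$ active rounds to accumulate $C$ collisions; if it never reaches $C$ it stays active for the whole horizon (which we take to satisfy $T \ge C$). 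Either way the attacker is active for at least $C$ rounds, each contributing regret $\ge \Delta$, so the conditional regret is at least $\Delta C$.

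Combining the two cases, and using that the complementary event $\{a^\star \notin \mathrm{top}\ N\}$ contributes nonnegative regret, gives $\mathbb{E}[R_T] \ge \Pr[a^\star \in \mathrm{top}\ N]\cdot \Delta C = (N/K)\,\Delta C = \Omega(NC/K)$, as claimed. Note the two factors have clean interpretations: $N/K$ is the probability that an attacker agnostic to the instance happens to block an optimal arm, and $C$ is the minimum number of active rounds forced by the budget, each costing $\Delta = \Theta(1)$.

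The main obstacle will be making step (i) fully rigorous and uniform over all algorithms: I must show that no adaptive, randomized, decentralized strategy can evade the $\Delta$-per-round loss while $a^\star$ is blocked, and handle the coupling between the defenders' (possibly randomized) pulls and the stopping rule that determines when the attacker quits. I would handle this by conditioning on $a^\star$ and on the full realization and bounding the per-round reward \emph{deterministically} via the ``one optimal arm is unavailable'' rearrangement bound above, so that the estimate never depends on what the defenders know or believe (self-collisions among defenders only lower the reward, so the bound on achievable reward is unaffected). A secondary point to verify is that the argument uses no distinguishability assumption: the loss arises purely from the blocked arm yielding zero reward, which is identical in the distinguishable and non-distinguishable models, so the bound applies to both settings.
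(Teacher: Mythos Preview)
Your proposal is correct and follows essentially the same approach as the paper: a single attacker chooses a uniformly random arm, sits on it, and with probability $N/K$ this arm is optimal, in which case every active round costs the defenders at least $\Delta=\mu_{(N)}-\mu_{(N+1)}$, yielding $\mathbb{E}[R_T]\ge (N/K)\,\Delta\,C$. Your justification for the per-round loss (the rearrangement bound that at most $N-1$ optimal arms can be productively occupied) is exactly the content behind the paper's one-line assertion.

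The only difference is the stopping rule: the paper simply has the attacker pull the chosen arm for the first $C$ rounds and then stay quiet, so the attack cost is at most $C$ and the number of blocked rounds is exactly $C$ by construction. Your version (stop after registering $C$ collisions) is also valid, but it forces you to carry the extra argument in step~(ii) that at least $C$ active rounds are needed to accumulate $C$ collisions. This is correct but unnecessary; the paper's fixed-horizon attacker sidesteps the coupling between the defenders' pulls and the attacker's stopping time that you flag as the ``main obstacle.'' Otherwise the two arguments coincide.
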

\begin{proof}
The attacker samples an arm $k \sim \mathcal{U}(K)$, pulls $k$ for $C$ rounds, and pulls no arm for the remaining rounds. Clearly the number of collisions any defender will face from the attacker is at most $C$. The sampled arm is in the optimal set of arms with probability $\frac{N}{K}$. Under this event, during the first $C$ rounds, any algorithm has per-round regret at least $\mu_{(N)}-\mu_{(N+1)}$. So, the expected regret over $T$ rounds under this attack is at least $C\cdot(\mu_{(N)}-\mu_{(N+1)})\cdot\frac{N}{K} = \Omega\left(\frac{NC}{K}\right)$.
\end{proof}

Hence, this lower bound establishes that both \textsc{resync} and \textsc{resync2} exhibit  order-optimal behaviour in terms of $C$.

\section{Discussion}
We studied a setting in which $N$ defenders collaborate to minimize regret from a multi-armed bandit where several players simultaneously pull arms and $M$ attackers disrupt collaboration between defenders. We showed that even when $M = 1$, existing algorithms, including algorithms robust to jammers and selfish players, incur linear regret with only logarithmic number of collisions from the attacker. We thus proposed the algorithm \textsc{resync} and \textsc{resync2} based on restarting synchronously under adversarial collisions in which the performance deteriorates gracefully as the number of collisions from multiple attackers increases. We then provided lower bound that proves that the regret scales linearly with the number of collisions from attackers. In conclusion, we establish that our proposed algorithms are order-optimal in terms of the attack cost.

This work leaves several questions open. Firstly, although the assumption that a lower bound on $\Delta$ is known can be removed in the distinguishable collision sensing setting (refer Appendix \ref{app:removedelta}), by combining our synchronization mechanism with a generalization of the Successive-Eliminations algorithm (\cite{perchet2013}) as in \textsc{sic-mmab} \cite{sicmmab}, it is unclear what algorithmic mechanism can be used in the non-distinguishable collision sensing setting when $\Delta$ is unknown and attackers exist in the game. Next, it would be interesting to remove Assumption \ref{assm:cmn-knwlg} in the non-distinguishable collision sensing setting, either with algorithms that robustly estimate the value $N$ online or by using an approach that does not require knowledge of $N$ (as in \cite{bistritzLeshem2018}). Further, one may look at robustness to adversarial collisions in the no-sensing setting where no collision information is observed and the heterogeneous setting where the arm means vary among players.

\printbibliography

@article{sicmmab,
  doi = {10.48550/ARXIV.1809.08151},
  
  url = {https://arxiv.org/abs/1809.08151},
  
  author = {Boursier, Etienne and Perchet, Vianney},
  
  keywords = {Machine Learning (cs.LG), Machine Learning (stat.ML), FOS: Computer and information sciences, FOS: Computer and information sciences},
  
  title = {SIC-MMAB: Synchronisation Involves Communication in Multiplayer Multi-Armed Bandits},
  
  publisher = {arXiv},
  
  year = {2018},
  
  copyright = {arXiv.org perpetual, non-exclusive license}
}

@InProceedings{MC,
  title = 	 {Multi-Player Bandits -- a Musical Chairs Approach},
  author = 	 {Rosenski, Jonathan and Shamir, Ohad and Szlak, Liran},
  booktitle = 	 {Proceedings of The 33rd International Conference on Machine Learning},
  pages = 	 {155--163},
  year = 	 {2016},
  editor = 	 {Balcan, Maria Florina and Weinberger, Kilian Q.},
  volume = 	 {48},
  series = 	 {Proceedings of Machine Learning Research},
  address = 	 {New York, New York, USA},
  month = 	 {20--22 Jun},
  publisher =    {PMLR},
  pdf = 	 {http://proceedings.mlr.press/v48/rosenski16.pdf},
  url = 	 {https://proceedings.mlr.press/v48/rosenski16.html},
  abstract = 	 {We consider a variant of the stochastic multi-armed bandit problem, where multiple players simultaneously choose from the same set of arms and may collide, receiving no reward. This setting has been motivated by problems arising in cognitive radio networks, and is especially challenging under the realistic assumption that communication between players is limited. We provide a communication-free algorithm (Musical Chairs) which attains constant regret with high probability, as well as a sublinear-regret, communication-free algorithm (Dynamic Musical Chairs) for the more difficult setting of players dynamically entering and leaving throughout the game. Moreover, both algorithms do not require prior knowledge of the number of players. To the best of our knowledge, these are the first communication-free algorithms with these types of formal guarantees.}
}

@InProceedings{sicgt,
  title = 	 {Selfish Robustness and Equilibria in Multi-Player Bandits},
  author =       {Boursier, Etienne and Perchet, Vianney},
  booktitle = 	 {Proceedings of Thirty Third Conference on Learning Theory},
  pages = 	 {530--581},
  year = 	 {2020},
  editor = 	 {Abernethy, Jacob and Agarwal, Shivani},
  volume = 	 {125},
  series = 	 {Proceedings of Machine Learning Research},
  month = 	 {09--12 Jul},
  publisher =    {PMLR},
  pdf = 	 {http://proceedings.mlr.press/v125/boursier20a/boursier20a.pdf},
  url = 	 {https://proceedings.mlr.press/v125/boursier20a.html},
  abstract = 	 { Motivated by cognitive radios, stochastic multi-player multi-armed bandits gained a lot of  interest recently. In this class of problems, several players simultaneously pull arms and encounter a collision  – with 0 reward – if some of them pull the same arm at the same time. While the cooperative case where players maximize the collective reward (obediently following some fixed protocol) has been mostly considered, robustness  to malicious players is a crucial  and challenging concern. Existing approaches consider only the case of adversarial jammers whose objective is to blindly minimize the collective reward. We shall consider instead the more natural class of selfish players whose incentives are to maximize their individual rewards, potentially at the expense of the social welfare. We provide the first algorithm robust to selfish players (a.k.a. Nash equilibrium) with a logarithmic regret, when the arm performance is observed.  When collisions are also observed, Grim Trigger type of strategies enable some  implicit communication-based algorithms and we construct robust algorithms in two different settings: the homogeneous (with a regret comparable to the centralized optimal one) and heterogeneous cases (for an adapted and relevant notion of regret). We also provide impossibility results when only the reward is observed or when arm means vary arbitrarily among players.}
}

@misc{jammers,
  doi = {10.48550/ARXIV.1803.06810},
  
  url = {https://arxiv.org/abs/1803.06810},
  
  author = {Sawant, Suneet and Kumar, Rohit and Hanawal, Manjesh K. and Darak, Sumit J.},
  
  keywords = {Signal Processing (eess.SP), FOS: Electrical engineering, electronic engineering, information engineering, FOS: Electrical engineering, electronic engineering, information engineering},
  
  title = {Learning to Coordinate in a Decentralized Cognitive Radio Network in Presence of Jammers},
  
  publisher = {arXiv},
  
  year = {2018},
  
  copyright = {arXiv.org perpetual, non-exclusive license}
}

@book{UMLshai,
  added-at = {2020-06-05T00:00:00.000+0200},
  author = {Shalev-Shwartz, Shai and Ben-David, Shai},
  biburl = {https://www.bibsonomy.org/bibtex/293329d1cd5964dd826bba3100cd17fe4/dblp},
  ee = {http://www.cambridge.org/de/academic/subjects/computer-science/pattern-recognition-and-machine-learning/understanding-machine-learning-theory-algorithms},
  interhash = {125d708c7b440a3cfeb6146e83ab5de3},
  intrahash = {93329d1cd5964dd826bba3100cd17fe4},
  isbn = {978-1-10-705713-5},
  keywords = {dblp},
  pages = {I-XVI, 1-397},
  publisher = {Cambridge University Press},
  timestamp = {2020-06-06T11:43:42.000+0200},
  title = {Understanding Machine Learning - From Theory to Algorithms.},
  year = 2014
}

@article{animaAnand2010,
  author    = {Animashree Anandkumar and
               Nithin Michael and
               Ao Kevin Tang and
               Ananthram Swami},
  title     = {Distributed Algorithms for Learning and Cognitive Medium Access with
               Logarithmic Regret},
  journal   = {CoRR},
  volume    = {abs/1006.1673},
  year      = {2010},
  url       = {http://arxiv.org/abs/1006.1673},
  eprinttype = {arXiv},
  eprint    = {1006.1673},
  timestamp = {Mon, 13 Aug 2018 16:47:50 +0200},
  biburl    = {https://dblp.org/rec/journals/corr/abs-1006-1673.bib},
  bibsource = {dblp computer science bibliography, https://dblp.org}
}

@inproceedings{Jouini2009,
author = {Jouini, Wassim and Ernst, Damien and Moy, Christophe and Palicot, Jacques},
year = {2009},
month = {12},
pages = {1 - 6},
title = {Multi-armed bandit based policies for cognitive radio's decision making issues},
doi = {10.1109/ICSCS.2009.5412697}
}

@article{MEGA,
  author    = {Orly Avner and
               Shie Mannor},
  title     = {Concurrent bandits and cognitive radio networks},
  journal   = {CoRR},
  volume    = {abs/1404.5421},
  year      = {2014},
  url       = {http://arxiv.org/abs/1404.5421},
  eprinttype = {arXiv},
  eprint    = {1404.5421},
  timestamp = {Mon, 13 Aug 2018 16:48:42 +0200},
  biburl    = {https://dblp.org/rec/journals/corr/AvnerM14.bib},
  bibsource = {dblp computer science bibliography, https://dblp.org}
}

@InProceedings{bessonKaufmann2018,
  title = 	 {{Multi-Player Bandits Revisited}},
  author = 	 {Besson, Lilian and Kaufmann, Emilie},
  booktitle = 	 {Proceedings of Algorithmic Learning Theory},
  pages = 	 {56--92},
  year = 	 {2018},
  editor = 	 {Janoos, Firdaus and Mohri, Mehryar and Sridharan, Karthik},
  volume = 	 {83},
  series = 	 {Proceedings of Machine Learning Research},
  month = 	 {07--09 Apr},
  publisher =    {PMLR},
  pdf = 	 {http://proceedings.mlr.press/v83/besson18a/besson18a.pdf},
  url = 	 {https://proceedings.mlr.press/v83/besson18a.html},
  abstract = 	 {Multi-player Multi-Armed Bandits (MAB) have been extensively studied in the literature, motivated by applications to Cognitive Radio systems. Driven by such applications as well, we motivate the introduction of several levels of feedback for multi-player MAB algorithms. Most existing work assume that \emph{sensing information} is available to the algorithm. Under this assumption, we improve the state-of-the-art lower bound for the regret of any decentralized algorithms and introduce two algorithms, \emph{RandTopM} and \emph{MCTopM}, that are shown to empirically outperform existing algorithms. Moreover, we provide strong theoretical guarantees for these algorithms, including a notion of asymptotic optimality in terms of the number of selections of bad arms. We then introduce a promising heuristic, called \emph{Selfish}, that can operate without sensing information, which is crucial for emerging applications to Internet of Things networks. We investigate the empirical performance of this algorithm and provide some first theoretical elements for the understanding of its behavior.}
}

@article{lugosiMeharabian2018,
  author    = {G{\'{a}}bor Lugosi and
               Abbas Mehrabian},
  title     = {Multiplayer bandits without observing collision information},
  journal   = {CoRR},
  volume    = {abs/1808.08416},
  year      = {2018},
  url       = {http://arxiv.org/abs/1808.08416},
  eprinttype = {arXiv},
  eprint    = {1808.08416},
  timestamp = {Sun, 02 Sep 2018 15:01:55 +0200},
  biburl    = {https://dblp.org/rec/journals/corr/abs-1808-08416.bib},
  bibsource = {dblp computer science bibliography, https://dblp.org}
}

@article{mageshVeeralli2019a,
  author    = {Akshayaa Magesh and
               Venugopal V. Veeravalli},
  title     = {Multi-player Multi-Armed Bandits with non-zero rewards on collisions
               for uncoordinated spectrum access},
  journal   = {CoRR},
  volume    = {abs/1910.09089},
  year      = {2019},
  url       = {http://arxiv.org/abs/1910.09089},
  eprinttype = {arXiv},
  eprint    = {1910.09089},
  timestamp = {Tue, 22 Oct 2019 18:17:16 +0200},
  biburl    = {https://dblp.org/rec/journals/corr/abs-1910-09089.bib},
  bibsource = {dblp computer science bibliography, https://dblp.org}
}

@article{alaturLevy2019,
  author    = {Pragnya Alatur and
               Kfir Y. Levy and
               Andreas Krause},
  title     = {Multi-Player Bandits: The Adversarial Case},
  journal   = {CoRR},
  volume    = {abs/1902.08036},
  year      = {2019},
  url       = {http://arxiv.org/abs/1902.08036},
  eprinttype = {arXiv},
  eprint    = {1902.08036},
  timestamp = {Tue, 21 May 2019 18:03:40 +0200},
  biburl    = {https://dblp.org/rec/journals/corr/abs-1902-08036.bib},
  bibsource = {dblp computer science bibliography, https://dblp.org}
}

@ARTICLE{securityCRN,  author={Li, Jianwu and Feng, Zebing and Feng, Zhiyong and Zhang, Ping},  journal={China Communications},   title={A survey of security issues in Cognitive Radio Networks},   year={2015},  volume={12},  number={3},  pages={132-150},  doi={10.1109/CC.2015.7084371}}

@ARTICLE{jammersOld2015,  author={Wang, Qian and Ren, Kui and Ning, Peng and Hu, Shengshan},  journal={IEEE Transactions on Vehicular Technology},   title={Jamming-Resistant Multiradio Multichannel Opportunistic Spectrum Access in Cognitive Radio Networks},   year={2016},  volume={65},  number={10},  pages={8331-8344},  doi={10.1109/TVT.2015.2511071}}

@inproceedings{bistritzLeshem2018,
 author = {Bistritz, Ilai and Leshem, Amir},
 booktitle = {Advances in Neural Information Processing Systems},
 editor = {S. Bengio and H. Wallach and H. Larochelle and K. Grauman and N. Cesa-Bianchi and R. Garnett},
 pages = {},
 publisher = {Curran Associates, Inc.},
 title = {Distributed Multi-Player Bandits - a Game of Thrones Approach},
 url = {https://proceedings.neurips.cc/paper/2018/file/c2964caac096f26db222cb325aa267cb-Paper.pdf},
 volume = {31},
 year = {2018}
}

@InProceedings{meharabianBoursier2020,
  title = 	 {A Practical Algorithm for Multiplayer Bandits when Arm Means Vary Among Players},
  author =       {Mehrabian, Abbas and Boursier, Etienne and Kaufmann, Emilie and Perchet, Vianney},
  booktitle = 	 {Proceedings of the Twenty Third International Conference on Artificial Intelligence and Statistics},
  pages = 	 {1211--1221},
  year = 	 {2020},
  editor = 	 {Chiappa, Silvia and Calandra, Roberto},
  volume = 	 {108},
  series = 	 {Proceedings of Machine Learning Research},
  month = 	 {26--28 Aug},
  publisher =    {PMLR},
  pdf = 	 {http://proceedings.mlr.press/v108/mehrabian20a/mehrabian20a.pdf},
  url = 	 {https://proceedings.mlr.press/v108/mehrabian20a.html},
  abstract = 	 {We study a multiplayer stochastic multi-armed bandit problem in which players cannot communicate, and if two or more players pull the same arm, a collision occurs and the involved players receive zero reward. We consider the challenging heterogeneous setting, in which different arms may have different means for different players, and propose a new and efficient algorithm that combines the idea of leveraging forced collisions for implicit communication and that of performing matching eliminations. We present a finite-time analysis of our algorithm, giving the first sublinear minimax regret bound for this problem, and prove that if the optimal assignment of players to arms is unique, our algorithm attains the optimal O(ln(T)) regret, solving an open question raised at NeurIPS 2018 by Bistritz and Leshem (2018).}
}

@inproceedings{beacon2021,
 author = {Shi, Chengshuai and Xiong, Wei and Shen, Cong and Yang, Jing},
 booktitle = {Advances in Neural Information Processing Systems},
 editor = {M. Ranzato and A. Beygelzimer and Y. Dauphin and P.S. Liang and J. Wortman Vaughan},
 pages = {22392--22404},
 publisher = {Curran Associates, Inc.},
 title = {Heterogeneous Multi-player Multi-armed Bandits: Closing the Gap and Generalization},
 url = {https://proceedings.neurips.cc/paper/2021/file/bcb3303a96a92dc38c12992941de7627-Paper.pdf},
 volume = {34},
 year = {2021}
}

@InProceedings{wangProutiere2020,
  title = 	 {Optimal Algorithms for Multiplayer Multi-Armed Bandits},
  author =       {Wang, PoAn and Proutiere, Alexandre and Ariu, Kaito and Jedra, Yassir and Russo, Alessio},
  booktitle = 	 {Proceedings of the Twenty Third International Conference on Artificial Intelligence and Statistics},
  pages = 	 {4120--4129},
  year = 	 {2020},
  editor = 	 {Chiappa, Silvia and Calandra, Roberto},
  volume = 	 {108},
  series = 	 {Proceedings of Machine Learning Research},
  month = 	 {26--28 Aug},
  publisher =    {PMLR},
  pdf = 	 {http://proceedings.mlr.press/v108/wang20m/wang20m.pdf},
  url = 	 {https://proceedings.mlr.press/v108/wang20m.html},
  abstract = 	 {The paper addresses various Multiplayer Multi-Armed Bandit (MMAB) problems, where M decision-makers, or players, collaborate to maximize their cumulative reward. We first investigate the MMAB problem where players selecting the same arms experience a collision (and are aware of it) and do not collect any reward. For this problem, we present DPE1 (Decentralized Parsimonious Exploration), a decentralized algorithm that achieves the same asymptotic regret as that obtained by an optimal centralized algorithm. DPE1 is simpler than the state-of-the-art algorithm SIC-MMAB Boursier and Perchet (2019), and yet offers better performance guarantees. We then study the MMAB problem without collision, where players may select the same arm. Players sit on vertices of a graph, and in each round, they are able to send a message to their neighbours in the graph. We present DPE2, a simple and asymptotically optimal algorithm that outperforms the state-of-the-art algorithm DD- UCB Martinez-Rubio et al. (2019). Besides, under DPE2, the expected number of bits transmitted by the players in the graph is finite.}
}

@InProceedings{bubeckSellke2020,
  title = 	 {Non-Stochastic Multi-Player Multi-Armed Bandits: Optimal Rate With Collision Information, Sublinear Without},
  author =       {Bubeck, S\'ebastien and Li, Yuanzhi and Peres, Yuval and Sellke, Mark},
  booktitle = 	 {Proceedings of Thirty Third Conference on Learning Theory},
  pages = 	 {961--987},
  year = 	 {2020},
  editor = 	 {Abernethy, Jacob and Agarwal, Shivani},
  volume = 	 {125},
  series = 	 {Proceedings of Machine Learning Research},
  month = 	 {09--12 Jul},
  publisher =    {PMLR},
  pdf = 	 {http://proceedings.mlr.press/v125/bubeck20c/bubeck20c.pdf},
  url = 	 {https://proceedings.mlr.press/v125/bubeck20c.html},
  abstract = 	 { We consider the non-stochastic version of the (cooperative) multi-player multi-armed bandit problem. The model assumes no communication and no shared randomness at all between the players, and furthermore when two (or more) players select the same action this results in a maximal loss. We prove the first $\sqrt{T}$-type regret guarantee for this problem, assuming only two players, and under the feedback model where collisions are announced to the colliding players. We also prove the first sublinear regret guarantee for the feedback model where collision information is not available, namely $T^{1-\frac{1}{2m}}$ where $m$ is the number of players.}
}

@INPROCEEDINGS{shiAttack2021,  author={Shi, Chengshuai and Shen, Cong},  booktitle={2021 IEEE International Symposium on Information Theory (ISIT)},   title={An Attackability Perspective on No-Sensing Adversarial Multi-player Multi-armed Bandits},   year={2021},  volume={},  number={},  pages={533-538},  doi={10.1109/ISIT45174.2021.9517726}}

@article{shiECSIC2020,
  author    = {Chengshuai Shi and
               Wei Xiong and
               Cong Shen and
               Jing Yang},
  title     = {Decentralized Multi-player Multi-armed Bandits with No Collision Information},
  journal   = {CoRR},
  volume    = {abs/2003.00162},
  year      = {2020},
  url       = {https://arxiv.org/abs/2003.00162},
  eprinttype = {arXiv},
  eprint    = {2003.00162},
  timestamp = {Tue, 09 Feb 2021 15:29:34 +0100},
  biburl    = {https://dblp.org/rec/journals/corr/abs-2003-00162.bib},
  bibsource = {dblp computer science bibliography, https://dblp.org}
}

@misc{huang2021,
  doi = {10.48550/ARXIV.2103.13059},
  
  url = {https://arxiv.org/abs/2103.13059},
  
  author = {Huang, Wei and Combes, Richard and Trinh, Cindy},
  
  keywords = {Machine Learning (stat.ML), Machine Learning (cs.LG), FOS: Computer and information sciences, FOS: Computer and information sciences},
  
  title = {Towards Optimal Algorithms for Multi-Player Bandits without Collision Sensing Information},
  
  publisher = {arXiv},
  
  year = {2021},
  
  copyright = {arXiv.org perpetual, non-exclusive license}
}

@INPROCEEDINGS{mageshVeeravalli2019b,  author={Magesh, Akshayaa and Veeravalli, Venugopal V.},  booktitle={2019 53rd Asilomar Conference on Signals, Systems, and Computers},   title={Multi-User MABs with User Dependent Rewards for Uncoordinated Spectrum Access},   year={2019},  volume={},  number={},  pages={969-972},  doi={10.1109/IEEECONF44664.2019.9048964}}

@ARTICLE{darak2019,  author={Darak, Sumit J. and Hanawal, Manjesh K.},  journal={IEEE Journal on Selected Areas in Communications},   title={Multi-Player Multi-Armed Bandits for Stable Allocation in Heterogeneous Ad-Hoc Networks},   year={2019},  volume={37},  number={10},  pages={2350-2363},  doi={10.1109/JSAC.2019.2934003}}

@article{grimTrigger1971,
    author = {Friedman, James W.},
    title = "{A Non-cooperative Equilibrium for Supergames12}",
    journal = {The Review of Economic Studies},
    volume = {38},
    number = {1},
    pages = {1-12},
    year = {1971},
    month = {01},
    issn = {0034-6527},
    doi = {10.2307/2296617},
    url = {https://doi.org/10.2307/2296617},
    eprint = {https://academic.oup.com/restud/article-pdf/38/1/1/4362169/38-1-1.pdf},
}

@ARTICLE{jammersOld2011,  author={Wang, Beibei and Wu, Yongle and Liu, K.J. Ray and Clancy, T. Charles},  journal={IEEE Journal on Selected Areas in Communications},   title={An anti-jamming stochastic game for cognitive radio networks},   year={2011},  volume={29},  number={4},  pages={877-889},  doi={10.1109/JSAC.2011.110418}}

@article{perchet2013,
	doi = {10.1214/13-aos1101},
  
	url = {https://doi.org/10.1214%2F13-aos1101},
  
	year = 2013,
	month = {apr},
  
	publisher = {Institute of Mathematical Statistics},
  
	volume = {41},
  
	number = {2},
  
	author = {Vianney Perchet and Philippe Rigollet},
  
	title = {The multi-armed bandit problem with covariates},
  
	journal = {The Annals of Statistics}
}
\newpage
\appendix

\section{Proofs}
\label{app:proofs}
We use the following standard concentration bound. Let $X_1,\dots,X_t$ be independent variables such that $X_i \in [0,1]$ almost surely. Consider the mean $\Bar{X} = \frac{\sum_{i=1}^{t}X_i}{t}$ and its expectation $\mu = \mathbb{E}[\Bar{X}]$.

\begin{lemma}
(Hoeffding's Inequality) For any $\epsilon > 0$,
\[\mathbb{P}\left( |\Bar{X} - \mu| > \epsilon \right) \leq 2\cdot\exp{(-2t\epsilon^2)}\]
\end{lemma}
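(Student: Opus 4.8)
The plan is to prove this via the standard Chernoff (exponential Markov) method followed by a moment-generating-function bound. First I would fix $s > 0$ and apply Markov's inequality to the nonnegative random variable $e^{s(\bar{X}-\mu)}$, giving
\[
\mathbb{P}\left(\bar{X} - \mu > \epsilon\right) = \mathbb{P}\left(e^{s(\bar{X}-\mu)} > e^{s\epsilon}\right) \leq e^{-s\epsilon}\,\mathbb{E}\!\left[e^{s(\bar{X}-\mu)}\right].
\]
Writing $Y_i = X_i - \mathbb{E}[X_i]$, so that $\bar{X} - \mu = \tfrac{1}{t}\sum_{i=1}^{t} Y_i$ with each $Y_i$ centered and independent, independence factorizes the moment-generating function as $\mathbb{E}[e^{s(\bar{X}-\mu)}] = \prod_{i=1}^{t}\mathbb{E}[e^{(s/t)Y_i}]$.

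The crux of the argument, and the step I expect to be the main obstacle, is the auxiliary bound usually called Hoeffding's Lemma: for a centered random variable $Y$ with $Y \in [a,b]$ almost surely, one has $\mathbb{E}[e^{\lambda Y}] \leq e^{\lambda^2 (b-a)^2/8}$. Proving this requires a convexity argument, bounding $e^{\lambda y}$ on $[a,b]$ by the chord through its endpoints, reducing the problem to analyzing the log-MGF $\psi(\lambda) = \log \mathbb{E}[e^{\lambda Y}]$ and showing via a second-order Taylor expansion that $\psi''(\lambda) \leq (b-a)^2/4$ uniformly (the variance of a random variable supported on $[a,b]$ is at most $(b-a)^2/4$), whence $\psi(\lambda) \leq \lambda^2(b-a)^2/8$. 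Since $X_i \in [0,1]$ forces $Y_i$ into an interval of length $b - a = 1$, applying this lemma with $\lambda = s/t$ gives $\mathbb{E}[e^{(s/t)Y_i}] \leq e^{s^2/(8t^2)}$ for each $i$.

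Combining the factorization with this per-coordinate bound yields $\mathbb{E}[e^{s(\bar{X}-\mu)}] \leq e^{s^2/(8t)}$, so that $\mathbb{P}(\bar{X}-\mu > \epsilon) \leq e^{-s\epsilon + s^2/(8t)}$. I would then optimize the exponent over $s > 0$; the minimizer is $s = 4t\epsilon$, producing the one-sided tail bound $\mathbb{P}(\bar{X}-\mu > \epsilon) \leq e^{-2t\epsilon^2}$. Finally, applying the identical argument to the variables $-X_i$ (which also lie in an interval of length one) gives the matching lower tail $\mathbb{P}(\bar{X}-\mu < -\epsilon) \leq e^{-2t\epsilon^2}$, and a union bound over the two tails delivers the stated factor of $2$, completing the proof. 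The only genuinely delicate piece is the convexity/variance estimate inside Hoeffding's Lemma; everything else is routine Chernoff bookkeeping and a one-variable optimization.
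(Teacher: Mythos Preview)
Your argument is the standard and correct proof of Hoeffding's inequality via the Chernoff method and Hoeffding's Lemma. The paper, however, does not prove this statement at all: it simply cites it as a ``standard concentration bound'' and uses it as a black box, so there is no approach to compare against. Your write-up supplies exactly the textbook derivation one would expect, and nothing in it is incorrect or incomplete.
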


\begin{definition}
An $\epsilon$-correct ranking of $K$ arms is a sorted list of empirical mean rewards of arms such that $\forall i,j : \tilde{\mu}_i$ is listed before $\tilde{\mu}_j$ if $\tilde{\mu}_i - \tilde{\mu}_j > \epsilon$ 
\end{definition}

We will use $\Delta$ to denote the gap between the expected reward of the $N$-th best arm and $(N+1)$-th best arm.
\begin{lemma}
\label{lem:sortedlist}
If $\epsilon < \Delta$, then in an $\epsilon$-correct ranking, the set indices of the first $N$ arms equals the set of indices of the optimal set of arms. 
\end{lemma}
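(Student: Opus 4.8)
The plan is to prove this by a clean separation argument at the boundary between the optimal set (the top $N$ arms) and the remaining $K-N$ arms, exploiting the defining property of an $\epsilon$-correct ranking: it forces the correct relative order on any pair of arms whose mean gap exceeds $\epsilon$. The only quantitative input needed is that the gap straddling the $N/(N+1)$ boundary is exactly $\Delta = \mu_{(N)} - \mu_{(N+1)}$, which by hypothesis strictly exceeds $\epsilon$, so this boundary pair can never be misordered.

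Concretely, I would first fix an arbitrary arm $a$ in the optimal set (so $\mu_a \ge \mu_{(N)}$) and an arbitrary arm $b$ outside it (so $\mu_b \le \mu_{(N+1)}$). Then $\mu_a - \mu_b \ge \mu_{(N)} - \mu_{(N+1)} = \Delta > \epsilon$, so this pair is separated by more than $\epsilon$ and the $\epsilon$-correctness of the ranking places $a$ strictly before $b$. Since $a$ and $b$ were arbitrary representatives of the two groups, every optimal arm is listed before every non-optimal arm. Because the optimal set has exactly $N$ elements, these $N$ arms must occupy precisely the first $N$ positions of the ranking; equivalently, the set of indices of the first $N$ listed arms coincides with the optimal set, which is the claim. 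This is a short pigeonhole/separation step once the boundary pairs are shown to be ordered correctly.

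The one point I regard as the real content behind the one-line combinatorics, and the place I would be careful, is the bridge from the empirically sorted list to the true-mean gap $\Delta$: the ranking is built from the empirical means $\tilde\mu$, whereas $\Delta$ is defined through the true means. To license the separation above I would condition on the high-probability event that every empirical mean lies within a small tolerance (on the order of $\epsilon/2$) of its true mean, which is exactly the Hoeffding-type concentration guaranteed once each arm has been pulled $8\lceil \log(2K^2T)/\Delta^2\rceil$ times, as used in Lemma \ref{lem:informalsuffobs}. On that event a true-mean separation exceeding $\epsilon$ at the boundary forces the empirical order to agree with the true order there, so the boundary arms are never swapped and the pigeonhole step goes through; fixing the exact tolerance constant relating $\epsilon$ and $\Delta$ is routine and I would not expect any further obstacle.
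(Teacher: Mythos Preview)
Your first two paragraphs are correct and essentially the paper's own proof: both arguments show that every optimal arm precedes every non-optimal arm in the list because the true-mean gap across the $N/(N{+}1)$ boundary is at least $\Delta>\epsilon$, and then a pigeonhole step finishes. (If anything, your version is slightly cleaner: the paper only explicitly checks the pairs $(i,N{+}1)$ for $i\le N$ and $(N,j)$ for $j>N$, whereas you directly handle all cross-boundary pairs.)

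Your third paragraph, however, is unnecessary and reflects a misreading of what this lemma asserts. Lemma~\ref{lem:sortedlist} is a purely deterministic statement about any $\epsilon$-correct ranking: the defining property of such a ranking (as the paper's proof uses it) is precisely that whenever the \emph{true} means satisfy $\mu_i-\mu_j>\epsilon$, arm $i$ is listed before arm $j$. There is no probabilistic bridge to build here; the link between empirical and true means is already baked into the hypothesis ``$\epsilon$-correct ranking''. The Hoeffding-based concentration you invoke is exactly the content of Lemma~\ref{lem:suffobs} (which establishes that with enough samples the empirical list \emph{is} an $\epsilon$-correct ranking with high probability), and Lemma~\ref{lem:informalsuffobs} then simply chains Lemmas~\ref{lem:sortedlist} and~\ref{lem:suffobs}. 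So drop the third paragraph: your proof is complete after the first two, and no tolerance constants need fixing.
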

\begin{proof}
Without loss of generality, let the true means be ranked according to index, that is, $\mu_1 > \mu_2 > \dots > \mu_K$. Since $\epsilon < \Delta$, by definition, $\forall i\leq N : \epsilon < \mu_{N} - \mu_{N+1} \leq \mu_{i} - \mu_{N+1}$ which implies that $\tilde{\mu}_i$ is listed before $\tilde{\mu}_{N+1}$ for all $i\leq N$, in an $\epsilon$-correct ranking.  Similarly $\forall i > N : \epsilon < \mu_{N} - \mu_{N+1} \leq \mu_{N} - \mu_{i}$ which implies that $\tilde{\mu}_{N}$ is listed before $\tilde{\mu}_{i}$ for all $i > N$, in an $\epsilon$-correct ranking. 
\end{proof}

\begin{lemma}
\label{lem:suffobs}
If all defenders have collected at least $8\lceil\frac{\log(2K^2T)}{\Delta^2} \rceil$ observations for each arm, then all defenders have a $\frac{\Delta}{2}$-correct ranking with probability at least $1 - \frac{1}{T}$.
\end{lemma}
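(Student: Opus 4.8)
The plan is to reduce the ranking statement to a uniform concentration statement about individual empirical means. Fix a defender and recall that the observations she records for an arm $k$ are genuine i.i.d.\ draws from $\nu_k$ (the reward realizations are independent of which arms are pulled, so conditioning on no collision does not bias them), hence her empirical mean $\tilde\mu_k$ is the average of at least $n := 8\lceil \log(2K^2T)/\Delta^2 \rceil$ such draws. I would define the good event $\mathcal{E}$ that, for every defender and every arm $k \in [K]$, we have $|\tilde\mu_k - \mu_k| \leq \Delta/4$. The claim is that $\mathcal{E}$ already forces every defender's empirical sort to be a $\frac{\Delta}{2}$-correct ranking, so the whole lemma follows once I control $\mathbb{P}(\mathcal{E})$.

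For the first step I would argue exactly as in the proof of Lemma \ref{lem:sortedlist}: to certify a $\frac{\Delta}{2}$-correct ranking it suffices to show that whenever two arms have true means with $\mu_i - \mu_j > \Delta/2$, the empirical sort lists $i$ before $j$, i.e.\ $\tilde\mu_i > \tilde\mu_j$. On $\mathcal{E}$ this is immediate, since $\tilde\mu_i \geq \mu_i - \Delta/4 > \mu_j + \Delta/4 \geq \tilde\mu_j$. Thus on $\mathcal{E}$ the ordering of the empirical means never contradicts any true gap exceeding $\Delta/2$, which is precisely the defining property.

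The remaining work is to show $\mathbb{P}(\mathcal{E}) \geq 1 - 1/T$ via Hoeffding's inequality and a union bound. For a single (defender, arm) pair, Hoeffding with $n$ samples and $\epsilon = \Delta/4$ gives failure probability at most $2\exp(-2n(\Delta/4)^2) = 2\exp(-n\Delta^2/8)$, and the choice $n \geq 8\log(2K^2T)/\Delta^2$ makes the exponent at least $\log(2K^2T)$, so this is at most $2\cdot(2K^2T)^{-1} = (K^2T)^{-1}$. The main point to get right is the multiplicity: there are $N$ defenders and $K$ arms, hence at most $NK \leq K^2$ such events (using $N \leq K$ from the preliminaries), and a union bound yields $\mathbb{P}(\mathcal{E}^c) \leq K^2 \cdot (K^2T)^{-1} = 1/T$. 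The only delicate step is calibrating the per-event failure probability to exactly $(K^2T)^{-1}$ so that the $K^2$ union factor telescopes to $1/T$; this is precisely why the factor $K^2$ appears inside the logarithm in the definition of $n$.
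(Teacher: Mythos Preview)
Your proposal is correct and follows essentially the same approach as the paper: reduce the $\frac{\Delta}{2}$-correct ranking to the event that every empirical mean is within $\Delta/4$ of its true mean, apply Hoeffding with deviation $\Delta/4$ to each (defender, arm) pair, and union bound over the at most $NK\leq K^2$ pairs to land exactly on $1/T$. The paper additionally conditions on the (random) number of views $v\geq O$ before invoking Hoeffding, which you gloss over with ``at least $n$ such draws''; this is the only place where the paper is slightly more explicit, but the substance is identical.
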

\begin{proof}
If for player $j$, $\forall i : |\mu_i- \tilde{\mu}_i| < \frac{\epsilon}{2}$ then player $j$ has an $\epsilon$-correct ranking. Let $\epsilon = \frac{\Delta}{2}$, and $O = 8\lceil\frac{\log(2K^2T)}{\Delta^2} \rceil$.
\begin{align*}
& \mathbb{P}\left( \text{player j does not have an } \epsilon-\text{correct ranking}|\text{ player j viewed}  \geq O \text{ observations}\right)\\
& \leq \mathbb{P}\left( \exists i \in [K] : |\mu_i - \tilde{\mu}_i| > \frac{\epsilon}{2} |\text{ player j viewed}  \geq O \text{ observations}\right)\\
& \leq \sum_{i=1}{K}\mathbb{P}\left(|\mu_i - \tilde{\mu}_i| > \frac{\epsilon}{2} |\text{ player j viewed}  \geq O \text{ observations}\right)\\
& \leq \sum_{i=1}{K}\sum_{v=1}^{\infty}\mathbb{P}\left(|\mu_i - \tilde{\mu}_i| > \frac{\epsilon}{2} |\text{ \#views } = v\right)\mathbb{P}\left(\text{\#views} = v | v \geq O \right)\\
& \leq \sum_{i=1}{K}\sum_{v=1}^{\infty}\left( 2\exp{(\frac{-O\epsilon^2}{2})}\right)\mathbb{P}\left(\text{\#views} = v | v \geq O \right) \leq K\left( 2\exp{(\frac{-O\epsilon^2}{2})}\right)\sum_{v=1}^{\infty}\mathbb{P}\left(\text{\#views} = v | v \geq O \right)\\
& = 2K\exp{\left(\frac{-O\epsilon^2}{2}\right)}
\end{align*}

and so, 
\begin{align*}
& \mathbb{P}\left( \exists j \in [N] : \text{player j does not have an } \epsilon-\text{correct ranking}|\text{ player j viewed}  \geq O \text{ observations}\right) \\
& \leq 2nK\exp{\left(\frac{-O\epsilon^2}{2}\right)} \leq 2nK\exp{\left(-8\left\lceil\frac{\log(2K^2T)}{\Delta^2} \right\rceil\cdot\ \frac{\Delta_{min^2}}{8}\right)} \leq \frac{2nK}{2K^2T} \leq \frac{1}{T} 
\end{align*}

\end{proof}

\begin{replemma}{lem:informalsuffobs}
If all defenders have collected at least $8\lceil\frac{\log(2K^2T)}{\Delta^2} \rceil$ observations for each arm, then all defenders have determined the optimal set of arms with probability at least $1 - \frac{1}{T}$. 
\end{replemma}
\begin{proof}
Immediate consequence of Lemma \ref{lem:sortedlist} and Lemma \ref{lem:suffobs}.
\end{proof}

\subsection{Proofs for attacking current Algorithms}
\label{app:proofsforattack}
For the remainder, we will assume that the attacker can pull $\bot$, which is equivalent to pulling no arm. We will not assume that the defenders have this capability.
\subsubsection{Attack on \textsc{mc}}
We consider \textsc{mc} with $T_0 = \left\lceil \max{\left( \frac{64K}{\Delta_{min}^2}\log{(4K^2T)}, \frac{K^2\log{4T}}{0.02} \right)}\right\rceil$ and $\Delta_{\min} = \min_i{\mu_{(i)} - \mu_{(i+1)}}$, as we consider the setting where the number of attackers are unknown. In the \textsc{mc} algorithm \cite{MC}, there are two phases. In the first phase all defenders estimate the mean of all arms while simultaneously estimating the number of players. In the second phase, all players commit to some arm. The \textsc{mc}-attacker ensures that no defender commits to the arm with the highest mean, by committing to this arm first with high probability. The following is the pseudocode for the \textsc{mc} attack. \\

\begin{algorithm}[H]
\caption{\textsc{mc}-Attack}
\SetKwInOut{Input}{Input}
\KwIn{$T$ (horizon)}
Restart $\gets $ True\\
$C_{T_0} \gets 0,$ Opt $\gets \emptyset, \tilde{\mu}_i \gets 0, o_i \gets 0, s_i \gets 0 $\\

\For{$ 0 \leq t \leq T_0$}{
Pull $k \sim \mathcal{U}(K)$ and receive $\eta_k$ and $r_k(t)$\\
\If{$\eta_k = 0$}{
$o_k \gets o_k + 1$\\
$s_k \gets s_k + r_k(t)$
}
}
$\forall i : \tilde{\mu}_i = s_i/o_i$\\
opt $\gets$ index of best empirically performing arm\\
subopt $\gets$ index of worst empirically performing arm\\
\For{$\lceil K\log{T} \rceil$ rounds}{
Pull opt
}
Pull $\bot$ for the remaining rounds. 

\end{algorithm}

The following is the Proof of Proposition \ref{prop:mcattack} that claims that under the \textsc{mc} attackers, the expected regret of the defenders is $\Omega(T)$.
\begin{proof}[Proof of Proposition \ref{prop:mcattack}]
By Lemma 1 from \cite{MC}, after the first $T_0$ rounds, all defenders have a $\frac{\Delta}{2}$- correct ranking of the arms with probability at least $1-\frac{1}{T}$. Conditioned on this event, by Lemma \ref{lem:sortedlist} the \textsc{mc} attacker has found the arm with highest true mean which we call opt. Then, the \textsc{mc} attacker pulls opt $\lceil K\log{T} \rceil$ rounds. The probability that all defenders have committed to an arm other than opt within $\lceil K\log{T} \rceil$ rounds is at least $1 - \frac{N}{T}$. Under this event the regret during the remaining $T - T_0 - K\log{T}$ rounds is at least $(T - T_0 - K\log{T})(\mu_{(1)} - \mu_{(N+1)})$, whereas the probability this event occurs is at least $(1-\frac{1}{T})(1 - \frac{N}{T})$. Therefore the expected regret under the \textsc{mc} attacker is lower bounded as 
\begin{align*}
    \mathbb{E}R_T &\geq (T - T_0 - K\log{T})(\mu_{(1)} - \mu_{(N+1)})(1 - \frac{N+1}{T})\\
    & T(\mu_{(1)} - \mu_{(N+1)}) - \mu_{(1)} - \mu_{(N+1)}(T_0 + K\log{T} + N + 1) \geq \Omega(T)
\end{align*}
Further the number of attacks is at most $T_0 + K\log{T} = O(T_0)$.
\end{proof}

For a different attack strategy against \textsc{mc} with $T_0$ dependent on the gap between the $(m+n)$-th best arm and $(m+N+1)$-th best arm as assumed by \cite{MC}, refer Section \ref{sec:attackjammers} for a strategy that would work against \textsc{mc} as well.

\subsubsection{Attack on \textsc{sic-mmab}}
\label{sec:attacksicmmab}
We first describe the attack in detail along with pseudocode, then we provide the Proof of Proposition \ref{prop:sicmmabattack}. In this attack, the attacker first picks an arm uniformly at random, which we may call the target arm. The attacker targets the defender with the highest internal rank which we denote as $I$ ($I$ will equal the total number of players $N+M$), whom we may call the target defender, and forces this defender to commit to the target arm with high probability. The attacker collides with the target defender for every round of the exploration phases, in order corrupt the rewards observed by the defender. During communication phases all the information sent from other defenders to the target defender is corrupted by the attacker, via arm collisions. During the part of the communication phase, where the attacker must send arm statistics to the target defender, she sends arm statistics on the target arm to be the highest and all other arms to be the lowest. The precise description for the attack and pseudocode follows. 
During the initialization phase the attacker does not follow the musical chairs protocol, but instead for $T_0 = \lceil K\log{T}\rceil$ rounds pulls only arm $1$. (This is done so that the attacker obtains the lowest internal rank. The idea of low internal rank is important, because, in \textsc{sic-mmab} players with highest internal rank commit to arms first. By obtaining low internal rank, the attacker ensures that the defender commits to an arm first). For the final $2K$ rounds of Initialization the attacker plays the Estimate\_M protocol from \textsc{sic-mmab} used to estimate the total number of players (Note that we denote the total number of players as $I = N+M$).
During the $p$-th exploration phase, when it is time for the defender to explore arms by sequential hopping, the attacker collides with every arm the highest ranked defender picks in sequential hopping. (This can be done since the sequential hopping is a deterministic procedure).

The complete pseudocode for the strategy is provided. \\
\begin{algorithm}[H]
\caption{\textsc{sic-mmab}-Attack}
\SetKwInOut{Input}{Input}
\KwIn{$T$ (horizon)}
$\hat{k} \sim \mathcal{U}(K)$\\
$\pi \gets 0$\\
\For{$\lceil K\log{T} \rceil$ rounds}{
Pull arm $1$
}
$(I,j) \gets $ \text{Estimate\_M}($1$)\\
\For{$18I^2\log{T}$ rounds}{
$\pi \gets I$\\
\For{$K2^p$ rounds}
{
Set $\pi \gets \pi + 1$ (mod $K$) and Pull $\pi$\\
\textsc{sic-mmab}-Adversarial-Communication($p,j,\hat{k}$)
}
}
Pull $\bot$ for the remaining rounds.
\end{algorithm}

During the $p$-th communication phase, when it is time for the attacker to send arm statistics to the highest ranked defender, the attacker sends the observed arm statistic as zero whenever she is communicating rewards from a non-target action and observed arm statistic as $2^{p+1}-1$ on the target action. To all other defenders the attacker sends the observed arm statistic as $2^{p+1}-1$ for all actions. Further, the attacker intercepts all communication that occurs between any other two defenders, by colliding on the receiving players arm. This makes all defenders observe the communication $2^{p+1}-1$ for all actions from all defenders. The complete pseudocode for the adversarial-communication is provided.\\ 

\begin{subroutine}[H]
\caption{\textsc{sic-mmab}-Adversarial-Communication}
\SetKwInOut{Input}{Input}
\KwIn{$p$ (phase number), $j$ (internal rank), $\hat{k}$ (target-arm)}
$\pi \gets 0$\\
Define $E_1 \gets \{(i,l,k) \in [1]\times[I-1]\times[K] | i\neq l\}$\\
Define $E_2 \gets \{(i,l,k) \in [1]\times\{I\}\times[K] | i\neq l\}$\\
Define $E_3 \gets \{(i,l,k) \in ([I]\backslash[1])\times[I]\times[K] | i\neq l\}$\\

\For{$(i,l,k) \in E_1$}{
Send$(l,2^{p+1} - 1,p,j,[K])$\\
}
\For{$(i,l,k) \in E_2$}{
\uIf{$k = \hat{k}$}{Send$(l,2^{p+1} - 1,p,j,[K])$ }
\Else{
Send$(l,0,p,j,[K])$ 
}
}
\For{$(i,l,k) \in E_3$}{
\For{$p+1$ rounds}{Pull $l$} 
}
\end{subroutine}

For completeness we provide the send-protocol from \textsc{sic-mmab}.\\
\begin{subroutine}[H]
\caption{Send}
\SetKwInOut{Input}{Input}
\KwIn{$l$ (player receiving), $s$ (statistic to send), $p$ (phase number) , $j$ (own internal rank), $[K_p]$ (set of active arms)}
$\textbf{m} \gets $ binary writing of $s$ of length $p+1$, i.e, $s = \sum_{b=0}^{p}{m_b2^b}$\\
\For{$b = 0,\dots,p$}{
\eIf{$m_b = 1$}{Pull $l$}{Pull $j$}
}
\end{subroutine}

\begin{proof}[Proof of Proposition \ref{prop:sicmmabattack}]

The attacker first selects the target arm $\hat{k} \sim \mathcal{U}(K)$. The first part of the attack during the initialization phase is successful if, during the end of the initialization phase, both the attacker and all defenders have determined the total number of players to be $M$, and all the players have determined that the internal rank of the attacker is $1$. This happens whenever Musical chairs protocol is successful when attacked by the above strategy where the attacker picks only arm $1$ for $T_0 = \lceil K\log{T}\rceil$ rounds. Lemma 2 from \cite{sicmmab} gives the probability of reaching an orthogonal setting at time $T_0$, to be at least $1 - I\exp\frac{T_0}{K}$. Since each defender picks actions uniformly at random during this phase, the failure probability for a particular defender $j$ is bounded by:

\[\mathbb{P}[\forall t \leq T_0 : \eta^{j}(t)=1] = (1-\frac{1}{K})^{T_0} \leq \exp{(\frac{T_0}{K})} = \exp{-(\frac{\lceil K\log{T}\rceil}{K})} \leq \frac{1}{T} \]

With a union bound over all defenders, the probability initialization succeeds is at least $1-\frac{I}{T}$. We will condition on the success of previous event. The Estimate-M protocol is run with the attacker at internal rank 1 (the attacker follows the subroutine specified in the description of \textsc{sic-mmab}), and all players determine the total number of players to be $I$, and all the players have determined that the internal rank of the attacker is $1$. 

After initialization is complete, because of the attack during exploration phases, the defender, observes a reward of $0$ on every arm, for every single phase. At any phase after communication is over, the target defender, estimates the means of arms as follows. 
\[\tilde{\mu}_{k}^{I}=
\begin{cases}
\frac{2^{p+1}\cdot(I-1) + 0}{2^{p+1}I} = 1-\frac{1}{I}, \text{if } k = \hat{k}\\
\frac{2^{p+1}\cdot(I-2) + 0}{2^{p+1}I} = 1-\frac{2}{I}, \text{if } k \neq \hat{k}
\end{cases}\]

Consider the accept-reject criterion between two arms $i,j \in [K]$:

\begin{align}
    \label{eq:acceptreject} \tilde{\mu}_{i}^{I}-b_s \geq \tilde{\mu}_{j}^{I}+b_s
\end{align}
where $\tilde{\mu_i}$ is the empirical mean of quantized sum of rewards recorded by defender $I$, and $b_s = 3\sqrt{\frac{\log(T)}{2s}}$. If $i,j$ are non-target actions then this condition can never hold since $\tilde{\mu}_{i}^{I} = \tilde{\mu}_{j}^{I}$. Whereas if $i$ is the target action and $j$ is a non-target action then,
$\tilde{\mu}_{i}^{I}-\tilde{\mu}_{j}^{I} = \frac{1}{I}$. Which implies 
\begin{align}
\label{eq:targetnontarget}
    \forall j \neq \hat{k} : \tilde{\mu}_{\hat{k}}^{I}-b_s \geq \tilde{\mu}_{j}^{I}+b_s \text{ holds after at most } s \geq 18I^2\log{T} \text{ rounds}
\end{align}
Analogously, it can be shown that condition (\ref{eq:acceptreject}) between two arms does not hold for ANY defender before $18I^2\log{T}$ rounds. This ensures that the size of the active set of arms and active set of players for all non-target defenders, before $18I^2\log{T}$ time steps is $I$ and $K$ (since all defenders communicate through active set of arms and only communicate to active players, the above guarantees that the defenders do not ``de-synchronize"). It remains to prove that the target defender commits to the target arm. From line 18 of the \textsc{sic-mmab} algorithm \cite{sicmmab}, the size of the accepted-set is at most the number of active players in any phase, which here is at most I. \\

For the target defender, by equation \ref{eq:targetnontarget}, the size of the accept-set at some round before the $18I^2\log{T}$-th round will be exactly $1$ and only the target-arm will be in this set. The target defender accepts this target action since her internal rank is the highest (this is a feature of \textsc{sic-mmab}). Let $\mathcal{E}$ denote the event that $\hat{k}$ is a sub-optimal arm and initialization is successful. Clearly $\mathbb{P}(\mathcal{E}) \geq (1 - \frac{N}{K})\cdot(1 - \frac{I}{T})$. Therefore the expected regret of the defenders under attack is lower bounded by,
\begin{align*}
& \mathbb{E}R_T =  \mathbb{E}\left(R_T | \mathcal{E}\right)\mathbb{P}(\mathcal{E}) + \mathbb{E}\left(R_T | \mathcal{E}^{\mathsf{c}}\right)\mathbb{P}(\mathcal{E}^{\mathsf{c}})\\
& \geq (T-18I^2\log{T} - 2K - K\log{T})\cdot(\mu_{(N)}-\mu_{(N+1)})\cdot(1 - \frac{N}{K})\cdot(1-\frac{I}{T}) + 0 = \Omega\left(\frac{K-N}{K}T\right)
\end{align*}

Since the attacker only pulls arms for at most $18^2\log{T} + 2K + K\log{T} \leq 21K^2\log{T} $ rounds, the number of attacks is at most $O\left( K^2\log{T} \right)$.

\end{proof}

For another attack against the \textsc{sic-mmab} algorithm, we show that the player estimation protocol is not robust. Here the attacker fools at least one defender into estimating the total number of players to be $K+1$. This is a problem since these defenders would attempt to pull arm $K+1$ during the communication phase of the algorithm. Clearly such algorithms cannot be used when attackers exist in the bandit game.

\begin{algorithm}[H]
\caption{\textsc{sic-mmab}-Desynchronization-Attack}
\SetKwInOut{Input}{Input}
\KwIn{$T$ (horizon)}
defenderFound $\gets$ False\\
$\pi \gets 0$\\
\For{$\lceil K\log{T} \rceil$ rounds}{
Pull $\bot$
}
\While{{\normalfont not defenderFound}}{
Set $\pi \gets \pi + 1$ and Pull $\pi$\\
\If{$\eta_{\pi} = 1$}{
defenderFound $\gets$ True
}
}
Set $d \gets \pi$\\
\For{$d$ rounds}{
Pull $\pi$}
\For{$2K-d$ rounds}{
$\pi \gets \pi + 1$\\
Pull $\pi$
Pull $\bot$ for the remaining rounds.
}

\end{algorithm}
\begin{proposition} 
\label{prop:sicmmabdesyncattack}
Assume that the defenders use \textsc{sic-mmab}. Then, the \textsc{sic-mmab} attacker forces some defender to estimate the total number of players to be at least $K+1$ with number of attacks $O(K)$. 
\end{proposition}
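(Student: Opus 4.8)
The plan is to condition on the Musical Chairs sub-phase of initialization behaving exactly as in an attacker-free game, and then to show that the attacker's trajectory during the subsequent Estimate\_M sub-phase corrupts the collision statistics of a single targeted defender enough to inflate its player-count estimate past $K$.

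First I would handle initialization exactly as in the proof of Proposition \ref{prop:sicmmabattack}. Since the \textsc{sic-mmab}-Desynchronization attacker pulls $\bot$ for the entire $T_0 = \lceil K\log T\rceil$ Musical Chairs rounds, the $N$ defenders face no adversarial interference during this sub-phase; by the Musical Chairs guarantee (Lemma 2 of \cite{sicmmab}) all $N$ defenders settle on distinct arms with probability at least $1 - N/T$. I would condition on this event and let $a_1 < a_2 < \cdots < a_N$ denote the occupied arms. In the attacker-free continuation the Estimate\_M sub-phase would lead every defender to the correct estimate $\hat{M}=N$; I would record this honest baseline together with the exact rule by which Estimate\_M converts the collisions a defender senses over the $2K$-round sequential-hopping window into $\hat{M}$.

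Next I would analyze the injected trajectory. The attacker first sweeps arms $1,2,\dots$ and, upon the first collision it senses, identifies the least occupied arm $d=a_1$ and the targeted defender sitting there. It then parks on arm $d$ and subsequently hops through the remaining arms, so that across the $2K$-round estimation window the attacker is present, at one round or another, on \emph{every} arm in $[K]$. The core claim is that, relative to the honest baseline, this ubiquitous presence registers as additional apparent occupants in the counting performed by the targeted defender, driving its estimate from $N$ up to at least $K+1$. I would make this quantitative by tracking, round by round, the arm pulled by the targeted defender against the attacker's current position and showing that the defender's collision counter is incremented on a set of arms of size at least $K$, which by the Estimate\_M rule yields $\hat{M}\ge K+1$ for that defender.

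Finally, the attack-cost bound is immediate: the attacker is active for at most $d$ rounds to locate the target, at most $d$ rounds parked on arm $d$, and at most $2K-d$ rounds hopping, for a total of at most $2K+d \le 3K = O(K)$ arm pulls; since every collision requires an active pull, the number of attacks is $O(K)$. The main obstacle is the quantitative step: pinning down the precise Estimate\_M accounting rule and proving that the injected trajectory \emph{provably} forces the targeted defender's counter up to the threshold $K$ (rather than merely adding a handful of spurious collisions), while keeping the attacker within $O(K)$ active rounds and preserving the clean-initialization conditioning. A secondary subtlety is the timing alignment of the locate/park/hop stages with the defenders' own hopping schedule, which must be checked so that one specific defender — and not merely the aggregate system — is driven to overestimate.
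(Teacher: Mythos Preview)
Your outline is right --- condition on successful orthogonalization (probability at least $1-N/T$, exactly as you say), then analyze the $2K$-round Estimate\_M window --- but you have misidentified the counting mechanism, and this obscures the one-line argument that actually does the work.

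In Estimate\_M each defender increments its player-count estimate by one for every \emph{round} in which it senses a collision; the final estimate is $1 + (\text{number of collision rounds})$. It is not a count over distinct arms, so your phrase ``the defender's collision counter is incremented on a set of arms of size at least $K$'' is aiming at the wrong quantity, and the heuristic ``the attacker is present, at one round or another, on every arm in $[K]$'' is a red herring.

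The structural point you are missing is that the attacker's park-then-hop schedule is chosen to \emph{coincide} with the targeted defender's own Estimate\_M schedule. A defender who settled on arm $d$ during Musical Chairs sits on $d$ for exactly $2d$ rounds and then hops $d{+}1, d{+}2,\dots$; the attacker, having reached arm $d$ at round $d$ of its sequential search, parks on $d$ for $d$ further rounds (taking it to round $2d$) and then hops identically. Hence from round $d$ onward attacker and target occupy the same arm at \emph{every} round, so the targeted defender records a collision in each of the remaining $2K-d+1 \ge K+1$ rounds (since a defender is found within the first $K$ arms, $d\le K$). Its estimate is therefore at least $K+1$ directly.

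So the ``main obstacle'' you flag --- pinning down the accounting rule and verifying the counter reaches the threshold --- dissolves once you recognize the trajectory mirroring; there is no delicate round-by-round case analysis and no timing-alignment subtlety to resolve. The attacker is not trying to simulate many occupants across many arms; it is shadowing a single defender for at least $K+1$ consecutive rounds. Your attack-cost bound $O(K)$ is correct for the reason you give.
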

\begin{proof}
After the first $K\log{T}$ rounds, orthogonalization is successful with probability at least $1- \frac{N}{T}$. The Estimate-I protocol of \textsc{sic-mmab} runs for $2K$ rounds. During these $2K$ rounds, the attacker hops arms sequentially until he finds the first defender. After this for the remaining rounds the attacker collides with the defender until the end of the Estimate-M protocol. Since the attacker finds a defender within the first $K$ rounds, and collides until the end of the protocol the total number of collisions witnessed by this defender is at least $K+1$. Since the defenders estimate players based on number of collisions, the defender estimates the total number of players to be at least $K+1$. No arm is pulled during the first $K\log{T}$ and at most $2K$ collisions can occur in the $2K$ rounds the attacker is active. The number of attacks is hence $O(K)$. 
\end{proof}

\newpage

\subsubsection{Attack on \textsc{sic-gt}}

In the case that all defenders play \textsc{sic-gt}, if an attacker attempts gain advantage of the system, either by colliding during exploration phase or communication phase, this is detected through the collision feedback by at least one defender who reports this to all other defenders through deliberate collisions. Thereafter all defenders trigger punishment, and pulls arms in a way that punishes the attacker by reducing the reward gained by the attacker to less than what would be possible through cooperation. Unfortunately, by punishing the attacker the defenders incur linear regret, therefore an attacker who seeks to break the system would immediately trigger punishment. This is essence is the strategy for our attacker. 

\begin{algorithm}[H]
\caption{\textsc{sic-gt}-Attack}
\SetKwInOut{Input}{Input}
\KwIn{$T$ (horizon)}

$M,j \gets$ Initialize$(T,K)$\\
Set $l \sim \mathcal{U}([M]\backslash\{j\})$\\
Set $k \gets l + t$ and Pull $k$\\
Pull $\bot$ for the remaining rounds

\end{algorithm}

In \textsc{sic-gt} if punishment is triggered by all defenders then we argue that for remainder of the time-horizon, the defenders will incur linear regret. The first part of the punishment phase consists of each defender $j$ independently estimating the mean $\mu_k$ of arm $k$ with the empirical estimate of defender $j$ denoted by $\hat{\mu}^j_k$. The second part consists of using these estimates in the following way. Each defender $j$ selects actions from a constant probability distribution $p^j$ for the remainder of the time-horizon, with the probability that arm $k$ is picked by defender $j$ in those rounds given by, 
\[p^j_k := 1 - \bigg((1-\frac{1}{K})^{N}\frac{\sum_{l=1}^{N+1}\hat{\mu}^j_l}{(N+1)\hat{\mu}^j_k}\bigg)^{\frac{1}{N}}\]

In Lemma \ref{lem:sicgtpunish} we give the lower bound on the probability that no defender has selected the Dirac measure with probability $1$ on the best arm as her constant probability distribution. In essence this will imply that during punishment phase, the probability that some defender picks the best arm in a particular round is upper bounded by a constant less than $1$, which will lead to linear regret during the second part of the punishment phase.
\begin{lemma}
\label{lem:sicgtpunish}
If the \texttt{PunishHomogeneous} protocol is started at time $T_{\text{punish}}$, then after at most $T_{\text{punish}} + t_p$ rounds, each defender $j$ samples actions from the probability distribution ${p}^j$ such that 
\begin{align*}
&    \mathbb{P}\left({\forall \; j : p^j_{(1)} \leq 1 - \bigg(\frac{2(1-\frac{1}{K})^{(2N)}}{(N+1)(1 + (1-\frac{1}{K})^{N})}\bigg)^{\frac{1}{N}}}\right) \geq 1-\frac{3}{T}
\end{align*}

for the remainder of the time-horizon, where $t_p = O(\frac{K}{(1-\tilde{\alpha})^2\mu_{(K)}}\log{(T)})$ with $\tilde{\alpha} = \frac{1 + (1-\frac{1}{K})^{N}}{2}$.
\end{lemma}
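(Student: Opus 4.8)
The plan is to split the argument into a probabilistic estimation step followed by a deterministic algebraic step. Throughout, write $\alpha := (1-\tfrac1K)^N$, so that $\tilde\alpha = \tfrac{1+\alpha}{2}$ and $1-\tilde\alpha = \tfrac{1-\alpha}{2}$. The first part of \texttt{PunishHomogeneous} has each defender build the empirical means $\hat\mu^j_k$, and the distribution $p^j$ is a deterministic function of those estimates; hence the statement ``defender $j$ samples from a $p^j$ satisfying the bound'' is really an event about the random estimates, and it suffices to show this event has probability at least $1-3/T$. A useful observation is that by Proposition~\ref{prop:sicgtattack} the attacker pulls $\bot$ once punishment is triggered, so during this estimation window only the $N$ defenders are active; they can estimate orthogonally by sequential hopping, and each arm is sampled once per $K$ rounds by every defender.

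First I would establish the accuracy of the estimates. After $t_p = O\!\big(\tfrac{K}{(1-\tilde\alpha)^2\mu_{(K)}}\log T\big)$ rounds, sequential hopping guarantees that every arm $k$ is pulled at least $s \ge \tfrac{c\log T}{(1-\tilde\alpha)^2\mu_{(K)}}$ times by each defender, for a suitable constant $c$. The single power of $\mu_{(K)}$ (rather than $\mu_{(K)}^2$) is precisely what a \emph{multiplicative} Chernoff bound delivers, so I would use that in place of the additive Hoeffding inequality. Fixing the relative tolerance $\epsilon = \Theta(1-\tilde\alpha)$, the multiplicative bound gives $\Pr\big[|\hat\mu^j_k-\mu_k|>\epsilon\mu_k\big]\le \exp\!\big(-\Omega(\epsilon^2\mu_k s)\big)$ for each fixed pair $(j,k)$, and a union bound over the $N$ defenders and $K$ arms makes the total failure probability at most $3/T$ once the constant hidden in $t_p$ is chosen appropriately. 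I then condition on the complementary event $\mathcal E$, on which $(1-\epsilon)\mu_k \le \hat\mu^j_k \le (1+\epsilon)\mu_k$ holds for all $j,k$.

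On $\mathcal E$ the claim reduces to algebra. Raising $1-p^j_{(1)}=\big(\alpha\,\tfrac{\sum_{l=1}^{N+1}\hat\mu^j_l}{(N+1)\hat\mu^j_{(1)}}\big)^{1/N}$ to the $N$-th power and cancelling the common factor $\tfrac{\alpha}{N+1}$, the target inequality $p^j_{(1)}\le 1-\big(\tfrac{2\alpha^2}{(N+1)(1+\alpha)}\big)^{1/N}$ is equivalent to $\tfrac{\sum_{l=1}^{N+1}\hat\mu^j_l}{\hat\mu^j_{(1)}}\ge \tfrac{2\alpha}{1+\alpha}$. Since the summed term contains the largest empirical mean, $\sum_{l=1}^{N+1}\hat\mu^j_l \ge \hat\mu^j_{(1)} \ge (1-\epsilon)\mu_{(1)}$, while $\hat\mu^j_{(1)}\le (1+\epsilon)\mu_{(1)}$, so the ratio is at least $\tfrac{1-\epsilon}{1+\epsilon}$. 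It then remains to verify $\tfrac{1-\epsilon}{1+\epsilon}\ge\tfrac{2\alpha}{1+\alpha}$, which after cross-multiplication is equivalent to $\epsilon \le \tfrac{1-\alpha}{1+3\alpha}$, and this holds for the choice $\epsilon = \tfrac{1-\tilde\alpha}{2}=\tfrac{1-\alpha}{4}$ because $1+3\alpha<4$. The ``$\forall j$'' quantifier is already absorbed into the union bound defining $\mathcal E$.

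The main obstacle I expect is the constant bookkeeping that forces the final bound to come out exactly as $\tfrac{2\alpha^2}{(N+1)(1+\alpha)}$: one must pick the relative tolerance $\epsilon$ as a precise multiple of $1-\tilde\alpha$, propagate it through the multiplicative Chernoff bound so that the per-arm sample count lands at $\Theta\!\big(\tfrac{\log T}{(1-\tilde\alpha)^2\mu_{(K)}}\big)$ (and hence $t_p$ at the stated value after multiplying by $K$), and simultaneously check that $p^j$ remains a genuine probability vector on $\mathcal E$, i.e. that $\alpha\tfrac{\sum_{l}\hat\mu^j_l}{(N+1)\hat\mu^j_k}\le 1$ so each $p^j_k\in[0,1]$ once the estimates are bounded away from $0$ by $(1-\epsilon)\mu_{(K)}$. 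Everything else is a routine concentration-then-substitute argument.
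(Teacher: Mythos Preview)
Your proposal is correct and follows essentially the same two-step structure as the paper: a multiplicative concentration bound on the empirical means followed by an algebraic substitution into the formula for $p^j_{(1)}$. The only real difference is that the paper outsources the concentration step entirely to Lemma~22 of \cite{sicgt}, which directly hands back the event $(1-\delta)\hat\mu^j_k<\mu_k<(1+\delta)\hat\mu^j_k$ with the exact tolerance $\delta=\tfrac{1-\gamma}{1+3\gamma}$ (your $\alpha$ is their $\gamma$), whereas you re-derive this via a multiplicative Chernoff bound and then verify that your choice $\epsilon=\tfrac{1-\alpha}{4}$ satisfies $\epsilon\le\tfrac{1-\alpha}{1+3\alpha}$. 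In the algebra, the paper first passes from $\hat\mu$ to $\mu$ via the factor $\tfrac{1-\delta}{1+\delta}=\tfrac{2\gamma}{1+\gamma}$ and then lower-bounds $\tfrac{\sum_{l\le N+1}\mu_{(l)}}{(N+1)\mu_{(1)}}\ge\tfrac{1}{N+1}$, which is the same endpoint you reach. As a minor remark, your detour through $(1-\epsilon)\mu_{(1)}$ and $(1+\epsilon)\mu_{(1)}$ to get the ratio $\ge\tfrac{1-\epsilon}{1+\epsilon}$ is unnecessary: once you note $\sum_{l}\hat\mu^j_l\ge\hat\mu^j_{(1)}$ the ratio is already $\ge 1>\tfrac{2\alpha}{1+\alpha}$, and the concentration event is not needed for that particular inequality (it is still needed to tie the statement to the true best arm and to the stated $t_p$).
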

\begin{proof}
The punishment protocol starts for all defenders at $T_{\text{punish}}$. By Lemma 22 from \cite{sicgt}, for $\gamma = (1-\frac{1}{K})^{N}$ and $\delta = \frac{1-\gamma}{1+3\gamma}$, with probability at least $1-\frac{3}{T}$, after round $T_{\text{punish}} + t_p$,
\begin{align}
\label{eq:hoeffdingsicgt}
\forall j \in [N] \; \forall k \in [K] : (1-\delta)\hat{\mu}^j_k < \mu^j_k < (1+\delta)\hat{\mu}^j_k 
\end{align}

Since for each defender $j$, \[p^j_k := 1 - \bigg(\gamma\frac{\sum_{l=1}^{N+1}\hat{\mu}^j_l}{(N+1)\hat{\mu}^j_k}\bigg)^{\frac{1}{N}}\] the bound in equation \ref{eq:hoeffdingsicgt} can be inverted to give, for all defenders $j$,

\begin{align*}
p^j_{(1)} & = 1 - \bigg(\gamma\frac{\sum_{l=1}^{N+1}\hat{\mu}^j_{(l)}}{(N+1)\hat{\mu}^j_{(1)}}\bigg)^{\frac{1}{N}} 
\leq 1 - \bigg(\gamma\frac{1-\delta}{1+\delta}\frac{\sum_{l=1}^{N+1}{\mu}_{(l)}}{(N+1){\mu}_{(1)}}\bigg)^{\frac{1}{N}}\\
& = 1 - \bigg(\frac{2\gamma^2}{1+\gamma}\frac{\sum_{l=1}^{N+1}{\mu}_{(l)}}{(N+1){\mu}_{(1)}}\bigg)^{\frac{1}{N}}
\leq 1 - \bigg(\frac{2\gamma^2}{1+\gamma}\frac{1}{(N+1)}\bigg)^{\frac{1}{N}}
= 1 - \bigg(\frac{2(1-\frac{1}{K})^{(2N)}}{(N+1)(1 + (1-\frac{1}{K})^{N})}\bigg)^{\frac{1}{N}}
\end{align*}

which concludes the proof.
\end{proof}
\begin{proof}[Proof of Proposition \ref{prop:sicgtattack}]
The attack lasts for the entire initialization and one extra round to trigger punishment using an adversarial collision, the length of initialization is $12eK^2\log{T} + K\log{T}$ rounds, hence the cost of attack is $O(K^2\log{T})$. Once a defender receives a collision, all defenders enter punishment phase within $2(N+1)$ rounds, so $T_{\text{punish}} \leq 12eK^2\log{T} + K\log{T} + 2(N+1)$. By Lemma \ref{lem:sicgtpunish}, after round $T_{\text{punish}} + t_p = O(\frac{K^2\log{T}}{(1-\tilde{\alpha})^2\mu_{(K)}})$ with $\tilde{\alpha} = \frac{1 + (1-\frac{1}{K})^{N}}{2}$, each defender $j$ samples actions from the probability distribution ${p}^j$ such that 
\begin{align*}
&    \mathbb{P}\left({\forall \; j : p^j_{(1)} \leq 1 - \bigg(\frac{2(1-\frac{1}{K})^{(2N)}}{(N+1)(1 + (1-\frac{1}{K})^{N})}\bigg)^{\frac{1}{N}}}\right) \geq 1-\frac{3}{T}
\end{align*}

till the end of the time-horizon. Conditioned on this event, during the rounds where defenders sample from constant probability distributions, the probability that the best arm is pulled by some defender in a particular round is at most $\bigg(\frac{2(1-\frac{1}{K})^{(2N)}}{(N+1)(1 + (1-\frac{1}{K})^{N})}\bigg)$.

Therefore, after round  $T_{\text{punish}} + t_p$,  with probability at least $1-\frac{3}{T}$, the expected regret per round is at least $\mu_{(1)}-\mu_{(2)}\bigg(\frac{2(1-\frac{1}{K})^{(2N)}}{(N+1)(1 + (1-\frac{1}{K})^{N})}\bigg)$. The attack succeeds if the initialization phase is successful and the hoeffding bound holds for the defenders which occurs with probability at least $(1 - \frac{N+1}{T})(1 - \frac{3}{T})$.
Hence 
\begin{align*}
    \mathbb{E}R_T \geq (1 - \frac{N+1}{T})(1-\frac{3}{T})(T - T_{\text{punish}} + t_p)(\mu_{(1)}-\mu_{(2)})\bigg(\frac{2(1-\frac{1}{K})^{(2N)}}{(N+1)(1 + (1-\frac{1}{K})^{N})}\bigg) = \Omega(T)
\end{align*}

\end{proof}

\subsubsection{Attack on CNJ, CDJ and CUJ - Algorithms Robust to Jammers}
\label{sec:attackjammers}
CNJ, CDJ and CUJ from \cite{jammers} are algorithms robust to jamming attacks. They assume that the attackers use a fixed jammer algorithm against which they restore logarithmic regret for the defenders. Importantly the feedback model they assume is \textbf{full sensing}, where each defender observes $X_{\pi^{j}(t)}(t)$ and the collision indicators at each round. In our feedback model the true reward from each arm is not revealed.

In the non-distinguishable setting (\textsc{cnj,cuj}), the expected regret can be made $\Omega(T)$ in the extremely simple case where the attacker plays the defender algorithm (\textsc{cnj,cuj}) for $O(\log{T})$ rounds and pulls $\bot$ for the remaining. This is because these algorithms estimate the number of defenders in the first $\tilde{O}(\log{T})$ rounds. Since the attacker plays the same algorithm as defenders, with high probability all defenders estimate the total number of defenders to be $N+1$. During the exploitation phase, the defenders hop over the top $N+1$ arms, during which they a regret of $\sum_{i=1}^{N}{\mu_{(i)}} - \frac{N}{N+1}\sum_{i=1}^{N+1}{\mu_{(i)}} > 0$ per round. Hence $\mathbb{E}{R_T} = \Omega(T)$.  

In the distinguishable collision setting (\textsc{cdj}), the expected regret can be made $\Omega{(T)}$ with $\tilde{O}(\log{T})$ attacks using $K$ attackers. The strategy involves the attackers selecting all the arms orthogonally for $\tilde{O}(\log{T})$ rounds and selecting $\bot$ for the remaining. This algorithm also estimates the number of defenders in the first $\tilde{O}(\log{T})$ rounds, and each defender is forced to evaluate the number of defenders as $round(1 + \frac{\log(0)-\log(0)}{\log(1-\frac{1}{K})})$ (which is not a number) with probability 1. In this case, \textsc{cdj} breaks-down and incurs regret $\Omega{(T)}$.

\newpage
\subsection{Analysis of \textsc{resync}}
\label{sec:analysisRESYNC}
First, we upper bound the number of rounds required for all defenders to have collected sufficient number of observations on each arm to be able to determine the optimal set of arms with high probability. $T_B = T_0 + 2N^2 + N$ denotes the size of each epoch, with $T_0 = 8K\lceil{\log(2K^2T)}/{\Delta^2} \rceil$. A single run of exploration phases and exploitation phases each run for $T_B$ rounds.

\begin{replemma}{lem:explore}
All defenders have collected at least $8\lceil\frac{\log(2K^2T)}{\Delta^2} \rceil$ reliable observations for each arm after at most $T_B + C$ rounds since the start of the game.
\end{replemma}

\begin{proof}
First, all defenders are in exploration phase during the first epoch. If there exists some defender who does not have at least $8\lceil{\log(2K^2T)}/{\Delta^2} \rceil$ observations for each arm during the an epoch, then, all defenders re-enter the exploration phase in the next epoch. This is shown in Lemma \ref{lem:insuff-renter-explr}. Since any defender stores reward observations from previous epochs, the upper bound on the number of rounds required to obtain $\frac{T_0}{K}$ reliable observations on each arm is given by considering the worst case attacker strategy where, the attacker collides on a single arm for a single defender for $C$ rounds. This is the worst case strategy since the defenders collect observations in parallel during sequential hopping. This leads to the upper bound of $T_B + C$ on the number of rounds for all defenders to have sufficient observations on all arms.
\end{proof}
Next we upper bound the number of epochs in which not all defenders are in exploitation phase. For this we state the synchronization lemmas which describe the behaviour of the defenders running \textsc{resync} in presence and absence of adversarial collisions. 

\subsubsection{Synchronization Lemmas}

The first synchronization lemma states that if some defender does not have sufficient observations on all arms by the end of an exploration phase, then, every defender re-enters an exploration phase in the next epoch.

\begin{replemma}{lem:insuff-renter-explr}
Suppose all defenders are in exploration phase in a certain epoch. If there exists some defender who does not have at least $8\lceil{\log(2K^2T)}/{\Delta^2} \rceil$ observations for each arm, then, all defenders re-enter the exploration phase in the next epoch.
\end{replemma} 
\begin{proof}
If there exists some defender $j$ who does not have at least $8\lceil{\log(2K^2T)}/{\Delta^2} \rceil$ observations for each arm, then defender $j$ sets Restart to True before the start of the intra-communication sub phase. If Restart is True for some defender $j$ before communication begins, then during the communication protocol, all other defenders receive a collision, and hence set their variable Restart to True as well. Note crucially that all other defenders receive a collision from defender $j$ during intra-communication phase, since an attacker can induce collisions but cannot remove a collision that occurs between two defenders.
\end{proof}

\begin{replemma}{lem:sync2}
Suppose all defenders are in exploration phase or all defenders are in exploitation phase in the current epoch. If no attacker collides with any defender during this epoch, then all defenders enter exploitation phase in the next epoch.
\end{replemma}
\begin{proof}
In the first case, if no collisions occur from an attacker during an exploration phase, then all defenders have at least $8\lceil{\log(2K^2T)}/{\Delta^2} \rceil$ observations for each arm. Further, due to the design of the sub-phases in exploration, Restart is not set to True by any defender. Therefore Restart remains as False and all defenders progress to exploitation phase in the next epoch.

In the second case, if all defenders are in exploitation phase in a certain epoch and no attacker collides with any defender during the last $N$ rounds of this epoch, then no defender sets Restart to True, and all defenders re-enter exploitation phase in the next epoch.
\end{proof}

The following are the two cases where desynchronization occurs under adversarial collisions. Suppose all defenders are in exploration phase in a certain epoch. If $N_1$ defenders trigger restart and $N_2$ do not, then the $N_1$ defenders re-enter exploration phase while the $N_2$ enter exploitation phase. Similarly if all defenders are in exploitation phase in a certain epoch. If $N_1$ defenders trigger restart and $N_2$ do not, then the $N_1$ defenders enter exploration phase while the $N_2$ re-enter exploitation phase in the next epoch. What is crucial is that all defenders are able to synchronize in these two situations, which the final synchronization lemma proves is true.

\begin{replemma}{lem:sync3}
Suppose $N_1 \geq 1$ defenders are in exploration phase and $N_2 \geq 0$ defenders are in exploitation phase in the current epoch. Then independent of the arms pulled by attackers during the epoch, all defenders enter exploration phase in the next epoch. 
\end{replemma}
\begin{proof}
If $N_1 \geq 1$ defenders are in exploration phase and $N_2$ defenders are in exploitation phase in a certain epoch, then for the defenders in exploration phase, during the sensing phase these defenders experience a collision from the defenders in exploitation phase (since the defenders in exploitation phase must pull Opt$[1]$ every $N$ consecutive rounds), and this collision cannot be reset by the attacker. So the defenders in exploration phase set Restart to True. Further, during the last $N$ rounds of the epoch all defenders in exploration phase pull the Opt$[1]$, and all defenders in exploitation phase experience this collision, and set Restart to True. Since all defenders set Restart to True, all defenders enter the exploration phase in the next epoch.
\end{proof}

Next we upper bound the number of epochs in which not all defenders are in exploitation phase. Note that since we condition on the event that the total number of attacks during the bandit game is at most $C$, at most $C$ epochs can have adversarial collisions. To obtain the bound, we reason over a meta game induced by the bandit game. The synchronization lemmas presented above allow us to reason over the meta game instead of the underlying bandit game. The description of the meta game is presented in the next section.

\subsection{Meta Game and Upper Bound on Number of Bad Phases}
\label{sec:metagame}
Since the defenders use \textsc{resync} which is a deterministic algorithm, we will study the behaviour of the state of the system at each epoch, and how the system reacts to adversarial collisions.
We divide the horizon into $N_b$ epochs of size $T_B$, and reason over the meta game induced by the bandit game, where each $T_B$ sized epoch in the bandit game corresponds to one time step of the meta game. The horizon of the meta game is $N_h = N_b - \lceil1 + \frac{C}{T_B}\rceil$, where $\lceil1 + \frac{C}{T_B}\rceil$, is the number of epochs from the start of the game for all defenders to determined the optimal set of arms with high probability (refer Lemma \ref{lem:explore-dc}).

The description of the meta game is the deterministic Markov decision process $(S,A,R,\delta)$, where,
\begin{align*}
    & A = \{\texttt{N},\texttt{C},\texttt{C'}\} \text{ is the action space },\\
    & S = \{\textsc{explore},\textsc{desync},\textsc{exploit}\} \text{ is the state space }, \\
    & R : S\times A \to \mathbb{R} \text{ is the zero function (We will not consider rewards directly in the MDP)},  \\
    & \delta \text{ is the deterministic transition function given in Table \ref{tab:dynamics}}.
\end{align*}
\begin{table}[H]
\centering
  \label{tab:dynamics}
  \begin{tabular}{rlll}\toprule
    $\delta$ & \texttt{N}    &  \texttt{C}   & \texttt{C'}   \\ \midrule
    \textsc{explore}  & \textsc{exploit} &  \textsc{explore} &  \textsc{desync} \\
    \textsc{desync}  & $-$ &  \textsc{explore}&  $-$  \\
    \textsc{exploit}  & \textsc{exploit}  & \textsc{explore} &  \textsc{desync} \\  \bottomrule
  \end{tabular}
  \caption{Transition table where $\delta$ is the deterministic transition function}
\end{table}

The states $\textsc{explore},\textsc{desync}$ and \textsc{exploit} correspond to the state of the bandit game in a certain epoch where all defenders are in exploration phase, at least one defender is in exploration phase and at least one defender is in exploitation phase, or all defenders are in exploitation phase respectively. Similarly the action space $\{\texttt{N},\texttt{C},\texttt{C'}\}$ corresponds to the type of sequence of actions the attackers can take over epochs. Formally, $\texttt{N}$ corresponds to arm pulls during an epoch that cause no collisions with defenders, $\texttt{C}$ to arm pulls during an epoch that cause all defenders to restart exploration on the next epoch, and $\texttt{C'}$ to arm pulls during an epoch that cause $N_1 : 1<N_1<N$ defenders to restart respectively. That the transition table corresponds exactly to the bandit game when all defenders use \textsc{resync} follows from the synchronization lemmas as show in the following table.

\begin{table*}[h!]
\centering
  
  \label{tab:dynamicsreferences}
  \begin{tabular}{rlll}\toprule
    $\delta$ & \texttt{N}    &  \texttt{C}   & \texttt{C'}      \\ \midrule
    \textsc{explore}  & \textsc{exploit} by Lemma \ref{lem:sync2} &  \textsc{explore} by Definition of \texttt{C} &  \textsc{desync} by Definition of \texttt{C'} \\
    \textsc{desync}  & $-$ &  \textsc{explore} by Lemma \ref{lem:sync3} &  $-$  \\
    \textsc{exploit}  & \textsc{exploit} by Lemma \ref{lem:sync2}  & \textsc{explore} by Definition of \texttt{C} &  \textsc{desync} by Definition of \texttt{C'} \\  \bottomrule
  \end{tabular}
  \caption{References to relevant lemmas proving the transition table corresponds exactly to the bandit game when $N$ defenders use \textsc{resync}.}
\end{table*}

We will not consider any rewards in the meta game. We use the meta game only to determine the number of epochs where all defenders are not in exploration phase, we will not bound the regret here directly.

\addtocounter{figure}{-1}
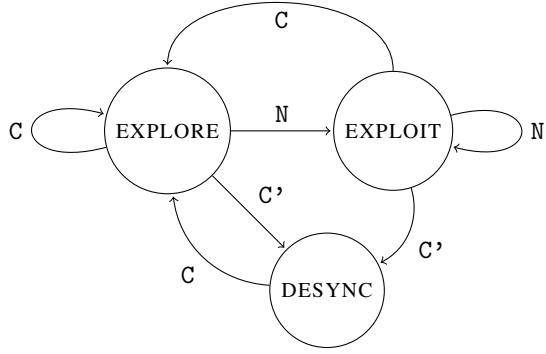
\begin{figure}[h!]
\caption{Transition Dynamics of the Meta game MDP}
\centering
\begin{tikzpicture}[shorten >=1pt,node distance=3cm,on grid,auto] 
   \node[state] (1)   {\textsc{explore}}; 
   \node[state] (2) [below right=of 1] {\textsc{desync}}; 
   \node[state](3) [right=of 1] {\textsc{exploit}};
    \path[->] 
    (1) edge [loop left] node {\texttt{C}}  (1)
    (1) edge  node {\texttt{C'}}  (2)
    (1) edge  node {\texttt{N}}  (3)
    (2) edge  [bend left = 40] node {\texttt{C}}  (1)
    (3) edge [bend right = 90] node {\texttt{C}} (1)
    (3) edge [bend left = 40] node {\texttt{C'}} (2)
    (3) edge [loop right] node {\texttt{N}} (3)
    ;
\end{tikzpicture}
\label{fig:dynamicsApp}
\end{figure}

The following Lemma bounds the number of rounds of the meta game in which the state is not \textsc{exploit} conditional on the the number of times action \texttt{C} or \texttt{C'} being played is at most $C$ (the number of attacks in the bandit game), when the initial state is unknown. In the bandit game, this corresponds to the number of epochs in which not all defenders are in exploitation phase conditional on at most $C$ epochs having adversarial collisions. Since at most $C$ attacks occur during the bandit game, at most $C$ epochs can be under attack, therefore action \texttt{C} or \texttt{C'} are played in at most $C$ time steps of the meta game .

\begin{replemma}{lem:goodstatebound}
Let $N_h$ be the time horizon of the MDP, then \[\sum_{t' = 1}^{N_h}\mathds{1}(A_{t'} = {\normalfont \texttt{C}} \text{ or } A_{t'} = {\normalfont \texttt{C'}}) \leq C \implies \sum_{t' = 1}^{N_h}\mathds{1}(S_{t'} \neq \textsc{exploit}) \leq 1 + 3C\]
\end{replemma}
\begin{proof}
First note that since $S_{t'+1} = \textsc{desync}$ only if $A_{t'} = \texttt{C'}$, the number of times state \textsc{desync} is visited is bounded above, \[\sum_{t' = 1}^{N_h}\mathds{1}(S_{t'} = \textsc{desync}) = 1 + \sum_{t' = 1}^{N_h- 1}\mathds{1}(S_{t' + 1} = \textsc{desync}) \leq \sum_{t' = 1}^{N_h - 1}\mathds{1}(A_{t'} = {\normalfont \texttt{C'}}) \leq C + 1\] Further, since $S_{t'+1} = \textsc{explore}$ only if $S_{t'} = \textsc{desync}$ or $S_{t'} = \textsc{exploit} \text{ and } A_{t'} = \texttt{C}$, the number of visits to state \textsc{explore} can be bounded above as well,
\begin{align*}
 \sum_{t' = 1}^{N_h}\mathds{1}(S_{t'} = \textsc{explore}) & \leq 1 + \sum_{t' = 1}^{N_h-1}\mathds{1}(S_{t'+1} = \textsc{explore})\\
 & \leq 1 + \sum_{t' = 1}^{N_h-1}\mathds{1}(S_{t'} = \textsc{desync} \text{ or } S_{t'} = \textsc{exploit} \text{ and } A_{t'} = \texttt{C})\\
& \leq 1 + \sum_{t' = 1}^{N_h-1}\mathds{1}(S_{t'} = \textsc{desync}) + \sum_{t' = 1}^{N_h-1}\mathds{1}(S_{t'} = \textsc{exploit})\mathds{1}( A_{t'} = \texttt{C})\\
& \leq 1 + C + \sum_{t' = 1}^{N_h-1}\mathds{1}( A_{t'} = \texttt{C}) \leq 1 + C + C - 1 = 2C
\end{align*}

Therefore, \[\sum_{t' = 1}^{N_h}\mathds{1}(S_{t'} \neq \textsc{exploit}) = \sum_{t' = 1}^{N_h}\mathds{1}(S_{t'} = \textsc{explore}) + \sum_{t' = 1}^{N_h}\mathds{1}(S_{t'} = \textsc{desync}) \leq 1 + 3C\]
\end{proof}

\subsection{Proof of Theorem \ref{thm:mainRESYNC}}

For the analysis we assume that the time horizon $T$ is divisible by $T_B$, otherwise the final epoch is of size at most $T_B$ during which the defenders incur regret at most $T_B$. 

The following is Lemma A.2 from \cite{UMLshai}.
\begin{lemma}
\label{lem:shai}
Let $a \geq 1 $ and $b > 0$. Then, $x \geq 4a\log{(2a)} + 2b \implies x \geq a\log{(x)} + b$
\end{lemma}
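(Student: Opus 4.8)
The plan is to prove the implication directly, by bounding the only nonlinear term $a\log x$ from above by a linear function of $x$ and then showing that the leftover constants are swallowed by the slack that the $4a\log(2a)$ summand in the hypothesis provides. The single nonelementary ingredient I would use is the concavity of the logarithm, in the form of the tangent-line inequality $\log u \le u - 1$, valid for every $u > 0$. Everything else is algebra together with the hypothesis $x \ge 4a\log(2a) + 2b$ and the standing assumptions $a \ge 1$, $b > 0$.

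The first step is to linearize. I would apply $\log u \le u-1$ at the point $u = x/(2a)$ (legitimate since $x>0$ and $a\ge 1$). Writing $\log x = \log(2a) + \log\!\big(x/(2a)\big)$ and using $\log\!\big(x/(2a)\big) \le x/(2a) - 1$ gives the linear upper bound
\[
\log x \;\le\; \frac{x}{2a} + \log(2a) - 1 ,
\]
and multiplying through by $a>0$ yields $a\log x \le \tfrac{x}{2} + a\log(2a) - a$. The second step is to invoke the hypothesis: halving $x \ge 4a\log(2a)+2b$ gives $\tfrac{x}{2} \ge 2a\log(2a)+b$, and since $a\ge 1$ forces $\log(2a)\ge\log 2>0$ we obtain the chain $0 \le a\log(2a) \le 2a\log(2a) \le \tfrac{x}{2}-b$. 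Substituting this into the bound from the first step,
\[
a\log x + b \;\le\; \frac{x}{2} + a\log(2a) - a + b \;\le\; \frac{x}{2} + \Big(\frac{x}{2}-b\Big) - a + b \;=\; x - a \;\le\; x ,
\]
which is exactly the desired conclusion $x \ge a\log x + b$.

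The point requiring care — and what I would flag as the crux — is the choice of linearization point. Taking $u = x/(2a)$ (rather than, say, $u=x/a$) is precisely what makes the residual constant equal to $a\log(2a)-a$, and this is dominated by the $2a\log(2a)$ made available after halving the hypothesis exactly because $\log(2a)\ge 0$ for $a\ge 1$. A cruder linearization would leave a constant that the $4a\log(2a)$ slack cannot absorb, so matching the factor of $2$ inside the logarithm in the hypothesis to the factor inside the linearization point is the decisive choice; the remaining steps are routine.
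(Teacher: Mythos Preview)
Your proof is correct. The paper does not actually prove this lemma but simply cites it as Lemma~A.2 from Shalev-Shwartz and Ben-David's \emph{Understanding Machine Learning}; your tangent-line linearization $\log u \le u-1$ at $u = x/(2a)$ is the standard argument, and you have correctly identified that the choice of linearization point is exactly what makes the residual constant $a\log(2a)-a$ absorbable by the hypothesis.
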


We will assume the following lower bound on the length of the time-horizon,
\[T \geq \frac{192CK}{\Delta^2}\log{\left(\frac{96CK}{\Delta^2}\right)} + 40CN^2 + 96CK\frac{\log{(2K^2)}}{\Delta^2}\]
So that by Lemma \ref{lem:shai},
\[T \geq \frac{48CK\log{2K^2T}}{\Delta^2} + 20CN^2 \]
which will imply that the final inequality in the Proof of Theorem \ref{thm:mainRESYNC} is non-trivial.

\begin{proof}[Proof of Theorem \ref{thm:mainRESYNC}]
By Lemma \ref{lem:explore}, after at most $T_B + C$ rounds, all defenders have collected at least $8\lceil\frac{\log(2K^2T)}{\Delta^2} \rceil$ observations for each arm. After $T_B + C$ rounds by Lemma \ref{lem:suffobs}, all defenders have a $\frac{\Delta}{2}$-correct ranking with probability at least $\frac{1}{T}$, and hence by Lemma \ref{lem:sortedlist} all defenders have determined the optimal set of arms. Further this set is sorted according to arm index which allows for synchronization during the exploitation phase. Conditioned on the success of the above event, by Lemma \ref{lem:goodstatebound}, all defenders are in exploitation phase in at least $N_b - \lceil1 + \frac{C}{T_B}\rceil - 3C - 1$ epochs of size $T_B$.

Let $\mathcal{E}$ denote the event that all defenders have determined the optimal set of arms. As mentioned above,
\begin{align*}
& \mathbb{P}(\mathcal{E}) \geq 1 - \frac{1}{T} \text{ by Lemma \ref{lem:explore}, Lemma \ref{lem:suffobs} and Lemma \ref{lem:sortedlist} }
\end{align*}
The total regret in the good event $\mathcal{E}$ is 
\[\mathbb{E}(R_T|\mathcal{E}) \leq T_B\left\lceil1 + \frac{C}{T_B}\right\rceil + T_B(3C + 1) + C\]
where $T_B\lceil1 + \frac{C}{T_B}\rceil$ is the maximum number of rounds required for sufficient observations on each arm, $T_B(3C + 1)$ is the maximum regret accumulated when at least one defender is not in exploitation phase and $C$ is the maximum regret accumulated when all defenders are in exploitation phase and face adversarial collisions from attackers.

In summary, the total expected regret conditioned on the number attacks being $C$ is,
\begin{align*}
\mathbb{E}R_T & = \mathbb{E}\left(R_T|\mathcal{E}\right)\mathbb{P}\left(\mathcal{E}\right) + \mathbb{E}\left(R_T|\mathcal{E}^{\mathsf{c}}\right)\mathbb{P}\left(\mathcal{E}^{\mathsf{c}}\right) \\
              & \leq \mathbb{E}\left(R_T|\mathcal{E}\right)\cdot 1 + T \cdot \frac{1}{T} \\
              & \leq \left(T_B\left\lceil1 + \frac{C}{T_B}\right\rceil + T_B(3C + 1) + C\right) + 1 \leq 3T_B + 2C + 3CT_B + 1\\
              & \leq 24CK\frac{\log{(2K^2T)}}{\Delta^2} + 9CN^2+ 2C + 24K\frac{\log{(2K^2T)}}{\Delta^2} + 9N^2 + 1 \\
              & \leq O\left( N^2C + KC\frac{\log(K^2T)}{\Delta^2} \right)
\end{align*}

\end{proof}
\newpage
\subsection{Complete Description of \textsc{resync2}}
\label{app:RESYNC2-desc}
We here describe in detail the pseudocode for the \textsc{resync2} algorithm. The algorithm \textsc{resync2}, consists of three phases that run sequentially, namely, initialization, exploration and exploitation. 

\begin{wrapfigure}{L}{0.46\textwidth}
\begin{minipage}[t]{0.46\textwidth}
  \vspace{0pt}
\begin{algorithm}[H]
\caption{\textsc{resync2}}
\SetKwInOut{Input}{Input}
\KwIn{$T$ (horizon)}
Restart $\gets $ True\\
$Opt \gets \emptyset, \tilde{\mu}_i \gets 0, o_i \gets 0, s_i \gets 0 $\\
k $\gets$ Orthogonalization($[K]$)\\
$(n,j) \gets$ Estimate-Defenders($k$)\\

Opt $\gets$ ExplorationDC($N, \forall i \; o_i,s_i$)\\

\While{$t \leq T$}{
Set $k = t+j$ (mod $N$) and Pull Opt$[k]$
}
\end{algorithm}
\end{minipage}
\end{wrapfigure}

The purpose of the initialization phase is to break symmetry between the players, allow them to estimate the total number of defenders, and assign distinct internal ranks between the defenders. The exploration phase progresses in epochs of size $2K$. Each epoch of exploration, has two phases of size $K$. In the first phase, i.e. the first $K$ rounds of the epoch, the defenders sequentially hop arms to obtain one observation on each arm.  During the next phase, the defenders communicate (via arm collisions) whether or not to restart the current epoch based on whether adversarial collisions occurred during the first phase. If some defender witnessed an adversarial collision during the first phase of the epoch then all defenders disregard the epoch, and enter a new one. The defenders explore until there are sufficient number of uncorrupted epochs, after which they are able to determine the optimal set of arms with high probability. Thereafter, all defenders enter the exploitation phase and sequentially hop over the optimal set of arms.\\

The first part of initialization consists of orthogonalization, first introduced as the Musical-Chairs subroutine in \cite{MC}. The proof that orthogonalization is successful under any adversarial collisions is presented in Lemma \ref{lem:orth}. The initialization phase then consists of a second procedure. Its purpose is to estimate the number of defenders and to assign different ranks in $[N]$ to all defenders. The proof that the subroutine works conditional on the success of orthogonalization and under any adversarial collisions in presented in Lemma \ref{lem:estimate-defenders}.

\begin{minipage}[t]{0.46\textwidth}
  \vspace{0pt}
\begin{subroutine}[H]
\caption{Estimate-Defenders}
\SetKwInOut{Input}{Input}
\KwIn{$k \in [K]$ (external rank)}
\KwOut{$N$ (Estimated number of defenders), $j$ (internal rank)}
$N \gets 1, j \gets 1, \pi \gets k$\\
\For{$2k$ rounds}{
Pull $\pi$\\
\If{$\eta^{D}_{\pi} = 1$}{$N \gets N + 1$, $j \gets j + 1$}
}
\For{$2(K-k)$ rounds}{
$\pi \gets \pi + 1$ (mod $K$) and Pull $\pi$ \\
\If{$\eta^{D}_{\pi} = 1$}{$N \gets N + 1$}
}

\Return{$(n,j)$}
\end{subroutine}
\end{minipage}
\hfill
\begin{minipage}[t]{0.46\textwidth}
  \vspace{0pt}
  \begin{subroutine}[H]
\caption{Orthogonalization}
\SetKwInOut{Input}{Input}
\KwIn{$A$ (set of arms)}
\KwOut{\normalfont \texttt{Rank}}
\normalfont \texttt{Rank} $\gets -1$\\
\For{$\lceil K \log{T} \rceil$ times steps}{
\eIf{\normalfont \texttt{Rank} $= -1$}{
Pull $k \sim \mathcal{U}(K)$\\
\If{$\eta_k = 0$ }{
\texttt{Rank} $\gets k$
}
}
{Pull \texttt{Rank}}
}
\Return{\normalfont {\texttt{Rank}}}
\end{subroutine}
\end{minipage}

Next we consider the exploration phase in which each defender stays until all defenders have sufficient number of observations to determine the optimal ranking of arms with high probability. That this property holds is shown in Lemma \ref{lem:explore-dc}.

\newpage
\begin{subroutine}[H]
\caption{ExplorationDC}
\SetKwInOut{Input}{Input}
\KwIn{$N, \forall i \; o_i,s_i$}
\KwOut{Opt}
Restart $\gets$ False, epochNumber $\gets 1$\\
$T_0 \gets 16K\lceil{\log(2K^2T)}/{\Delta^2} \rceil$\\
\While{epochNumber $\leq \frac{T_0}{2K}$ }{
\For(\tcp*[f]{Sequential hopping phase}){$K$ rounds}{
Restart $\gets$ False\\
Pull $k = t+j$ (mod $K$) and receive $\eta^{D}_k , \eta^{A}_k$ and $r_k(t)$ \\
$\eta_k \gets \eta^{D}_k \lor \eta^{A}_k$\\
\eIf{$\eta_k = 0$ }{
$o_k \gets o_k + 1$\\
$s_k \gets s_k + r_k(t)$
}{
Restart $\gets$ True
}
}
\For(\tcp*[f]{Communication phase}){$K$ rounds}{
\eIf{{\normalfont Restart}}{Pull $1$}{
Pull $k = t+j$ (mod $K$)\\
\If{$\eta^{D}_k = 1$}{Restart $\gets$ True}
}
}
\If{{\normalfont not Restart}}{epochNumber $\gets$ epochNumber + 1}
}

$\forall i : \tilde{\mu}_i = s_i/o_i$\\
Opt $\gets$ List of $N$ best empirically performing arms sorted according to arm index.\\
\Return{{\normalfont Opt}}
\end{subroutine}

\subsection{Proof of Theorem \ref{thm:RESYNC2} - Analysis of \textsc{resync2}}
\label{sec:analysisRESYNC2}

\begin{lemma}
\label{lem:orth}
At time $T_{orth} = \lceil K\log{T} \rceil$ all defenders pull different arms in $[K]$ with probability at least $1 - \frac{N}{T}$
\end{lemma}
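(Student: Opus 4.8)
The plan is to reduce the joint event ``all defenders are orthogonal at time $T_{orth}$'' to a per-defender statement and then union-bound. First I would record the structural invariant that drives everything: in the \textsc{Orthogonalization} subroutine a defender fixes its \texttt{Rank} to an arm $k$ only in a round where it observed $\eta_k=0$, i.e.\ \emph{no} other player (defender or attacker) pulled $k$ that round, and once fixed it pulls $k$ in every subsequent round. Consequently two defenders can never fix on the same arm: if defender $a$ has already settled on $k$, then in any later round defender $b$ that pulls $k$ produces a collision $\eta_k=1$ and cannot settle there. Hence the event ``all $N$ defenders have settled by round $T_{orth}$'' already implies ``all defenders occupy distinct arms,'' so it suffices to bound the probability that some defender has \emph{not} settled.

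Next I would bound, for a fixed defender $j$, the probability it is still unsettled after $T_{orth}=\lceil K\log T\rceil$ rounds. The key claim is that, conditioned on the history up to any round in which $j$ is not yet settled, $j$ settles in that round with probability at least $1/K$: the defender draws $k\sim\mathcal U(K)$ independently, and there is always at least one arm unoccupied by the other defenders, since they occupy at most $N-1$ arms and $K\ge N$ leaves a free arm that $j$ hits with probability $\ge 1/K$. Granting this per-round bound, the outcomes across the unsettled rounds are stochastically dominated by independent Bernoulli trials, giving
\[
\Pr[\,j \text{ unsettled at } T_{orth}\,]\le\Big(1-\tfrac1K\Big)^{\lceil K\log T\rceil}\le e^{-\lceil K\log T\rceil/K}\le e^{-\log T}=\tfrac1T .
\]
A union bound over the $N$ defenders then yields total failure probability at most $N/T$, exactly the claimed bound.

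The main obstacle is making the per-round $1/K$ lower bound genuinely robust to \emph{adversarial} collisions, since an attacker sitting on the otherwise-free arm would turn $j$'s successful pull into a collision and block the settling. I would handle this by proving the per-round estimate for \emph{every fixed} attacker arm-sequence and then taking expectation, so that no independence between attacker and defenders is invoked: fixing the attacker's (possibly adaptive) arm for that round before $j$'s independent uniform draw, I lower-bound the mass $j$ places on arms free of \emph{all} players. The delicate point is that this mass need not stay at $1/K$ once an attacker occupies the defender-free arm; the clean resolution is the dichotomy that in each unsettled round either $j$ lands on a fully free arm and settles (probability $\ge 1/K$), or the attacker is precisely blocking $j$'s chosen arm, which is itself an adversarial collision and hence cannot be sustained without being charged. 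Phrasing the bound worst-case over attacker configurations in this way is what upgrades the $(1-1/K)^{T_0}$ estimate already used in the proof of Proposition~\ref{prop:sicmmabattack} to hold under an arbitrary attacker, and is the step I expect to require the most care.
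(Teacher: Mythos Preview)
Your first two paragraphs are exactly the paper's proof. The paper simply observes that at each round at least one arm is free of the other $N-1$ players, so an unsettled defender settles with probability at least $1/K$, giving $(1-1/K)^{T_{orth}}\le e^{-\log T}=1/T$ per defender and then a union bound over the $N$ defenders. Nothing further is argued; in particular the paper does not discuss attacker collisions during orthogonalization at all.

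Your third paragraph therefore goes beyond what the paper proves, and the resolution you sketch there has a genuine gap. The claimed dichotomy ``either $j$ settles with probability $\ge 1/K$, or an adversarial collision is charged'' is not a dichotomy: the missing third branch is that $j$ hits an arm occupied by another \emph{defender}, which is neither settling nor chargeable. Concretely, take $K=N$ and a single attacker who in each round sits on the unique arm unoccupied by the other $N-1$ defenders. Then in every round the unsettled defender $j$ either collides with a defender (probability $(N-1)/K$, not charged) or with the attacker (probability $1/K$, charged), and $j$ \emph{never} settles. The event in the lemma fails with probability $1$ regardless of how many collisions you charge, so no charging argument can recover the stated $1-N/T$ bound; at best it yields a different, $C$-dependent statement.

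For the purpose of matching the paper, you should stop after your second paragraph. If you actually want to close the gap you identified, note that this subroutine lives in the distinguishable-collision setting: having the defender fix \texttt{Rank} when $\eta^{D}_k=0$ rather than $\eta_k=0$ makes attacker pulls irrelevant to settling, preserves the invariant that two defenders never fix on the same arm (that invariant only uses defender--defender collisions), and turns your first two paragraphs into a complete proof robust to arbitrary attackers.
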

\begin{proof}
As there is at least one arm that is not played by all the other players at each time step, the
probability of having no collision at time $t$ for a single defender $j$ is lower bounded by $\frac{1}{K}$. It thus holds, that for a single player, her probability to encounter only collisions until time $T_{orth}$:
\begin{align*}
    \mathbb{P}\left(\forall t \leq T_{orth} : \eta^{j}(t) = 1 \right) \leq \left(1- \frac{1}{K}\right)^{T_{orth}} \leq \frac{1}{T}
\end{align*}

A union bound over the $N$ players yields the result.
\end{proof}

\begin{lemma}
\label{lem:estimate-defenders}
If all defenders are in an orthogonal position at time $K\log{T}$ then at time $K\log{T} + 2K$ all defenders have estimated the total number of defenders to be $N$, and have distinct internal ranks in $[N]$.
\end{lemma}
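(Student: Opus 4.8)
The plan is to track, round by round, the arm occupied by each defender during the \texttt{Estimate-Defenders} subroutine, and to show that the $\eta^D$-feedback --- which in the distinguishable setting registers only defender--defender collisions and is therefore untouched by the attackers --- encodes exactly the relative order of the external ranks. Index the $2K$ rounds of the subroutine by $t\in\{1,\dots,2K\}$ (note the segment lengths $2k$ and $2(K-k)$ sum to $2K$, matching the claimed time $K\log T + 2K$). By the hypothesis inherited from Lemma \ref{lem:orth}, the defenders hold distinct external ranks $k\in[K]$. A defender with rank $k$ occupies arm $k$ throughout its \emph{stationary} segment $t\le 2k$, and during its \emph{sweeping} segment $t>2k$ it occupies arm $(t-k)\bmod K$ (at the first sweeping round $t=2k+1$ this is $k+1$, matching the pseudocode).

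First I would rule out spurious defender collisions while sweeping. If two defenders of distinct ranks $k_1\neq k_2$ are both sweeping at round $t$, they sit on arms $t-k_1$ and $t-k_2$, which are incongruent $\pmod{K}$ since $k_1\not\equiv k_2\pmod{K}$ for $k_1,k_2\in[K]$. Hence two moving defenders never collide, so no collision is ever double-counted from this source; moreover a sweeping defender can land on a still-stationary defender only if the latter has a strictly larger external rank, since a defender of rank $r$ is stationary only for $t\le 2r$ while the sweeper requires $t\ge 2k+1>2r$ whenever $r\le k$.

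Next I would pin down the collision schedule for each unordered pair $\{A,B\}$ with ranks $k_A<k_B$. The unique round at which a moving defender occupies a stationary defender's arm is $t=k_A+k_B$: there $A$ (with $2k_A<k_A+k_B$) is sweeping onto arm $k_B$, while $B$ (with $k_A+k_B<2k_B$) is still stationary there; the modular alternatives $t=k_A+k_B\pm K$ fall outside the admissible window $2k_A+1\le t\le 2k_B$, and the symmetric event with $B$ sweeping into a stationary $A$ is excluded because $B$ begins sweeping only after round $2k_B>2k_A$. Thus each pair yields exactly one $\eta^D$-collision, observed by the lower-ranked defender while it sweeps (so it increments only $N$, via the second loop) and by the higher-ranked defender while it is stationary (so it increments both $N$ and its internal rank $j$, via the first loop). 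Because attacker pulls affect only $\eta^A$, they leave every $\eta^D$ reading intact, which is what makes the count robust to adversarial collisions.

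Finally I would run the counting argument. For a defender of external rank $k$, let $\ell(k)$ and $g(k)$ denote the numbers of defenders of strictly smaller and strictly larger external rank, so $\ell(k)+g(k)=N-1$. By the schedule it registers $g(k)$ collisions while sweeping and $\ell(k)$ collisions while stationary, hence its estimate becomes $1+g(k)+\ell(k)=N$ and its internal rank becomes $1+\ell(k)$. Since $\ell(\cdot)$ is a strict order statistic on the $N$ distinct external ranks, it maps them bijectively onto $\{0,\dots,N-1\}$, so the internal ranks are precisely $\{1,\dots,N\}$ with no repetition, establishing the claim. The main obstacle is purely the timing bookkeeping of the previous two paragraphs: verifying (i) that moving defenders never coincide $\pmod{K}$ and (ii) that each higher-ranked defender is still stationary exactly when the matching lower-ranked defender sweeps over it. The factor $2$ in the segment lengths is precisely what guarantees that the window $2k_A+1\le k_A+k_B\le 2k_B$ is nonempty and contains the unique meeting round.
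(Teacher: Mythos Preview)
Your proof is correct and follows essentially the same approach as the paper: write down the arm trajectory $\pi_k(t')=k$ for $t'\le 2k$ and $\pi_k(t')=t'-k$ thereafter, observe that the unique defender--defender collision between ranks $k_A<k_B$ occurs at $t'=k_A+k_B$ (with $A$ sweeping, $B$ stationary), and count. Your version is somewhat more careful than the paper's --- you explicitly rule out sweeping--sweeping collisions, exclude the modular alternatives $t=k_A+k_B\pm K$, and note that only $\eta^D$ is read so attacker pulls are irrelevant --- but the core argument is identical.
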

\begin{proof}
Indexing the $2K$ rounds of the Estimate-Defenders protocol using $t'$, the arm pulled by the defender with external rank $k$ at round $t'$ is given by
\[
\pi_{k}(t')=
\begin{cases}
k \; \text{ if } \; 1 \leq t' \leq 2k\\
t' - k \; \text{ if }\;  2k \leq t'\\
\end{cases}
\]

Consider any defender with external rank $k$ with estimated values for $N$ and $j$ initially equal to $1$ (call them $\hat{N}$ and $\hat{j}$). Suppose there are $x$ defenders with external rank less that $k$. Any defender with external rank $i : i<k$, pulls arm $k$ at round $k+i$. Hence by round $2k$, $x$ defenders have pulled arm $k$, at which point $\hat{N} = x + 1$ and $\hat{j} = x+1$, thereafter $\hat{j}$ is not updated. Suppose there are $y$ defenders with higher external rank than $k$. For any defender with external rank $i : i >k$, the defender with external rank $k$ pulls their external rank $i$ at round $k+i < 2i$. Hence by round $2K$,  $\hat{N} = y + x + 1$ and $\hat{j} = x+1$. Since for any defender $x+y = n-1$, each defender estimates $\hat{N}=N$ and since for no two defenders the value of $x$ can coincide, the defenders have distinct internal ranks at round $2K$.

\end{proof}

For the following, we condition on the success of the orthogonalization protocol. We let $T_0 = 16K\frac{\log(2K^2T)}{\Delta^2}$, and condition on the number of collisions $C$.

\begin{replemma}{lem:explore-dc}
After at most $2KC + T_0$ rounds of the ExplorationDC protocol, all defenders have collected at least $8\lceil\frac{\log(2K^2T)}{\Delta^2} \rceil$ observations for each arm. 
\end{replemma}
\begin{proof}
The exploration protocol progresses in epochs of size $2K$. At least $\frac{T_0}{2K}$ epochs must be successful for the exploration protocol to terminate. An epoch is successful if no defender receives a collision from an attacker during the first $K$ rounds of the epoch, at which point all defenders move to the next epoch. If some defender received a collision from an attacker during the first $K$ rounds then, during the next $K$ rounds of the epoch all defenders are notified by that defender through deliberate collisions and collision feedback, at which point all defenders repeat the epoch.

Since collisions occur from all attackers at most $C$ times in total, they can occur in at most $C$ epochs. Since an epoch is reset if and only if collision occurred from an attacker at the first $K$ rounds of that epoch, and the epoch length is $2K$, the total duration of the exploration phase is at most $2KC + T_0$.
\end{proof}

\begin{proof}[Proof of Theorem \ref{thm:RESYNC2}]
First due to the success of the initialization protocol, by Lemma \ref{lem:orth} at time $\lceil K\log{T}\rceil$ all defenders pull different arms with probability $1 - \frac{N}{T}$. Conditioned on this event the Estimate-Defenders protocol runs for $2K$ rounds and is successful by Lemma \ref{lem:estimate-defenders}. Further,  by Lemma \ref{lem:explore-dc}, after at most $2KC + T_0$ rounds of the ExplorationDC protocol, all defenders have collected at least $8\lceil\frac{\log(2K^2T)}{\Delta^2} \rceil$ observations for each arm. By Lemma \ref{lem:suffobs} all defenders have a $\frac{\Delta}{2}$-correct ranking with probability at least $\frac{1}{T}$, and hence by Lemma \ref{lem:sortedlist} all defenders have determined the optimal set of arms. As in \textsc{resync}, this optimal set is sorted according to arm index which allows for synchronization during the exploitation phase. At any round of the exploitation phase, conditioned on previous events, all defenders pull the top $N$ arms in an orthogonal fashion. The regret due to collisions in this phase is at most $C$.

Let $\mathcal{E}_1$ denote the event that initialization is successful. Let $\mathcal{E}_2$ denote the event that after $2KC + T_0$ rounds since initialization completed, all defenders have collected at least $8\lceil\frac{\log(2K^2T)}{\Delta^2} \rceil$ uncontaminated observations for each arm.

\begin{align*}
& \mathbb{P}(\mathcal{E}_1) \geq 1 - \frac{N}{T} \text{ by Lemma \ref{lem:orth} and Lemma \ref{lem:estimate-defenders}  and } \mathbb{P}(\mathcal{E}_2 | \mathcal{E}_1) \geq 1 - \frac{1}{T} \text{ by Lemma \ref{lem:explore-dc}}\\
& \implies \mathbb{P}(\mathcal{E}_1 \cap \mathcal{E}_2) = \mathbb{P}(\mathcal{E}_2 | \mathcal{E}_1)\cdot \mathbb{P}(\mathcal{E}_1) \geq 1 - \frac{N+1}{T}
\end{align*}

The total regret in the good event $\mathcal{E}_1 \cap \mathcal{E}_2$ is 
\[\mathbb{E}(R_T|\mathcal{E}_1 \cap \mathcal{E}_2) \leq K\log{T} + 2K + 2KC + 16K\frac{\log(K^2T)}{\Delta^2} + C\]

where $K\log{T} + 2K$ is the length of initialization, $2KC + 16K\frac{\log(K^2T)}{\Delta^2}$ is the maximum number of rounds required for sufficient observations on each arm, and $C$ is the maximum regret accumulated in an exploitation phase where defenders only pull the top $N$ arms.

In summary, the total expected regret conditioned on the number attacks being $C$ is,
\begin{align*}
\mathbb{E}R_T & = \mathbb{E}\left(R_T|\mathcal{E}_1 \cap \mathcal{E}_2\right)\mathbb{P}\left(\mathcal{E}_1 \cap \mathcal{E}_2\right) + \mathbb{E}\left(R_T|(\mathcal{E}_1 \cap \mathcal{E}_2)^{\mathsf{c}}\right)\mathbb{P}\left((\mathcal{E}_1 \cap \mathcal{E}_2)^{\mathsf{c}}\right) \\
              & \leq \mathbb{E}\left(R_T|\mathcal{E}_1 \cap \mathcal{E}_2\right)\cdot 1 + T \cdot \frac{N+1}{T} \\
              & \leq K\log{T} + 2K + 2KC + 16K\frac{\log(K^2T)}{\Delta^2} + C + N + 1 \leq O\left( KC + K\frac{\log(K^2T)}{\Delta^2} \right)
\end{align*}

\end{proof}

\subsection{Proofs of Lower Bounds}

In this Section we consider lower bounds on expected regret of the defenders, when the number of attacks is $C$, and will show that the expected regret of the defenders has a linear dependence on number of attacks. The following proposition states that conditional on the number of attacks being $C$, if the defenders use any MP-MAB algorithm then they must incur regret $\Omega({C})$. 
\begin{proposition}
Conditioned on the number of attacks being $C$, the expected regret of the defenders using any MP-MAB algorithm is at least $\Omega({C})$.
\end{proposition}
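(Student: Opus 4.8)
The plan is to exhibit a single oblivious attacker whose expected number of collisions with the defenders is at most $C$ yet which forces expected regret linear in $C$; here $N$, $K$, and $\Delta = \mu_{(N)}-\mu_{(N+1)}$ are treated as fixed constants of the instance, so that the bound $\Omega(NC/K)$ already established in Theorem~\ref{thm:lowerbound1} is exactly an $\Omega(C)$ bound. Concretely, I would have the attacker draw a single arm $k \sim \mathcal{U}(K)$ once at the start, pull $k$ during the first $C$ rounds, and then stay quiet (pull $\bot$) for the remainder. Because the attacker pulls an arm in only $C$ rounds, at most $C$ rounds can contain a collision between a defender and the attacker, so the attack cost is at most $C$ deterministically, which matches the conditioning in the statement.

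The heart of the argument is a per-round regret bound conditioned on the event $\mathcal{A}$ that the sampled arm $k$ belongs to the optimal set of the top-$N$ arms. Since $k$ is uniform and the optimal set has size $N$, we have $\mathbb{P}(\mathcal{A}) = N/K$. I would then show that, \emph{regardless of the defenders' algorithm}, occupying an optimal arm forces instantaneous regret at least $\Delta$ in each of the first $C$ rounds: the $N$ defenders can simultaneously extract collision-free reward from at most the remaining $N-1$ optimal arms, and the one remaining defender either collides with the attacker on $k$ (earning $0$ that round) or pulls an arm of mean at most $\mu_{(N+1)}$. In either sub-case the round's achievable reward is at most $\sum_{i=1}^{N}\mu_{(i)} - \mu_{(N)} + \mu_{(N+1)}$, giving a per-round regret of at least $\mu_{(N)}-\mu_{(N+1)} = \Delta$.

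Putting the pieces together, the expected regret is at least $\mathbb{P}(\mathcal{A}) \cdot C \cdot \Delta = \tfrac{N\Delta}{K}\,C$, while the attack cost is at most $C$; since $\tfrac{N\Delta}{K}$ depends only on the instance and not on $C$, this is exactly $\Omega(C)$. I expect the main obstacle to be the structural claim in the middle step, namely that blocking a single optimal arm necessarily costs at least $\Delta$ per round uniformly over all defender strategies and, in particular, whether or not a collision is actually realized. The delicate point is that the bound must hold even when the defenders cleverly \emph{avoid} the blocked arm (incurring no collision, and hence not spending the attacker's budget), since otherwise a guarantee stated purely in terms of $C$ could be evaded; treating this sub-case on the same footing as the collision sub-case is what makes the $\Omega(C)$ bound robust and is the step I would write most carefully.
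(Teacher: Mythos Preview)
Your proposal reproves Theorem~\ref{thm:lowerbound1} rather than the Proposition at hand. The two statements are logically distinct: Theorem~\ref{thm:lowerbound1} is an \emph{existential} claim (there is an attacker with cost at most $C$ forcing $\Omega(NC/K)$ regret), whereas the Proposition is a \emph{structural} claim that must hold for every attacker--defender pair simultaneously, asserting that whenever the realized attack cost equals $C$ the regret is already $\Omega(C)$. Constructing a specific oblivious attacker is therefore beside the point, and in fact your construction does not match the conditioning: in your scenario the realized number of defender--attacker collisions can be anywhere between $0$ and $C$, depending on whether the defenders avoid the sampled arm (precisely the case you yourself flagged as delicate). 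Showing that regret is $\Omega(C)$ when the actual cost may well be $0$ does not establish that regret is $\Omega(C')$ conditioned on the cost being exactly $C'$, and it says nothing about other attackers.

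The paper's proof is a one-line observation drawn directly from the \emph{definition} of the attack cost in Section~\ref{sec:prelim}: $C$ counts the rounds in which some defender actually collides with some attacker. In each such round at least one defender receives zero reward, so the defenders' combined expected reward in that round is at most $\sum_{i=1}^{N-1}\mu_{(i)}$, giving instantaneous regret at least $\mu_{(N)}$. Summing over the $C$ collision rounds yields regret at least $C\mu_{(N)} = \Omega(C)$. No attacker construction, no probabilistic event $\mathcal{A}$, and no avoidance case analysis is required; the statement falls out of the definition.
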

\begin{proof}
If there was a collision from some attacker at some round $t$, then the expected sum of rewards of the defenders that round is at most $\sum_{i=1}^{N-1}{\mu_{(i)}}$. Since there are exactly $C$ attacks, the regret accumulated during the rounds where attack occurred is at least \[{C}\cdot (\sum_{i=1}^{N}{\mu_{(i)}} - \sum_{i=1}^{N-1}{\mu_{(i)}}) = C\cdot{\mu_{(N)}} = \Omega\left({C}\right)\]

and so the total expected  regret is $\Omega({C})$.
\end{proof}

The next theorem provides a simple attack, against which the defenders incur expected regret $\Omega(\frac{NC}{K})$ by using any MP-MAB algorithm .

\begin{reptheorem}{thm:lowerbound1}
There exists an attacker with expected number of attacks at most $C$, for which any MP-MAB algorithm has expected regret $\Omega(\frac{NC}{K})$. 
\end{reptheorem}
\begin{proof}
The attacker samples an arm $k \sim \mathcal{U}(K)$, pulls $k$ for $C$ rounds, and pulls $\bot$ for the remaining rounds. Clearly the number of collisions any defender will face from the attacker is at most $C$. The sampled arm is one of the $N$ best arms with probability $\frac{N}{K}$. Under this event, during the first $C$ rounds any algorithm has per-round regret at least $\mu_{(N)}-\mu_{(N+1)}$. So, the expected regret over $T$ rounds under this attack is at least $C\cdot(\mu_{(N)}-\mu_{(N+1)})\cdot\frac{N}{K} = \Omega\left(\frac{NC}{K}\right)$.
\end{proof}

Clearly, if an attacker knew the optimal set of arms, then by pulling an optimal arm for $C$ rounds, the regret of defenders is forced to be $\Omega(C)$. 

\newpage
\section{Experiments}
\label{app:experiments}
For our experiments we implement the \textsc{resync,mc} and \textsc{sic-mmab} algorithm in the non-distinguishable collision sensing case and \textsc{resync2} and \textsc{cdj} in the distinguishable case. For each experimental setup and algorithm, with the considered regret values averaged over $20$ runs, and plotted the average and standard deviation of the resulting regret (the standard deviation is shown as error bands encompassing the average regret). Experiments for \textsc{resync2} and \textsc{cdj} appear in Appendix \ref{app:experiments}.

\begin{figure}[H]
\centering
    \includegraphics[width=0.4\textwidth]{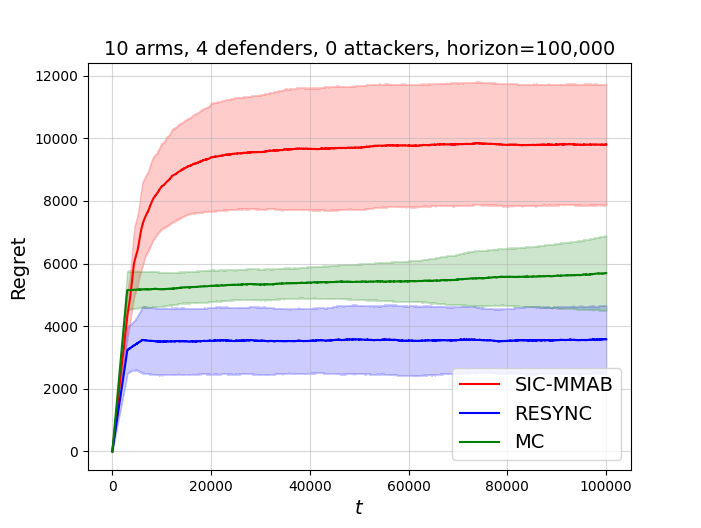} 
    \caption{Performance comparison of \textsc{resync}, \textsc{sic-mmab} and \textsc{mc} with no attackers. The figure shows the evolution of cumulative regret over time.}\label{fig:noattack}

\end{figure}
In all experiments the gap $\Delta = \mu_{(N)} - \mu_{(N+1)}$ is at least $0.05$. The experiments are run with Bernoulli distributions with means chosen uniformly at random from $[0,1]$ (with $\Delta \geq 0.05$). Further Assumption \ref{assm:cmn-knwlg} is applied to all algorithms, removing the necessity to estimate the number of defenders and their ranks online. For the \textsc{mc} and \textsc{resync} algorithms we set $T_0$ to be $3000$ in all experiments as suggested by \cite{MC}.  

We begin with a simple scenario comparing the performances of the algorithms under no attack. Figure \ref{fig:noattack} compares the evolution of cumulative regret, with the following problem parameters: $K = 10, N = 5, m = 0, T = 10^5$.
\begin{figure}[H]
    \centering
    \includegraphics[width=0.4\textwidth]{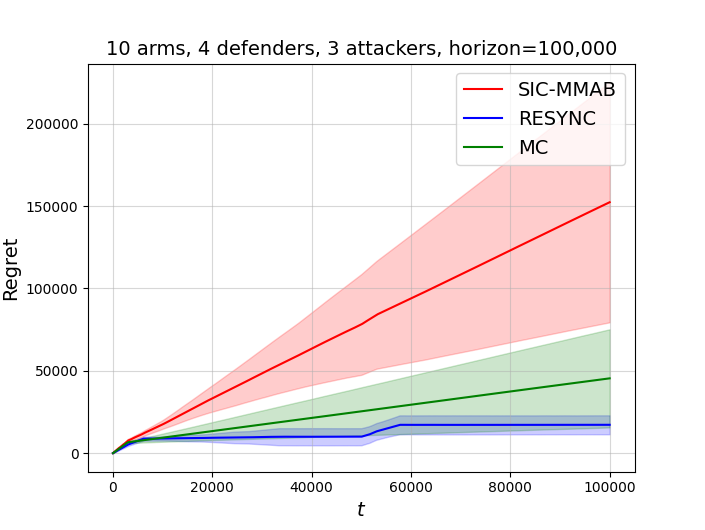} 
    \caption{Performance comparison of \textsc{resync}, \textsc{sic-mmab} and \textsc{mc} with 2 attackers. The figure shows the evolution of cumulative regret over time.}\label{fig:resyncattack}
\end{figure}
Second we compare the performances of the algorithms under attack. Figure \ref{fig:resyncattack} compares the evolution of cumulative regret, with the following problem parameters: $K = 10, N = 5, m = 2, T = 10^5$. The attackers are centralized and do the following, first they select $M$ distinct arms, then pull these arms for $T_0$ rounds. After $T_0$ rounds they stay silent and pull $\bot$. After round $5\times10^4$ they again, pull these arms for $T_0$ rounds, and stay silent afterwards.

As can be noted from Figure \ref{fig:resyncattack}, under this attack the expected regret scales linearly in $t$ for both \textsc{sic-mmab} and \textsc{mc} due to incorrect estimation of the optimal set of arms in \textsc{mc} and \textsc{sic-mmab}, and \textsc{sic-mmab} in particular suffers from desynchronization due to attacks occurring during communication rounds. After the first $6\times10^3$ rounds, all defenders running \textsc{resync} obtain sufficient observations for each arm and begin exploiting which lasts up-to round $5\times10^4$, causing no added regret during rounds $6\times10^3$ to $5\times10^4$. After this due to attack, the defenders restart and enter exploration which causes the linear increase in regret during the rounds $5\times10^4$ to $5.6\times10^4$ after which the defenders are able to synchronize and enter exploitation again until the end of the time horizon causing no added regret after round $5.6\times10^4$.\\

For the second set of experiments in the distinguishable collision sensing setting, we implement the \textsc{resync2} algorithm \textsc{cdj} algorithm for comparison. For each experimental setup and algorithm, with the considered regret values averaged over $20$ runs, and plotted the average and standard deviation of the resulting regret (the standard deviation is shown as error bands encompassing the average regret). Experiments for the distinguishable collision sensing setting comparing  appear in \ref{app:experiments}.
In all experiments the gap $\Delta = \mu_{(N)} - \mu_{(N+1)}$ is at least $0.05$. The experiments are run with Bernoulli distributions with means chosen uniformly at random from $[0,1]$ (with $\Delta \geq 0.05$). For the \textsc{cdj} algorithm we set $T_C = 5000$ and for \textsc{resync2} we set $T_0 = 5000$.  

We begin with a simple scenario comparing the performances of the algorithms under no attack. Figure \ref{fig:noattackcnjresync} compares the evolution of cumulative regret, with the following problem parameters: $K = 10, N = 5, m = 0, T = 10^5$.


\begin{figure}[H]

\centering
    \includegraphics[width=0.4\textwidth]{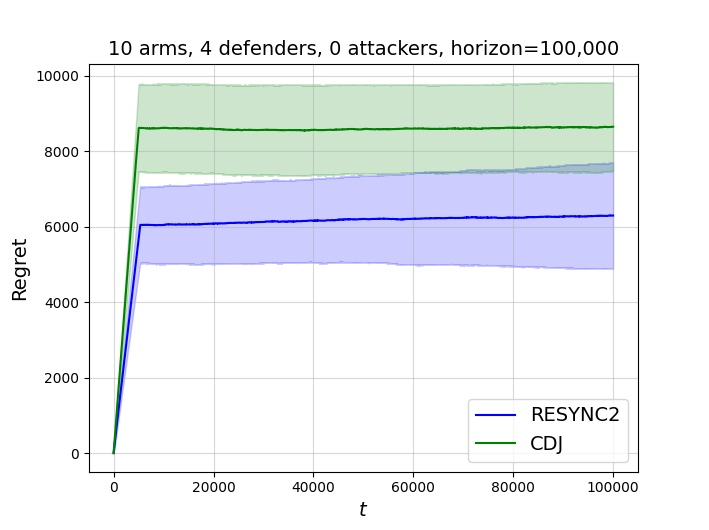} 
    \caption{Performance comparison of \textsc{resync2} and \textsc{cdj} with no attackers. The figure shows the evolution of cumulative regret over time.}\label{fig:noattackcnjresync}

\end{figure}
Second we compare the performances of the algorithms under attack. Figure \ref{fig:cnjresyncattack} compares the evolution of cumulative regret, with the following problem parameters: $K = 10, N = 5, m = 4, T = 10^5$. The attackers are decentralized and play arms uniformly at random for the first $5000$ rounds, and thereafter pull $\bot$ and remain silent.
\begin{figure}[H]
    \centering
    \includegraphics[width=0.4\textwidth]{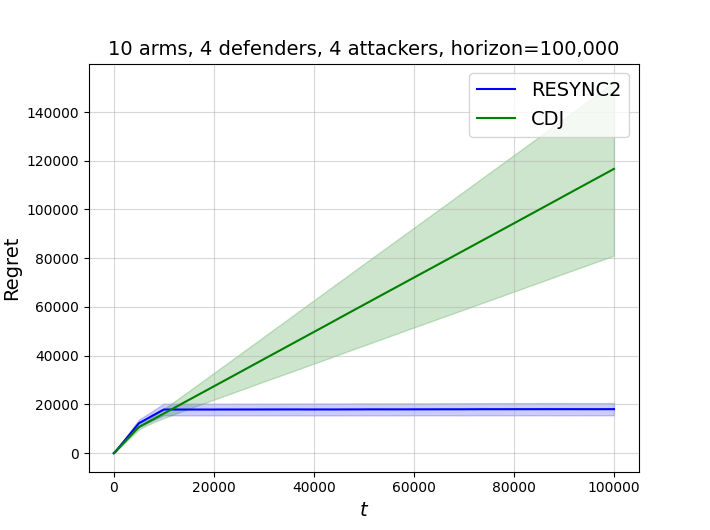} 
    \caption{Performance comparison of \textsc{resync2}, and \textsc{cdj} with 4 attackers. The figure shows the evolution of cumulative regret over time.}\label{fig:cnjresyncattack}
\end{figure}

As can be noted from Figure \ref{fig:cnjresyncattack}, under this attack the expected regret scales linearly in $t$ for \textsc{cnj} due to incorrect estimation of the number of defenders. In order for \textsc{cdj} to estimate the number of defenders correctly, the attackers must play a fixed jammer algorithm as specified in \cite{jammers}, where the attackers must pull $M$ orthogonally and uniformly at random. By mildly deviating from this strategy, \textsc{cdj} can no longer estimate the total number of defenders robustly. On the other hand, \textsc{resync2} incurs constant regret after round $10000$.

\section{Removing the Assumption that $\Delta$ is known}
\label{app:removedelta}
In this section we highlight how the techniques from \textsc{resync2} can be combined with \textsc{sic-mmab} in order to remove all prior knowledge except $T$ in the distinguishable collision sensing setting. The resulting algorithm will have the structure of \textsc{sic-mmab} along with the robust phases presented in \textsc{resync2}. The resulting algorithm will have an initialization phase along with exploration-communication phases. Each exploration phase will contain a sequential hopping and intra-communication phase. There will also be a separate communication phase outside the exploration phase used to communicate arm-statistics with other defenders. For the remainder of this section, we use $N_p, K_p$ to denote the active number of defenders in phase $p$ and active number of arms in phase $p$ respectively.\\
\textbf{Initialization.} The initialization phase will require the following substitutions of subroutines in \textsc{sic-mmab}. The Musical Chairs subroutine from \textsc{sic-mmab} must be substituted with the Orthogonalization subroutine from \textsc{resync2} and the Estimate\_M subroutine from \textsc{sic-mmab} must be substituted with the Estimate-Defenders subroutine from \textsc{resync2}. This must be done to ensure the initialization phase is robust to adversarial collisions.\\
\textbf{Communication.} Next as pointed out in Section \ref{sec:RESYNC2}, communication can be made robust (i.e. defenders can exchange bits in the presence of attackers) by relying on the collision indicator $\eta^{D}_k$ instead of $\eta_k$. Therefore the communication phase from \textsc{sic-mmab} can be preserved by substituting each occurrence of $\eta_k$ in the algorithm with $\eta^{D}_k$.\\
\textbf{Exploration.} Finally the exploration phase will need to inherit the intra-communication sub phase from \textsc{resync2} in order to ensure robust exploration. That is, the outcome of sequential hopping (whether one reliable observation was received for each arm during sequential hopping) must be communicated through forced collisions in the intra-communication phase. In the $p$-th exploration-communication phase, the exploration phase will progress in epochs of size $2K_p$, and until there are $2^p$ successful epochs. By successful epoch we mean that each defender recieves one reliable observation for each active arm in that epoch.  During the first $K_p$ rounds of an epoch the defenders sequentially hop the active set of arms and the next $K_p$ rounds of exploration is an intra-communication phase as in \textsc{resync2}. The following exploration phase must be substituted in \textsc{sic-mmab}. 
\begin{subroutine}[H]
\caption{Exploration Phase}
$\pi \gets j$-th active arm, epochNumber $\gets$ 0\\ 
\While{epochNumber $\leq 2^p$}{
\For{$K_p$ rounds}{
Restart $\gets$ False\\
$\pi \gets \pi + 1$ (mod $[K_p]$)\\
Pull $\pi$ and receive $\eta^{D}_\pi , \eta^{A}_\pi, r_\pi(t)$\\
$\eta_\pi \gets \eta^{D}_\pi \lor \eta^{A}_\pi$\\
\eIf{$\eta_\pi = 0$}{
$s[\pi] \gets s[\pi] + r_\pi(t)$
}{
Restart $\gets$ True}
}

\For{$K_p$ rounds}{
\eIf{Restart}{
Pull $K_p[1]$
}{
$\pi \gets \pi + 1$ (mod $[K_p]$)\\
Pull $\pi$ and receive $\eta^{D}_\pi$\\
\If{$\eta^{D}_\pi = 1$}{
Restart $\gets$ True
}
}
}

\If{not Restart}{
epochNumber $\gets$ epochNumber $ + 1$
}
}
\end{subroutine}

Finally the statistics are updated according to the description in \textsc{sic-mmab}. The technical innovations required to save \textsc{sic-mmab} from attackers are to incorporate our robust initialization, exploration and communication phases that we have described here which were inherited from our algorithm \textsc{resync2}. For the expected regret an additional factor of $O(CK)$ is incurred to the regret bound in Theorem 1 from \cite{sicmmab} due to the robust exploration phase detailed above (for details of the bound $O(CK)$ refer Proof of Theorem \ref{thm:RESYNC2}). The final bound on expected regret of the resulting algorithm is 
$O\left(CK + \sum_{k > N}{\frac{\log{T}}{\mu_{(N)} - \mu_{(k)}}} + MK\log{T}  \right)$ where $C$ is the number of collisions from attackers. For contrast, the expected regret of \textsc{sic-mmab} in the presence of no attackers is 
$O\left(\sum_{k > N}{\frac{\log{T}}{\mu_{(N)} - \mu_{(k)}}} + MK\log{T}  \right)$.

\end{document}